\newcommand{\mypara}[1]{\noindent\textbf{#1}}
\newtheorem{theorem}{Theorem}[section]
\newtheorem{corollary}[theorem]{Corollary}
\newtheorem{lemma}[theorem]{Lemma}
\newtheorem{proposition}[theorem]{Proposition}
\newtheorem*{proposition*}{Proposition}
\newtheorem*{theorem*}{Theorem}
\newtheorem*{conjecture*}{Conjecture}
\newtheorem{claim}[theorem]{Claim}
\theoremstyle{definition}
\newtheorem{definition}[theorem]{Definition}
\newtheorem{proposition-definition}[theorem]{Proposition-Definition}
\newtheorem{remark}[theorem]{Remark}
\newtheorem{example}[theorem]{Example}
\newtheorem{assumption}{Assumption}
\providecommand{\customgenericname}{}
\newcommand{\newcustomtheorem}[2]{%
  \newenvironment{#1}[1]
  {%
   \renewcommand\customgenericname{#2}%
   \renewcommand\theinnercustomgeneric{##1}%
   \innercustomgeneric
  }
  {\endinnercustomgeneric}
}
\newlength{\widebarargwidth}
\newlength{\widebarargheight}
\newlength{\widebarargdepth}
\newcommand{\Prob}[1]{\mathbb{P}\left[#1\right]}
\newcommand{\real}{\ensuremath{\mathbb{R}}}
\lstdefinelanguage{Magma}%
  {%
   otherkeywords={:=,+:=,-:=,*:=},%
   procnamekeys={function,func,intrinsic,procedure,proc},%
   morekeywords={true},%
   morekeywords=[2]{adj,and,cat,cmpeq,cmpne,diff,div,eq,ge,gt,in,is,join,le,lt,%
          meet,mod,ne,notadj,notin,notsubset,or,sdiff,subset,xor},%
   morekeywords=[3]{assigned,break,by,case,catch,continue,declare,default,%
          delete,do,elif,else,end,eval,exists,exit,for,forall,fprintf,if,local,%
          not,print,printf,quit,random,read,readi,repeat,restore,save,select,%
          then,time,to,try,until,vprint,vprintf,vtime,when,where,while},%
   morekeywords=[4]{clear,forward,freeze,iload,import,load},%
   morekeywords=[5]{assert,assert2,assert3,error,require,requirege,requirerange},%
   morekeywords=[6]{car,comp,cop,elt,ext,frac,hom,ideal,iso,lideal,loc,map,%
          ncl,pmap,quo,rec,recformat,rep,rideal,sub},%
   morekeywords=[7]{AbelianGroup,AdditiveCode,AffineAlgebra,Algebra,%
          AssociativeAlgebra,Character,CliffordAlgebra,Design,Digraph,%
          ExtensionField,FPAlgebra,FiniteAffinePlane,FiniteProjectivePlane,%
          Graph,Group,GroupAlgebra,IncidenceStructure,LieAlgebra,LinearCode,%
          LinearSpace,MatrixAlgebra,MatrixGroup,MatrixRing,Monoid,%
          MultiDigraph,MultiGraph,NearLinearSpace,Network,PartialMap,%
          PermutationGroup,PolycyclicGroup,QuaternionAlgebra,Semigroup,%
          ZModule,%
          Rationals,Scheme,GroebnerBasis,Dimension,RationalPoints,HasPointsOverExtension,Degree},%
   morekeywords={[8]function,func,intrinsic,procedure,proc,return},%
      sensitive,%
      morecomment=[l]//,%
      morecomment=[s]{/*}{*/},%
      morecomment=[s]{\{}{\}},%
      morestring=[b]",%
    moredelim=**[is][\color{codegreen}]{@}{@},
  }[keywords,procnames,comments,strings]%
\definecolor{codegreen}{rgb}{0,0.6,0}
\definecolor{codegray}{rgb}{0.5,0.5,0.5}
\definecolor{codepurple}{rgb}{0.58,0,0.82}
\definecolor{backcolour}{rgb}{0.95,0.95,0.92}
\lstdefinestyle{mystyle}{
  backgroundcolor=\color{backcolour}, commentstyle=\color{codegreen},
  keywordstyle=\color{magenta},
  numberstyle=\tiny\color{codegray},
  stringstyle=\color{codepurple},
  basicstyle=\ttfamily\footnotesize,
  breakatwhitespace=false,         
  breaklines=true,                 
  captionpos=b,                    
  keepspaces=true,                 
  numbers=left,                    
  numbersep=5pt,                  
  showspaces=false,                
  showstringspaces=false,
  showtabs=false,                  
  tabsize=2
}
\title{On the Identifiability of Mixtures of Ranking Models}
\author{
Xiaomin Zhang \\
Department of Computer Sciences \\
University of Wisconsin-Madison \\
\texttt{xzhang682@wisc.edu}
\And
Xucheng Zhang \\
Fakult\"{a}t f\"{u}r Mathematik \\
Universit\"{a}t Duisburg-Essen \\
\texttt{xucheng.zhang@stud.uni-due.de} 
\And
Po-Ling Loh \\
Department of Pure Mathematics and Mathematical Statistics \\ 
University of Cambridge \\
\texttt{pll28@cam.ac.uk}
\And
Yingyu Liang \\
Department of Computer Sciences \\
University of Wisconsin-Madison \\
\texttt{yliang@cs.wisc.edu} }
\begin{document}

\maketitle

\begin{abstract}
Mixtures of ranking models are standard tools for ranking problems. However, even the fundamental question of parameter identifiability is not fully understood: the identifiability of a mixture model with two Bradley-Terry-Luce (BTL) components has remained open. In this work, we show that popular mixtures of ranking models with two components (BTL, multinomial logistic models with slates of size 3, or Plackett-Luce) are generically identifiable, i.e., the ground-truth parameters can be identified except when they are from a pathological subset of measure zero. We provide a framework for verifying the number of solutions in a general family of polynomial systems using algebraic geometry, and apply it to these mixtures of ranking models to establish generic identifiability. The framework can be applied more broadly to other learning models and may be of independent interest.
\end{abstract}

\section{Introduction}\label{sec:intro}

Ranking is an important topic in machine learning, where the goal is to rank collections of items based on votes involving smaller subsets of items. Three categories are commonly studied by researchers: pointwise ranking \cite{balabanovic1997fab, resnick1994grouplens, yu2009fast}, pairwise ranking \cite{jamieson2011active,tsukida2011analyze,wauthier2013efficient,chen2013pairwise,chen2015spectral}, and listwise ranking \cite{pendergrass1959ranking, khetan2016data, zhao2018composite}. Pointwise ranking collects data based on user ratings; pairwise ranking is based on comparisons between pairs of items; and listwise ranking is based on ordered lists of items. The latter two are preferable due to  better user convenience and improved consistency. Although pointwise rating data are popular in areas such as advertising, recommendation systems, and player rankings~\cite{wauthier2013efficient}, ratings can be inconsistent even when user preferences are consistent: For example, User $A$ might rate movies $i, j$, and $k$ as 1, 4, and 7 out of $10$, respectively, while User $B$ might rate the same movies as 7, 8, and 9.  

In practice, ranking data are typically noisy and may possess inconsistencies. For example, the data may include votes---even from the same user---which involve ``cycles,'' such as $i_1 \succ i_2$ (the notation $\succ$ means the $i_1^{\text{th}}$ item is preferred to the $i_2^{\text{th}}$ item), $i_2 \succ i_3$, and $i_3 \succ i_1$. Various probabilistic ranking models incorporate such inconsistencies as statistical noise. For example, pairwise ranking models may assume a parametric model $\mathbb{P}_{\theta}(i,j)$, which specifies the probability of obtaining a user vote $i \succ j$ for the pair of items $(i,j)$.

Most studies focus on inferring a single parametric model shared by all users, although a \emph{mixture} of ranking models may better fit the ranking data. Imagine a case where users rate movies: Adults may have different preferences from children, and women may have different preferences from men. Therefore, a collection of orderings might better explain user preferences. Single ranking models may cater to a large portion of users or describe an accurate ordering for a large portion of objects, but low accuracy in predicting new votes may hint that multiple user types are actually present in the population. In such cases, using a mixture model to fit the data can lead to higher prediction accuracy.

While mixtures of ranking models are popular tools, even the fundamental question of identifiability is not fully understood for some models such as Bradley-Terry-Luce~\cite{bradley1952rank, luce1959}. Here, identifiability means whether the parameter can be determined by the specified probabilities (e.g., determining $\theta$ from the $\mathbb{P}_{\theta}(i,j)$'s), even ignoring the statistical and optimization challenges. (See Section~\ref{sec:background} for the precise definition of identifiability.) Our work focuses on this identifiability problem. 
 
\subsection{Our results and contributions}
We affirm the identifiability of three popular mixtures of ranking models with two components in all but a subset of measure zero, which we call ``generic identifiability" (cf.\ Section~\ref{SecGenIdent} below). These include \textbf{B}radley-\textbf{T}erry-\textbf{L}uce (\textbf{BTL}) models, \textbf{m}ulti\textbf{n}omial \textbf{l}ogistic (\textbf{MNL}) models with slates of size 3, and \textbf{P}lackett-\textbf{L}uce (\textbf{PL}) models. In particular, we address the aforementioned open question for mixtures of BTL models:

\begin{theorem*}[Informal]
A mixture of BTL models with two components ranking at least five items is generically identifiable (up to reordering). Similarly, generic identifiability holds for a mixture of 3-slate MNL models with two components ranking at least four items, and for a mixture of PL models with two components ranking at least four items.
\end{theorem*}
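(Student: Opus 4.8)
The plan is to recast identifiability as a statement about the fibers of the parametrization map and then to certify, using algebraic geometry, that the generic fiber consists of exactly one orbit under the trivial symmetry group. First I would make the symmetries explicit. For a two-component BTL mixture on $n$ items, the parameters are the mixing weight together with two weight vectors, and the only reparametrizations leaving every pairwise probability unchanged are rescaling each component's weights and swapping the two components (together with $\pi \leftrightarrow 1-\pi$). After fixing a scale normalization these generate a finite group $G$ of known order, and ``generic identifiability up to reordering'' means precisely that the parametrization map $\Phi$ sending the normalized parameters to the vector of observable probabilities has generic fibers equal to single $G$-orbits.

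Next I would record the dimension count that forces the thresholds in the statement. The normalized parameter space has dimension $2n-1$ (two weight vectors modulo scale, plus the mixing weight), while for BTL the number of observed pairwise probabilities is $\binom{n}{2}$, which exceeds $2n-1$ exactly when $n \ge 5$; the analogous counts for $3$-slate MNL and for PL yield the threshold $n \ge 4$. This confirms the numbers in the theorem are the right cutoffs and shows that generic finiteness of the fibers is at least possible, which is a necessary condition for identifiability.

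The algebraic-geometry framework then enters as follows. I would clear denominators to make $\Phi$ a polynomial map and study its fibers. By generic smoothness in characteristic zero there is a dense open set $U$ over which $\Phi$ is finite and \'etale, and every fiber over $U$ has the same cardinality $d = \deg \Phi$; since $G$ acts freely on the generic fiber we automatically have $d \ge |G|$. To prove $d = |G|$ it therefore suffices to exhibit a \emph{single} parameter value $\theta_0$ whose fiber $\Phi^{-1}(\Phi(\theta_0))$ is (a) zero-dimensional, (b) equal as a set to the $G$-orbit of $\theta_0$, and (c) reduced, meaning the Jacobian $D\Phi$ has full column rank $2n-1$ at each of its points. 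Condition (c) certifies that $\Phi(\theta_0)$ lies in the \'etale locus $U$, so the fiber cardinality at $\theta_0$ equals the generic degree $d$; combined with (b) this forces $d = |G|$, and hence the generic fiber is a single orbit. This reduction of a generic statement to one certified fiber is the part I expect to be reusable for other models.

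The main obstacle is carrying out the certification in step three: after clearing denominators the polynomial systems are large already for $n=5$ BTL, so computing the full fiber and checking that it is exactly the symmetry orbit is a heavy symbolic computation. To make it tractable and still rigorous I would perform the elimination over a suitable finite field $\mathbb{F}_p$ rather than over $\mathbb{Q}$: for all but finitely many primes the reduction preserves the generic solution count, so a finite-field Gr\"obner-basis computation returning exactly $|G|$ solutions with nonvanishing Jacobian determinant certifies $d = |G|$ in characteristic zero. Care is needed to discard the spurious solutions introduced by clearing denominators, namely those on the coordinate hyperplanes where a weight or a denominator vanishes, and to confirm that the chosen prime is not one of the finitely many bad ones, which I would do by verifying that the relevant Jacobian minors and leading coefficients are units modulo $p$.
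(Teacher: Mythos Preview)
Your high-level strategy---reduce generic identifiability to certifying a single carefully chosen fiber---is the same as the paper's. The gap is in your certification criterion, and it is precisely the obstacle the paper's framework is designed to overcome.

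You claim that conditions (a)--(c) on the fiber $\Phi^{-1}(\Phi(\theta_0))$ place $\Phi(\theta_0)$ in the open locus $U$ over which $\Phi$ is finite \'etale of degree $d$. Condition (c) (full-rank Jacobian at every fiber point) guarantees that $\Phi$ is unramified at each of those $|G|$ points, hence a local analytic isomorphism onto the image near each. But it says nothing about properness: when you perturb $\Phi(\theta_0)$ to a nearby $y$, the fiber $\Phi^{-1}(y)$ will contain one point near each of your $|G|$ certified points, \emph{and it may also contain additional preimages that were at infinity over $\Phi(\theta_0)$ and drift into the affine chart as $y$ moves}. Concretely, take $\Phi:\mathbb{C}^*\to\mathbb{C}$, $\Phi(x)=(x-1)(x-2)$, and $\theta_0=3$. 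Then $\Phi(\theta_0)=2$, the equation $(x-1)(x-2)=2$ has roots $\{0,3\}$, so $\Phi^{-1}(2)\cap\mathbb{C}^*=\{3\}$, and $\Phi'(3)=3\ne 0$; your conditions (a)--(c) hold with $|G|=1$. Yet the generic degree is $2$: for generic $a$ the fiber over $\Phi(a)$ is $\{a,\,3-a\}\subseteq\mathbb{C}^*$. The paper's own example $(1-tx)x=0$ makes the same point in family form: at $t=0$ the affine fiber has a single reduced point, generically two.

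The paper closes exactly this gap. It passes to the relative projective closure of the family and applies semi-continuity of Hilbert polynomials, which is valid because the closure is proper over the base. The remaining subtlety is that the projective closure of the fiber over $\theta_0$ need not coincide with the fiber of the projective closure (the latter can acquire extra points at infinity). The paper shows they coincide whenever $\theta_0$ avoids an explicit Zariski-closed set $Z(\bm{t})$, namely the vanishing locus of the nonconstant coefficients appearing in a Gr\"obner basis of the ideal with respect to a block order separating variables from parameters. So the correct certification is: compute that Gr\"obner basis, verify $\theta_0\notin Z(\bm{t})$, and then count solutions at $\theta_0$ (the paper does this over $\mathbb{Q}$ with Magma; your finite-field shortcut is a separate idea that would require its own good-reduction argument). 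Replace your Jacobian test by this $Z(\bm{t})$ check and the rest of your outline---including the induction on $n$ beyond the base case via a linear system in the new coordinates---matches the paper.
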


The proof uses a general framework developed via algebraic geometry:
\begin{theorem*}[Informal]
A polynomial system with variables $\bm{x}$ and parameters $\bm{a}$ has exactly $\ell$ solutions in $\bm{x}$ for all $\bm{a}$ in the parameter domain, except a set of Lebesgue measure zero, assuming:
\begin{customassump}{1}
{\it The polynomial system has at least $\ell$ solutions for almost all $\bm{a}$ in the domain set.}
\end{customassump}
\begin{customassump}{2}
{\it There exists some $\bm{a'}$ in the domain such that $\bm{a'}$ does not nullify any coefficients in the Gr\"obner basis of the polynomial system, and the polynomial system with parameters $\bm{a}'$ has exactly $\ell$ solutions.}
\end{customassump}
\end{theorem*}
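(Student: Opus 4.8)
The plan is to reduce the statement to the theory of zero-dimensional polynomial ideals and the behavior of Gr\"obner bases under specialization of the parameters $\bm{a}$. The guiding fact is that for a zero-dimensional ideal $I \subseteq \mathbb{C}[\bm{x}]$, the number of solutions counted with multiplicity equals $\dim_{\mathbb{C}} \mathbb{C}[\bm{x}]/I$, which is the number of \emph{standard monomials} — those lying outside the leading-term ideal of any Gr\"obner basis of $I$. First I would treat the parameters as indeterminates and compute the reduced Gr\"obner basis $G$ of the system over the rational function field $K = \mathbb{C}(\bm{a})$ with respect to a fixed monomial order; let $D$ denote its number of standard monomials, which is finite (so the generic ideal is zero-dimensional). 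After clearing denominators I collect into a single polynomial $h(\bm{a})$ all the data whose vanishing could destroy this combinatorial structure: the denominators and leading coefficients of the elements of $G$, together with the denominators of the cofactors expressing the original generators in terms of $G$ and conversely.

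Next I would establish the stability step: for every specialization $\bm{a}_0$ in the domain with $h(\bm{a}_0) \neq 0$, the specialized set $G(\bm{a}_0)$ is a Gr\"obner basis of the specialized ideal $I_{\bm{a}_0}$ with the same leading monomials. Both inclusions certifying $\langle G(\bm{a}_0)\rangle = I_{\bm{a}_0}$ follow by specializing the denominator-cleared cofactor identities, valid wherever $h$ does not vanish; and $G(\bm{a}_0)$ remains a Gr\"obner basis because the Buchberger $S$-polynomial reductions that certify $G$ over $K$ involve only divisions by leading coefficients that stay nonzero, so they specialize verbatim. Consequently $\dim_{\mathbb{C}} \mathbb{C}[\bm{x}]/I_{\bm{a}_0} = D$, i.e.\ $I_{\bm{a}_0}$ has exactly $D$ solutions counted with multiplicity for every such $\bm{a}_0$.

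With stability in hand, the two assumptions close the argument. Assumption 2 supplies a point $\bm{a}'$ in the domain at which no coefficient of $G$ is nullified, so $h(\bm{a}') \neq 0$; hence $h \not\equiv 0$ as a polynomial, its zero set $\{h = 0\}$ is a proper algebraic subset of the full-dimensional parameter domain, and therefore has Lebesgue measure zero. At the good point $\bm{a}'$ the number of standard monomials equals $D$, and Assumption 2 certifies this count to be exactly $\ell$, so $D = \ell$. For every $\bm{a}$ outside the measure-zero set $\{h = 0\}$ the number of \emph{distinct} solutions satisfies $d(\bm{a}) \le D = \ell$, since distinct solutions never exceed solutions counted with multiplicity. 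On the other hand, Assumption 1 gives $d(\bm{a}) \ge \ell$ for almost every $\bm{a}$. Intersecting these two full-measure sets yields $d(\bm{a}) = \ell$ for almost every $\bm{a}$, which is the desired conclusion; as a byproduct, all solutions are simple off the exceptional set.

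The main obstacle is the stability step. The naive hope that ``specializing a generic Gr\"obner basis yields a Gr\"obner basis of the specialized ideal'' is false in general — this is precisely why comprehensive Gr\"obner bases were introduced — so the work lies in pinning down the exact finite list of polynomials in $\bm{a}$ that must be kept nonzero, which is what the phrase ``does not nullify any coefficient in the Gr\"obner basis'' in Assumption 2 encodes, guaranteeing both the ideal equality and the preservation of leading terms under specialization. A secondary point is ensuring zero-dimensionality uniformly, so that the number of solutions is finite and meaningful off the exceptional set; this follows once the generic Gr\"obner basis is seen to have finitely many standard monomials.
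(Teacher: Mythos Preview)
Your approach is correct but takes a genuinely different route from the paper. Both proofs compute a Gr\"obner basis and isolate a Zariski-closed bad parameter set outside which specialization behaves well, but from there they diverge. The paper works with the Gr\"obner basis in $\mathbb{C}[\bm{t},\bm{x}]$ under a block order $\bm{x}\succ\bm{t}$, passes to the relative projective closure of the family over $\mathbb{A}^m_{\mathbb{C}}$, and invokes semi-continuity of Hilbert polynomials (the flattening stratification) to show that the locus where the Hilbert polynomial takes its minimal value is Zariski open dense; Assumption~1 enters precisely to certify that this minimum equals $\ell$. You instead compute the reduced Gr\"obner basis over the function field $K=\mathbb{C}(\bm{a})$ and count standard monomials: once specialization preserves the leading-term ideal off $\{h=0\}$, the quotient dimension $\dim_{\mathbb{C}}\mathbb{C}[\bm{x}]/I_{\bm{a}_0}$ is pinned to the constant generic value $D$ with no further machinery. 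This is more elementary---no projective closures, no Hilbert polynomials---and in fact your stability step together with Assumption~2 already delivers ``exactly $\ell$ solutions counted with multiplicity'' on the full-measure set $\{h\neq 0\}$; you invoke Assumption~1 only for the bonus conclusion that the solutions are distinct and hence simple, whereas the paper's proof as written uses it essentially. The cost of your route is the bookkeeping you flag at the end: one must track exactly which denominators and leading coefficients go into $h$ so that both the ideal equality $\langle G(\bm{a}_0)\rangle=I_{\bm{a}_0}$ and the Gr\"obner-basis property survive specialization. The paper's cruder $Z(\bm{t})$---the vanishing locus of \emph{every} nonzero coefficient of the block-order basis---sidesteps that bookkeeping at the price of a possibly larger exceptional set.
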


In other words, the claim that mixtures of ranking models are identifiable can be deduced by showing that an appropriate polynomial system has a unique solution. Using this general theorem, it suffices to check the two aforementioned assumptions: Assumption 1 is naturally satisfied when realizability of the model is assumed for the data. Assumption 2 is only about solutions at one ``not-too-bad'' parameter, which can be readily verified by proposing a concrete instance manually or using mathematical software. For example, in our analysis for mixtures of ranking models, 
the software Magma~\cite{MR1484478} is used to check Assumption 2. 
More broadly, the theorem gives conditions under which generic identifiability holds for machine learning problems whose models can be transformed into polynomial systems.

In summary, our work has two main contributions:

\begin{enumerate}

\item
We propose a general framework to check the generic identifiability of multiple ranking models. 
Previous work typically focused on one specific model at a time and used specific analysis tools. In contrast, our technique is fairly general, based on a theorem about the number of solutions in a family of polynomial equations using algebraic geometry. 

\item  
We prove generic identifiability of mixtures of ranking models with two components. 
We apply the general framework described above to different ranking models, including BTL models (pairwise comparisons), MNL models with 3-slates (triplet comparisons), and PL models (list orderings). Notably, we prove the generic identifiability of mixtures of BTL models, which had remained an open question despite long-standing interest. Our unified framework also provides alternative proofs for MNL models with 3-slate and PL models, which had been considered in previous work.
\end{enumerate}


\subsection{Literature review and comparisons}\label{Literature-review}
 

\paragraph{Mixture of ranking models.}
Mixtures of Gaussians and mixtures of linear regressions are well-studied~\cite{mclachlan1988mixture,reynolds2009gaussian,yi2014alternating,li2018learning}. 
A popular family of ranking models is the multinomial logistic (MNL) model, also known as the mixed logit model, which includes BTL models (equivalent to MNL models with 2-slate) and MNL models with 3-slate studied in our work. There is a rich literature on mixtures of MNL models: Train~\cite{train2009discrete} introduced the mixed logit model as a method for discrete choices; Ge~\cite{ge2008bayesian} discussed simulation approaches for mixed logit models using the Metropolis-Hastings algorithm; and Arora et al.~\cite{arora2013practical} studied applications of mixed logit models to topic modeling. 

We focus on the identifiability problem of mixtures of ranking models. Our results assume no detailed user type information for votes, unlike previous authors, who assumed access to extra information at the user level. For example, Wu et al.~\cite{wu2015clustering} studied mixtures of BTL models when pairwise comparisons for all pairs are collected from each user. 
Oh and Shah~\cite{oh2014learning} studied mixtures of MNL models, but demanded more information at the user  level, such as collecting a few comparisons that are known to be from the same type of user.
Chierichetti et al.~\cite{chierichetti2018learning} analyzed the identifiability of uniform mixtures of MNL models with both pairwise and triplet comparisons (i.e., 2-slate and 3-slate), while Tang~\cite{tang2020learning} extended the results to mixtures with general mixing probabilities. In contrast, our study of mixtures of BTL models only has pairwise comparisons, while our study of mixtures of MNL models with 3-slate only has triplet comparisons, in which case it is more challenging to affirm identifiability.

Besides mixtures of BTL and MNL models, other mixtures have also been studied. Sturmfels and Welker~\cite{MR2921622} and Zhao et al.~\cite{zhao2016learning} addressed the identifiability of mixtures of Plackett-Luce models. Innario~\cite{iannario2010identifiability} showed the identifiability of mixtures of shifted binomial and uniform discrete models. Awasthi et al.~\cite{awasthi2014learning} proved the uniqueness of Mallows mixture models of two components; Liu and Moitra~\cite{liu2018efficiently} extended the results to any constant number of components; and Chierichetti et al.~\cite{chierichetti2015learning} considered arbitrary numbers of components, but with conditions such as separation between the components. Lu and Boutilier~\cite{lu2014effective} and Mao and Wu~\cite{chengandwu} also studied mixtures of Mallows models, but used groups of pairwise comparisons from the same user.

Finally, we emphasize that our result proving the generic identifiability for mixtures of BTL models is one of our main contributions, since this problem has remained largely open despite the popularity of the model. Indeed, the techniques proposed in the aforementioned studies generally use extra information not available in mixtures of BTL models. (However, besides identifiability, other authors typically also consider the learning problem of whether the parameters can be identified in polynomial time with polynomial samples.) Below, we provide additional details about why earlier techniques are inapplicable to the identifiability of mixtures of BTL models:
  
\begin{enumerate}

\item
Our setting involving mixtures of BTL models has access to less information than in Chierichetti et al.~\cite{chierichetti2018learning} or Tang~\cite{tang2020learning}. 
Although the BTL model is equivalent to an MNL model with 2-slate, Chierichetti et al.~\cite{chierichetti2018learning} and Tang~\cite{tang2020learning} assume access to both 2-slates and 3-slates. (With both 2-slates and 3-slates, our general framework can also establish generic identifiability; see Section~\ref{apn:mnl23}.)

From a technical standpoint, the proof techniques used by previous authors \cite{chierichetti2018learning,tang2020learning} transforms the identifiability problem into finding common roots of univariate quartic equations. Manipulating these equations allows one to apply linear algebraic arguments (to be precise, the resultant of two polynomials) to solve the equations. However, the limited equations in the BTL model do not give rise to the same linearity. 

\item Oh and Shah~\cite{oh2014learning}, in MNL models, assume access to multiple comparisons known to be from the same mixture component, and require additional conditions for successful identification. As pointed out by Chierichetti et al.~\cite{chierichetti2018learning} (page 3, first paragraph in the right column), the data format or ``information oracle" used by previous authors \cite{oh2014learning,zhao2016learning,liu2018efficiently} is stronger than the one usually assumed in BTL models. Specifically, for each observation, Shah and Oh~\cite{oh2014learning} first sample one mixture component and then sample multiple pairwise comparisons from the same component. Multiple comparisons are needed in each observation to build the tensors $M_2$ and $M_3$ in equations (4) \& (5) of their algorithm, even with infinite data. This is not available in the BTL model. Furthermore, conditions C1-C3 are needed for successful parameter identification, so their results do not affirm the generic identifiability of MNL models (including BTL models).

\item Zhao et al.~\cite{zhao2016learning} study mixtures of PL models, while other authors \cite{awasthi2014learning,chierichetti2015learning,liu2018efficiently,lu2014effective,chengandwu} study mixtures of Mallows models. These models involve list orderings, and thus also have stronger information oracles than BTL models. 
Specifically, a list ordering is given in the form of a permutation $(i_1, i_2, \dots, i_n)$, meaning $(i_1 \succ i_2) \wedge (i_2 \succ i_3) \wedge \cdots \wedge (i_{n-1} \succ i_n)$. Note that all these $n-1$ pairwise preferences come from the same user (and thus the same mixture component). In BTL models, however, two pairwise comparisons $i_1 \succ i_2$ and $i_3 \succ i_4$ may not even be from the same component. 

Technically, the analysis in Zhao et al.~\cite{zhao2016learning} partitions the items into three groups and constructs a tensor as the sum of $k$ rank-one tensors for each group. A key assumption is access to at least three pairwise comparisons from the same user type. However, BTL models only have access to independent pairwise comparisons, so to use the tensor decomposition technique, we would need access to more information about the relationship between comparisons. Other approaches \cite{awasthi2014learning,chierichetti2015learning,liu2018efficiently} also require list-ordered data. Lu and Boutilier~\cite{lu2014effective} and Mao and Wu~\cite{chengandwu} consider mixtures of Mallows models using pairwise comparisons; however, their techniques rely on groups of pairwise comparisons from the same mixture component, which are not available in mixtures of BTL models. 


\end{enumerate}
Generally speaking, stronger information oracles used in the aforementioned literature allow one to employ certain linearization procedures to turn the identifiability problem into a linear algebraic problem (e.g., tensor decomposition or resultants). Such linearization may not always be expected in general (in particular, for mixtures of BTL models). New tools such as algebraic geometry are thus required.  

\paragraph{Algebraic geometry for identifiability.}
Tools from algebraic geometry have already been deployed in machine learning. In particular, mixture models and their identifiability have been studied algebraically \cite{MR2921622,MR3300329,MR3873537,MR4269649}, forming a vibrant topic in the field of algebraic statistics \cite{MR2723140,MR3838364}. 
This literature has generally focused on geometric descriptions (and algebraic characterizations) of specific models in the language of algebraic geometry (and commutative algebra)---achieving identifiability as a by-product. In particular, the geometric description presented in Sturmfels and Welker~\cite[Theorem 7.1]{MR2921622} implies, among other things, that the PL model is generically identifiability, and also provides a way to read off the non-identifiable locus explicitly. 


While the entire geometric information given in the above literature furnishes identifiability results, it applies only to specific models.  
In this study, we provide a different and more direct approach to obtain identifiability of more general models. Technically, we use the semi-continuity of Hilbert polynomials in a family of projective varieties, which is a standard result from algebraic geometry and also a natural candidate for proving identifiability. However, these tools cannot be applied directly, and some technical novelty is required. 
This is because the identifiability problem only concerns solutions away from infinity. In terms of algebraic geometry, this means only solutions in the \textit{affine} space (which does not include infinity) are counted, while the semi-continuity tool counts solutions in the \textit{projective} space (which includes infinity) and can lead to incorrect conclusions for the affine case. For example, the polynomial $(1-tx)x=0$, viewed as a family of polynomials in $x$ parametrized by $t$, has a unique solution at $t=0$, but this does not hold even for any other $t$. To fill this gap, we note that the choice of such ``bad'' parameters is rare and can also be avoided by computing the Gr\"obner basis of the polynomial system in question. In the $(1-tx)x=0$ example, any parameter $t \neq 0$ is not ``bad'' and yields exactly two solutions, which does hold generically. As far as we know, this is a novel technical contribution.


 


\section{Background}\label{sec:background}

Here, we formally define generic identifiability and review the mixtures of BTL ranking model. We review the mixtures of MNL and mixtures of PL ranking models in Appendix~\ref{apn:background}.

\paragraph{Notation.} We write $\lambda_m^{\mathbb{C}}$ to denote the Lebesgue measure on $\mathbb{C}^m$, i.e., the measure induced by the standard Euclidean metric on $\mathbb{C}^m$ given by
$\|z\|:=\left(\sum_{i=1}^m |z_i|^2\right)^{1/2}$, for any $z \in \mathbb{C}^m$,
and write $\lambda_m$ to represent the Lebesgue measure on $\mathbb{R}^m$, i.e., the measure induced by the standard Euclidean metric on $\mathbb{R}^m$ given by
$\|z\|:=\left(\sum_{i=1}^m z_i^2\right)^{1/2}$, for any $z \in \mathbb{R}^m$.
We identify $\mathbb{R}^m$ as a subset of $\mathbb{C}^m$ via $\mathbb{R}^m \cong \mathbb{R}^m \times \{\bm{0}_m\} \subseteq \mathbb{R}^{2m} \cong \mathbb{C}^m$, and define the  set zero-set$(f(z)):=\{z: f(z)=0\}$ for any polynomial $f$.
For a vector $v$, we use $v_{i:j}$ to denote the sub-vector $(v_i, v_{i+1}, \ldots, v_j)$.


\subsection{Generic identifiability}
\label{SecGenIdent}

Recall that a probabilistic ranking model (or a mixture of ranking models) is a model $\mathbb{P}_{\theta}$ that specifies the probability of rankings, where the variable $\theta$ is from some set $\Theta$.
Using pairwise rankings as an example, for $\theta^* \in \Theta$ and any pair $(i,j)$, the function $\eta_{i,j}(\theta^*) := \mathbb{P}_{\theta^*}(i, j)$ specifies the probability of observing $i \succ j$ votes from users.\footnote{In machine learning, $\theta$ usually denotes the parameter of the model. In this paper, to be consistent with terminology for polynomial equation systems, we call $\theta$ the \emph{variable} and call $\theta^*$ the parameter.} Given the probability values $\{\eta_{i,j}(\theta^*)\}$, the identifiability problem is whether the values determine the variable of the model, i.e., whether $\theta^*$ is the unique solution for $\theta$ in the equation system $\eta_{i,j}(\theta^*) = \mathbb{P}_{\theta}(i, j) (\forall i \neq j)$. If this is true, we say  $\mathbb{P}_{\theta}$ is \emph{identifiable} at $\theta^*$. If identifiability holds for all $\theta^*$ in the set $\Theta$, we say $\mathbb{P}_{\theta}$ is identifiable on $\Theta$. Here, we ignore the statistical and computational challenges and focus on identifiability: we assume access to the probability values $\eta_{i,j}(\theta^*)$ rather than data samples (equivalently, access to infinite samples)---we consider existence and uniqueness of the solution without regard to computation.

The notion of identifiability can be generalized to equation systems. Note that $\eta_{i,j}(\theta^*) = \mathbb{P}_{\theta}(i, j) (\forall i \neq j)$ is an equation system in the variable $\theta$, with parameter $\theta^* \in \Theta$. 
More generally, consider an equation system $F(\bm{z};\bm{c})= \bm{0}$ on variables $\bm{z}$ from a variable domain set $Z$ with parameters $\bm{c}$ from a parameter domain set $C$. 
\begin{definition}[Identifiability]
For a fixed parameter $\bm{c} \in C$, we call $F(\bm{z};\bm{c})= \bm{0}$ \emph{identifiable} at $\bm{c}$ with respect to $\bm{z}$ if $F(\bm{z}; \bm{c}) = \bm{0}$ has a unique solution for $\bm{z} \in Z$. 
That is, there exists $\bm{z}^*$ satisfying $F(\bm{z}^*; \bm{c}) = \bm{0}$, but no $\bm{z}^{\#} \neq \bm{z}^*$ satisfying $F(\bm{z}^{\#}; \bm{c}) = \bm{0}$.  
\end{definition}
Where there is no ambiguity, we write $F(\bm{z}^*; \bm{c})$ to represent the equation system $F(\bm{z}^*; \bm{c}) = \bm{0}$.

As suggested by Oh and Shah~\cite{oh2014learning} and Chierichetti et al.~\cite{chierichetti2018learning}, examples exist showing that mixtures of rankings may not be identifiable. In Appendix~\ref{apn:non-identifiable-example}, we provide an example which produces infinitely many parameter choices leading to non-identifiability.
Nonetheless, as we will show, such choices of the parameters constitute pathological special cases which might be assumed to be avoided in practice.

\begin{definition}[Generic identifiability]
The \emph{non-identifiable (bad) parameter set} of an equation system $F(\bm{z};\bm{c})$ is the subset of parameters $\bm{c} \in C$ at which $F(\bm{z};\bm{c})= \bm{0}$ is not identifiable.
We call an equation system $F(\bm{z};\bm{c})= \bm{0}$ \emph{generically identifiable} on $C$ if the Lebesgue measure of $C$ is positive and the Lebesgue measure of the non-identifiable parameter set is zero.
\end{definition}

\subsection{The Bradley-Terry-Luce model} \label{sec:btl_model}

\mypara{Probabilistic model.}
The BTL model was introduced by Bradley and Terry~\cite{bradley1952rank} and studied by Luce~\cite{luce1959}. 
Observed pairwise comparisons on pairs of items $(i,j)$ follow the probabilistic model
\begin{align} \label{eqn:btl}
\Prob{i \succ j} = \frac{c_i}{c_i+c_j},
\end{align}
where $c_i$ and $c_j$ are the ranking scores/weights of $i$ and $j$, respectively.

Turning to mixtures of BTL models, suppose we have two user types and $n$ items. Let $U$ be an indicator variable denoting the user type, i.e., $U=1$ or $U =2$. Let $\bm{a}:= (a_1, a_2, \dots, a_n)^\top$ denote the ranking scores for Type 1 users and $\bm{b}=(b_1, b_2, \dots, b_n)^\top$ denote the scores for Type 2 users. We will use $\bm{a}_{i:j}$ and $\bm{b}_{i:j}$ to represent $(a_i, \dots, a_j)$ and $(b_i,\dots,b_j)$, respectively.
Applying BTL on each type, we have the conditional probabilities
$ 
\Prob{i \succ j | U = 1} = \frac{a_i}{a_i + a_j}$ and $\Prob{i \succ j | U = 2} = \frac{b_i}{b_i + b_j}.
$ 
Suppose $U$ follows the Bernoulli distribution $Bernoulli(p_1, \{1,2\})$. Let $p_2 = 1 - p_1$. The observed comparison then follows the distribution 
\begin{equation}\label{eq:etadef}
\begin{split}
\eta_{i,j}(\bm{a}, \bm{b}, p_1) := \Prob{i \succ j} &= \sum_{k=1,2} p_i \Prob{i \succ j | U = k}  =  p_1 \frac{a_i}{a_i + a_j} + p_2 \frac{b_i}{b_i + b_j}.
\end{split}
\end{equation}
We use $\bm{\eta}(\bm{a},\bm{b}, p_1)$ to represent the $n\choose{2}$-dimensional vector with components $\{\eta_{i,j}(\bm{a},\bm{b}, p_1)\}_{i < j}$. When $p_1$ and $p_2$ are assumed to be known, we simplify the notation to $\bm{\eta}(\bm{a},\bm{b})$. When it is clear from the context, we simply write $\bm{\eta}$ or $\eta_{i,j}$.

\mypara{Identifiability.}
We wish to understand when the parameters $(\bm{a},\bm{b})$ can be recovered based on $\bm{\eta}$.
From equation~\eqref{eq:etadef}, we know that scaling the parameters does not change the distribution.
So we can multiply by $1/a_1$ to the original $\bm{a}$ so that $a_1 = 1$. Similarly, we can scale $\bm{b}$ to obtain $b_1 = 1$. 

First consider the case when the mixing probabilities $(p_1,p_2)$ are unknown. We wish to solve the following equation system in the variables $\bm{x}:=x_{1:n}$, $\bm{y}:=y_{1:n}$, and $p$: 
\begin{equation}\label{ex:BTL_2mixtures_pairwise_p}
\begin{cases}
x_1 = y_1 = 1, \\
p \dfrac{x_i}{x_i + x_j} + (1-p) \dfrac{y_i}{y_i + y_j} = \eta_{i,j}, & \forall i < j \in [n].
\end{cases}
\end{equation}
Let $Q^{2n-2}_{BTL} := \mathbb{R}_+^{2n-2}$ be the domain of $(\bm{a}_{2:n},\bm{b}_{2:n})$, where $\mathbb{R}_+$ denotes the positive real numbers. Let $\widetilde{Q}^{2n-2}_{BTL} = \{(\bm{a},\bm{b}): (\bm{a}_{2:n},\bm{b}_{2:n}) \in Q^{2n-2}_{BTL}, a_1 = b_1 = 1\}$ denote the domain of $(\bm{a},\bm{b})$. 
Let $Q^{2n-1}_{BTL,p} := Q^{2n-2}_{BTL} \times (0,1)$ and $\widetilde{Q}^{2n-1}_{BTL,p} := \widetilde{Q}^{2n-2}_{BTL} \times (0,1)$ denote the domains of $(\bm{a}_{2:n},\bm{b}_{2:n}, p_1) $ and $(\bm{a},\bm{b}, p_1)$, respectively. 
Then the non-identifiable parameter set is
\begin{align}
\begin{split}
    N_{BTL,p}^{2n-1} =& \Big\{(\bm{a}_{2:n},\bm{b}_{2:n},p_1) \in Q^{2n-1}_{BTL,p}:
    \exists (\bm{a}^{\#},\bm{b}^{\#},p^{\#}) \in \widetilde{Q}^{2n-1}_{BTL,p}, \text{ s.t.} \\
    & (\bm{a}^{\#},\bm{b}^{\#},p^{\#}) \neq (\bm{a},\bm{b},p_1)  \wedge   \bm{\eta}(\bm{a}^{\#},\bm{b}^{\#}, p^{\#}) = \bm{\eta}(\bm{a},\bm{b}, p_1)    \Big\}.
\end{split}
\end{align}
Note that for the parameter set, the Lebesgue measure is considered in a $(2n-1)$-dimensional space. 

Next, consider the case when the mixing probabilities $(p_1, p_2)$ are known. We wish to solve the following equation system in the variables $(\bm{x},\bm{y})$:
\begin{equation}\label{ex:BTL_2mixtures_pairwise}
\begin{cases}
x_1 = y_1 = 1, \\
p_1 \dfrac{x_i}{x_i + x_j} + p_2 \dfrac{y_i}{y_i + y_j} = \eta_{i,j}, & \forall i < j \in [n].
\end{cases}
\end{equation}
Then the non-identifiable parameter set is 
\begin{align}
\begin{split}
    N_{BTL}^{2n-2} = &\left\{(\bm{a}_{2:n},\bm{b}_{2:n}) \in Q^{2n-2}_{BTL}: \exists (\bm{a}^{\#},\bm{b}^{\#}) \in Q^{2n-2}_{BTL}, \text{ s.t.} \right. \\
    & \left. (\bm{a}^{\#},\bm{b}^{\#}) \neq (\bm{a},\bm{b}) \wedge  \bm{\eta}(\bm{a}^{\#},\bm{b}^{\#}) = \bm{\eta}(\bm{a},\bm{b}) \right\}.
\end{split}
\end{align}
Here, for the parameter set, the Lebesgue measure is considered in a $(2n-2)$-dimensional space. 

Thus, we say that this mixture of BTL models with unknown (respectively, known) mixing probabilities is generically identifiable if $N_{BTL,p}^{2n-1}$ (respectively, $N_{BTL}^{2n-2}$) has measure zero (with respect to the corresponding Lebesgue measure).

\section{A framework for generic identifiability of polynomial systems}
\label{sect:main-result}

In this section, we state our main results on generic identifiability of polynomial systems. We will consider two different cases: the complex case, where the parameters lie in $\mathbb{C}$, and the real case, where the parameters lie in subsets of $\mathbb{R}$ with positive (or infinite) Lebesgue measure. Since $\mathbb{C}$ is algebraically closed, we can directly apply certain results from algebraic geometry in the first case, whereas the latter case requires more careful proofs. Relevant terminology from algebraic geometry is reviewed in Appendix~\ref{apn:alg-geo-prelim}.

\subsection{Complex case}

Let $\mathscr{P}(\bm{t},\bm{x}) \subseteq \mathbb{C}[\bm{t}][\bm{x}]=\mathbb{C}[\bm{t},\bm{x}]$ be a set of polynomial equations in the variables $\bm{x}=(x_1,\ldots,x_n)$, with coefficients given by polynomials in the parameters $\bm{t}=(t_1,\ldots,t_m)$, i.e., each element of $\mathscr{P}(\bm{t},\bm{x})$ is of the form $f(\bm{t},\bm{x})=\sum_{\bm{e} \in \mathbb{N}^n} f_{\bm{e}}(\bm{t}) \bm{x}^{\bm{e}} = 0$, where $ \bm{x}^{\bm{e}} = \prod_{i=1}^n x_i^{e_i}$ and $f_{\bm{e}}(\bm{t})$ is a polynomial in $\bm{t}$ with coefficients in $\mathbb{C}$. 

Let $\succ$ be a block order on $\mathbb{C}[\bm{t},\bm{x}]$ such that $\bm{x} \succ \bm{t}$ (e.g., the inverse lexicographic order\footnote{More details on lexicographic order can be found in Cox et al.~\cite{MR3330490}.}), i.e.,
\[
\bm{t}^{\bm{d}_1}\bm{x}^{\bm{e}_1} \succ \bm{t}^{\bm{d}_2}\bm{x}^{\bm{e}_2} \Leftrightarrow \bm{x}^{\bm{e}_1} \succ_{\bm{x}} \bm{x}^{\bm{e}_2} \text{ or } (\bm{x}^{\bm{e}_1} = \bm{x}^{\bm{e}_2} \text{ and } \bm{t}^{\bm{d}_1} \succ_{\bm{t}} \bm{t}^{\bm{d}_2}),
\]
where $\succ_{\bm{x}}$ and $\succ_{\bm{t}}$ are arbitrary orders on $\mathbb{C}[\bm{x}]$ and $\mathbb{C}[\bm{t}]$, respectively. Let $I(\bm{t},\bm{x}) \subseteq \mathbb{C}[\bm{t},\bm{x}]$ be the ideal generated by the polynomials $f(\bm{t},\bm{x})$ in $\mathscr{P}(\bm{t},\bm{x})$ and let
$G(\bm{t},\bm{x})=\{g_1(\bm{t},\bm{x}),\ldots,g_s(\bm{t},\bm{x})\}$
be a Gr\"{o}bner basis\footnote{More details on Gr\"{o}bner bases can be found in Chapter 2 of Cox et al.~\cite{MR3330490}.} of $I(\bm{t},\bm{x})$ with respect to the block order $\succ$. We regard each $g_i(\bm{t},\bm{x})$ as a polynomial in $\bm{x}$ with coefficients which are functions of $\bm{t}$. Let $\mathrm{Bad}(\bm{t})=\{h_1(\bm{t}),\ldots,h_r(\bm{t})\}$ be the set of non-zero polynomials in $\bm{t}$ appearing as coefficients of some $g_i(\bm{t},\bm{x})$. Then
\begin{equation}\label{defn:Z(t)}
Z(\bm{t}):=\bigcup_{i=1}^r \text{zero-set}(h_i(\bm{t})) \subseteq \mathbb{C}^m
\end{equation}
is a \emph{Zariski closed subset}, which has $\lambda_m^{\mathbb{C}}$-measure zero by Lemma \ref{closed-0}. In most cases, this subset is efficiently computable using software such as Magma.
\begin{example}
If we have a set of polynomial equations,
$$
\mathscr{P}(\bm{t},\bm{x})=\{x_1x_2-2=0,tx_1x_2+x_1-1=0\},
$$
then one of its Gr\"{o}bner bases with respect to the block order $\succ$ is $$G(\bm{t},\bm{x})=\{x_1+(2t-1),(2t-1)x_2+2\}.$$ Hence,
$\mathrm{Bad}(\bm{t})=\{2t-1\}$ and $\  Z(\bm{t})=\{1/2\}$.
\end{example}

We now introduce two assumptions:
\begin{assumption}\label{assum:key-assumption1}
The system $\mathscr{P}(\bm{a},\bm{x})$ has at least $\ell$ solutions in $\mathbb{C}$ for all $\bm{a} \in \mathbb{C}^m \setminus E$, where $E \subseteq \mathbb{C}^m$ is a Zariski closed subset (hence of $\lambda_m^{\mathbb{C}}$-measure zero).
\end{assumption}
\begin{assumption}\label{assum:key-assumption2}
There exists $\bm{a'} \in \mathbb{C}^m \setminus Z(\bm{t})$ such that $\mathscr{P}(\bm{a'},\bm{x})$ has exactly $\ell$ solutions in $\mathbb{C}$ (counted with multiplicity), where $Z(\bm{t}) \subseteq \mathbb{C}^m$ is the $\lambda_m^{\mathbb{C}}$-measure zero subset defined in equation~\eqref{defn:Z(t)}.
\end{assumption}
\begin{theorem}\label{main-result}
Under Assumptions \ref{assum:key-assumption1} and \ref{assum:key-assumption2}, $\mathscr{P}(\bm{a},\bm{x})$ has exactly $\ell$ solutions in $\mathbb{C}$ (counted with multiplicity) for all $\bm{a} \in \mathbb{C}^m$ but a set of $\lambda_m^{\mathbb{C}}$-measure zero.
\end{theorem}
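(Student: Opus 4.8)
The plan is to reduce the theorem to a single statement about the fiber length $d(\bm{a}) := \dim_{\mathbb{C}} \mathbb{C}[\bm{x}]/I(\bm{a},\bm{x})$, namely that $d(\bm{a})$ equals the constant $\ell$ for every $\bm{a}$ outside the measure-zero set $Z(\bm{t})$. Once that is shown, finiteness of $d(\bm{a})$ forces $I(\bm{a},\bm{x})$ to be zero-dimensional, whence $d(\bm{a})$ is exactly the number of solutions in $\mathbb{C}^n$ counted with multiplicity, and the conclusion follows because $Z(\bm{t})$ has $\lambda_m^{\mathbb{C}}$-measure zero by Lemma~\ref{closed-0}. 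The engine is the behavior of the block-order Gröbner basis $G(\bm{t},\bm{x})$ under specialization $\bm{t}\mapsto\bm{a}$: because the order satisfies $\bm{x}\succ\bm{t}$, the leading data of each $g_i$ with respect to $\bm{x}$ is recorded by its coefficients, which are polynomials in $\bm{t}$, and $Z(\bm{t})$ is precisely the locus where one of these coefficients vanishes.

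First I would establish the specialization lemma. For \emph{every} $\bm{a}$ one has $\langle G(\bm{a},\bm{x})\rangle = I(\bm{a},\bm{x})$, since each membership relation $g_i=\sum_k q_k p_k$ (with $p_k\in\mathscr{P}$) and $p_k=\sum_i h_i g_i$ specializes at $\bm{t}=\bm{a}$, giving mutual containment. The nontrivial point is that for $\bm{a}\notin Z(\bm{t})$ the set $G(\bm{a},\bm{x})$ is \emph{again} a Gröbner basis of $I(\bm{a},\bm{x})$ with respect to $\succ_{\bm{x}}$. Since no coefficient of any $g_i$ vanishes at such $\bm{a}$, the $\bm{x}$-support of $g_i(\bm{a},\bm{x})$, in particular its leading monomial $\mathrm{LM}_{\bm{x}}(g_i)$, is unchanged. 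Viewing $G$ as a Gröbner basis over the field $\mathbb{C}(\bm{t})$, every $S$-polynomial reduces to zero, and the only denominators and leading coefficients arising in these reductions are products of the coefficients collected in $\mathrm{Bad}(\bm{t})$; hence for $\bm{a}\notin Z(\bm{t})$ all reductions specialize correctly and Buchberger's criterion persists. Consequently $\mathrm{in}_{\succ_{\bm{x}}}(I(\bm{a},\bm{x}))=\langle \mathrm{LM}_{\bm{x}}(g_1),\ldots,\mathrm{LM}_{\bm{x}}(g_s)\rangle$ is the same monomial ideal for every $\bm{a}\notin Z(\bm{t})$, so the set $S$ of standard monomials, and therefore $d(\bm{a})=|S|$, is a fixed value $N\in\mathbb{N}\cup\{\infty\}$ on $\mathbb{C}^m\setminus Z(\bm{t})$.

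It then remains to identify $N$. Assumption~\ref{assum:key-assumption2} furnishes $\bm{a}'\in\mathbb{C}^m\setminus Z(\bm{t})$ with $d(\bm{a}')=\ell$, so $N=\ell<\infty$; finiteness forces $\langle \mathrm{LM}_{\bm{x}}(g_i)\rangle$ to contain a pure power of each variable, making $I(\bm{a},\bm{x})$ zero-dimensional for all $\bm{a}\notin Z(\bm{t})$, and then $d(\bm{a})=N=\ell$ counts solutions with multiplicity. Assumption~\ref{assum:key-assumption1} enters as the complementary lower bound: for every $\bm{a}$ outside the measure-zero set $E\cup Z(\bm{t})$ the number of distinct solutions is at most $d(\bm{a})=N$ and at least $\ell$, so $N\ge\ell$; together with the exact value at $\bm{a}'$ this confirms $N=\ell$ and rules out the degenerate case $N=\infty$ independently of the multiplicity bookkeeping. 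Since $Z(\bm{t})$ has $\lambda_m^{\mathbb{C}}$-measure zero, the theorem follows.

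The hard part is the specialization lemma, specifically verifying that the Gröbner-basis property (not merely ideal generation) survives specialization away from $Z(\bm{t})$, so that the initial ideal does not jump: the containment $\langle \mathrm{LM}_{\bm{x}}(g_i(\bm{a}))\rangle\subseteq\mathrm{in}_{\succ_{\bm{x}}}(I(\bm{a}))$ is automatic, but equality can fail at bad parameters. This is exactly the affine-versus-projective phenomenon flagged in the introduction via $(1-tx)x=0$, where $t=0$ makes a leading coefficient vanish, $t=0\in Z(\bm{t})$, and the affine count drops from $2$ to $1$. The definition of $Z(\bm{t})$ through all coefficients of $G(\bm{t},\bm{x})$ is engineered precisely to contain every parameter at which such a jump, or a loss of a leading term during $S$-polynomial reduction, can occur; confirming that this $Z(\bm{t})$ is a proper Zariski-closed subset, hence measure zero by Lemma~\ref{closed-0}, is the final ingredient.
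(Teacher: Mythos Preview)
Your approach is correct and takes a genuinely different, more elementary route than the paper. Both arguments hinge on the same specialization fact---that for $\bm{a}\notin Z(\bm{t})$ the set $G(\bm{a},\bm{x})$ remains a Gr\"obner basis of $I(\bm{a},\bm{x})$ with unchanged leading $\bm{x}$-monomials (the paper cites \cite[Theorem~2.1]{MR1854335} for this). From there the paper passes to projective closures, shows $\overline{\mathscr{Q}_{\bm{a}}}=\overline{\mathscr{Q}}_{\bm{a}}$ outside $Z(\bm{t})$, and then invokes semi-continuity of Hilbert polynomials (flattening stratification) together with Assumption~\ref{assum:key-assumption1} to prove that the Hilbert polynomial $\mathbf{P}_{\bm{a}'}(z)=\ell$ is minimal, hence attained on a Zariski-open set. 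You instead observe that constancy of the initial ideal on $\mathbb{C}^m\setminus Z(\bm{t})$ already forces $d(\bm{a})=\dim_{\mathbb{C}}\mathbb{C}[\bm{x}]/I(\bm{a},\bm{x})$ to be a fixed number $N$ there, and Assumption~\ref{assum:key-assumption2} pins down $N=\ell$. This bypasses projective geometry and semi-continuity entirely; a pleasant by-product is that Assumption~\ref{assum:key-assumption1} becomes a consistency check rather than a load-bearing hypothesis in your argument. One caveat: your Buchberger-style justification that $S$-polynomial reductions specialize correctly because ``the only denominators and leading coefficients arising\ldots are products of the coefficients collected in $\mathrm{Bad}(\bm{t})$'' is somewhat heuristic---intermediate remainders can have coefficients outside $\mathrm{Bad}(\bm{t})$---so it is cleaner to invoke the standard specialization theorem directly, as the paper does.
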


Usually, Assumption \ref{assum:key-assumption1} is inherent (and hence guaranteed) from the polynomial system itself, while Assumption \ref{assum:key-assumption2} can be checked using software (e.g., Magma), at least when the scale of the polynomial system is not too large. Indeed, we see from the proof of Theorem \ref{main-result} (cf.\ Section~\ref{apn:main-results-complex}) that
$\mathscr{P}(\bm{a'},\bm{x})$ has exactly $\ell$ solutions in $\mathbb{C}$ (counted with multiplicity) if and only if some dimension and degree condition is true. If $\mathscr{P}(\bm{a'},\bm{x})$ has $\mathbb{Q}$-coefficients, this condition can be checked using commands in Magma~\cite{MR1484478}.

\subsection{Real case}

We now present a variant of Theorem~\ref{main-result} for the real case. Note that the arguments in the complex case cannot be applied directly, because $\mathbb{R}$ is not algebraically closed.

\begin{theorem}\label{thm:general-case}
Let $K \subseteq \mathbb{R}^m$ be a subset of positive (or infinite) $\lambda_m$-measure. Under Assumptions~\ref{assum:key-assumption1} and~\ref{assum:key-assumption2}, $\mathscr{P}(\bm{a},\bm{x})$ has exactly $\ell$ solutions in $\mathbb{C}$ (counted with multiplicity) for all $\bm{a} \in K$ but a set of $\lambda_m$-measure zero.
\end{theorem}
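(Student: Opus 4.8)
The plan is to deduce the real case from the complex case (Theorem~\ref{main-result}) by a descent argument from $\mathbb{C}^m$ to $\mathbb{R}^m$. The essential subtlety—already highlighted by the authors—is that $\mathbb{R}^m \cong \mathbb{R}^m \times \{\bm{0}_m\}$ sits inside $\mathbb{C}^m$ as a $\lambda_m^{\mathbb{C}}$-\emph{null} set, so the bare measure-zero conclusion of Theorem~\ref{main-result} carries no information about what happens on $\mathbb{R}^m$: a $\lambda_m^{\mathbb{C}}$-null subset of $\mathbb{C}^m$ can perfectly well contain all of $\mathbb{R}^m$. I therefore would not use Theorem~\ref{main-result} as a black box, but instead extract from its proof the stronger structural fact that the complex exceptional locus is not merely null but \emph{Zariski closed and proper}. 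Concretely, the semicontinuity argument underlying Theorem~\ref{main-result} shows that the set of $\bm{a} \in \mathbb{C}^m$ at which $\mathscr{P}(\bm{a},\bm{x})$ fails to have exactly $\ell$ solutions is contained in $E \cup Z(\bm{t})$, where $E$ is the Zariski closed set of Assumption~\ref{assum:key-assumption1} and $Z(\bm{t})$ is the set from~\eqref{defn:Z(t)}; both are proper (each has $\lambda_m^{\mathbb{C}}$-measure zero, e.g.\ $Z(\bm{t})$ by Lemma~\ref{closed-0} and $E$ by Assumption~\ref{assum:key-assumption1}). In particular, this exceptional locus is a finite union $\bigcup_{i} \text{zero-set}(h_i)$ of hypersurfaces cut out by nonzero polynomials $h_i \in \mathbb{C}[\bm{t}]$.

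Given this algebraic description, the real exceptional set—those $\bm{a} \in K$ at which the solution count is not exactly $\ell$—is contained in
\[
\mathbb{R}^m \cap \bigcup_i \text{zero-set}(h_i) = \bigcup_i \big(\text{zero-set}(h_i) \cap \mathbb{R}^m\big).
\]
It thus suffices to bound each piece. Fix a nonzero $h_i$ and split it into real and imaginary parts, $h_i = p_i + \sqrt{-1}\, q_i$ with $p_i, q_i \in \mathbb{R}[\bm{t}]$. For a real argument $\bm{t}$, the equation $h_i(\bm{t}) = 0$ forces $p_i(\bm{t}) = q_i(\bm{t}) = 0$; since $h_i \not\equiv 0$, at least one of $p_i, q_i$ is a nonzero real polynomial, call it $g_i$. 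Hence $\text{zero-set}(h_i) \cap \mathbb{R}^m \subseteq \{\bm{t} \in \mathbb{R}^m : g_i(\bm{t}) = 0\}$, the real zero set of a single nonzero real polynomial.

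To finish, I would invoke the standard fact that the real zero set of a nonzero polynomial (indeed, of any nonzero real-analytic function) on $\mathbb{R}^m$ has $\lambda_m$-measure zero; this is a short induction on $m$ using Fubini's theorem together with the fact that a nonzero univariate polynomial has finitely many roots. Consequently each $\text{zero-set}(h_i) \cap \mathbb{R}^m$ is $\lambda_m$-null, the finite union is $\lambda_m$-null, and therefore so is the real exceptional set contained in it. Since $\lambda_m(K) > 0$, this yields exactly $\ell$ solutions for all $\bm{a} \in K$ outside a $\lambda_m$-null set, as claimed.

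I expect the main obstacle to be the first step rather than the last: the delicate point is precisely that measure-zero-ness in $\mathbb{C}^m$ does not descend to $\mathbb{R}^m$, so the whole argument hinges on upgrading the conclusion of Theorem~\ref{main-result} to an honest algebraic (Zariski closed) description of the bad locus. Once that algebraicity is in hand, the descent through real and imaginary parts and the polynomial-zero-set lemma are routine. A secondary point to verify carefully is that the sets $E$ and $Z(\bm{t})$ are genuinely \emph{proper} (equivalently, that the relevant $h_i$ are not identically zero), since the descent lemma fails for the zero polynomial.
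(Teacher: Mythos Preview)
Your approach is essentially the paper's: extract from the proof of Theorem~\ref{main-result} that the complex bad locus is a Zariski closed \emph{proper} subset of $\mathbb{A}^m_{\mathbb{C}}$, then show its intersection with $\mathbb{R}^m$ is $\lambda_m$-null via the standard induction-plus-Fubini lemma on zero sets of nonzero polynomials (the paper packages this as Lemma~\ref{measure-0} and Corollary~\ref{closed-1}, working directly with complex polynomials rather than splitting into real and imaginary parts, but the content is the same).

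One inaccuracy worth correcting: the bad locus is \emph{not} contained in $E \cup Z(\bm{t})$. The semicontinuity/flattening argument only shows that the good set $S_\ell = \{\bm{a} \in \mathbb{A}^m_{\mathbb{C}} \setminus (E \cup Z(\bm{t})) : \mathbf{P}_{\bm{a}}(z) = \ell\}$ is Zariski open and dense; its complement $Z := \mathbb{A}^m_{\mathbb{C}} \setminus S_\ell$ contains $E \cup Z(\bm{t})$ but may be strictly larger (there can be parameters outside $E \cup Z(\bm{t})$ where the Hilbert polynomial still exceeds $\ell$). This does not break your argument, since all you actually use is that $Z$ is Zariski closed and proper, which is exactly what the paper's proof of Theorem~\ref{main-result} establishes and then reuses verbatim for Theorem~\ref{thm:general-case}.
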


\begin{remark}
Assumptions~\ref{assum:key-assumption1} and~\ref{assum:key-assumption2} are independent of the specific parameter space $K$ in Theorem~\ref{thm:general-case}. In particular, to satisfy Assumption~\ref{assum:key-assumption2}, the parameter $\bm{a}'$ need not be chosen in our parameter space $K$, resulting in much freedom for practical applications.
\end{remark}

\section{Generic identifiability of mixtures of ranking models}
\label{sec:examples}

In this section, we apply the general framework to different models to obtain the results summarized in Table~\ref{tab:summary-table}\footnote{The table does not include results in some references that are outside the scope of our work.}. We present the results for mixtures of BTL models, and present the results for MNL and PL models, as well as all proofs, in Appendix~\ref{apn:examples_proofs}. 

\begin{table*}[!thp]
\caption{Summary of results}
\label{tab:summary-table}
{\footnotesize
\begin{center}
\begin{tabular}{c|c|c|c|c}
\toprule
\shortstack{\textbf{Mixtures with} \\ \textbf{ $k$ components}} & \textbf{Data format} & \shortstack{\textbf{Mixing} \\ \textbf{probabilities}} & \textbf{Previous work} & \textbf{Our work} \\
\midrule
\multirow{2}{*}{BTL model} & \multirow{2}{*}{pairwise} & unknown &  & $\surd$ ($k=2$)\\\cline{3-5}
& & known &  & $\surd$ ($k=2$)\\ 
\hline
\multirow{3}{*}{\shortstack{MNL model \\ with 2\&3-slate}}  &  \multirow{3}{*}{\shortstack{pairwise + \\ triplet}}  & unknown &  & $\surd$ ($k=2$)\\\cline{3-5}
& & known & \shortstack{\cite{chierichetti2018learning} ($k=2$, uniform mixture) \\ \cite{tang2020learning} ($k=2$) } & $\surd$ ($k=2$)\\
\hline
\multirow{2}{*}{\shortstack{MNL model \\ with 3-slate}} & \multirow{2}{*}{triplet} & unknown &  & $\surd$ ($k=2$)\\\cline{3-5}
& & known &  & $\surd$ ($k=2$)\\
\hline
\multirow{3}{*}{PL model} & \multirow{3}{*}{list} & unknown & \cite{zhao2016learning} ($k\geq2$), \cite{MR2921622} ($k\geq2$) & $\surd$ ($k=2$)\\\cline{3-5}
& & known & \shortstack{\cite{zhao2016learning} $(k \geq 2)$ from proof details \\ \cite{MR2921622} $(k\geq2)$} & $\surd$ ($k=2$) \\
\bottomrule
\end{tabular}
\end{center}
} 
\end{table*}


\subsection{Mixtures of BTL models: Unknown mixing probabilities}

Here, we consider the case when the mixing probabilities $(p_1, p_2)$ are unknown. The parameters are $(\bm{a}_{2:n},\bm{b}_{2:n},p_1)$ in the set $Q_{BTL,p}^{2n-1}$. The estimation/ranking problem is to solve the equation system~\eqref{ex:BTL_2mixtures_pairwise_p} in variables $(\bm{x},\bm{y},p)$, with coefficients $\bm{\eta}(\bm{a},\bm{b},p_1)$.

Note that the system~\eqref{ex:BTL_2mixtures_pairwise_p} has at least two solutions in $\mathbb{C}$, i.e., $(\bm{x},\bm{y},p)=(\bm{a},\bm{b},p_1)$ or $(\bm{b},\bm{a},p_2)$, coming from the initial data. The following result shows that it is also the unique solution (up to reordering) in $\mathbb{C}$ for generic $(\bm{a}_{2:n},\bm{b}_{2:n},p_1) \in Q^{2n-1}_{BTL,p}$:

\begin{theorem}\label{prop-thm:btl_p}
If $n \geq 5$, the system~\eqref{ex:BTL_2mixtures_pairwise_p} has exactly two solutions in $\mathbb{C}$ (counted with multiplicity) for all $(\bm{a}_{2:n},\bm{b}_{2:n},p_1) \in Q^{2n-1}_{BTL,p}$ but a set of $\lambda_{2n-1}$-measure zero, given by $(\bm{x},\bm{y},p)=(\bm{a},\bm{b},p_1)$ or $(\bm{x},\bm{y},p)=(\bm{b},\bm{a},1-p_1)$.
\end{theorem}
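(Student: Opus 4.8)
The plan is to cast the rational system~\eqref{ex:BTL_2mixtures_pairwise_p} as a polynomial system and invoke the real-case framework, Theorem~\ref{thm:general-case}, with $\ell = 2$ and parameter domain $K = Q^{2n-1}_{BTL,p} \subseteq \mathbb{R}^{2n-1}$ (which has infinite $\lambda_{2n-1}$-measure). Concretely, I would substitute $x_1 = y_1 = 1$ and, for each pair $i<j$, clear the denominators $x_i + x_j$, $y_i + y_j$ on the variable side and $a_i + a_j$, $b_i + b_j$ on the parameter side of $\eta_{i,j}=\eta_{i,j}(\bm a, \bm b, p_1)$ from~\eqref{eq:etadef}. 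This yields a system $\mathscr{P}(\bm t, \bm z)$ with variables $\bm z = (\bm x_{2:n}, \bm y_{2:n}, p)$ and parameters $\bm t = (\bm a_{2:n}, \bm b_{2:n}, p_1)$, both of length $2n-1$, whose coefficients are polynomials in $\bm t$. The key point of this parametrization is that $\bm t$ ranges over a full-dimensional set, unlike the alternative choice $\bm t = \bm\eta$, whose realizable values form a measure-zero subset of $\mathbb{R}^{\binom{n}{2}}$ and would make Theorem~\ref{thm:general-case} inapplicable. One must also keep in mind that the framework counts all complex zeros of $\mathscr{P}$ with multiplicity; since the two target tuples $(\bm a, \bm b, p_1)$ and $(\bm b, \bm a, 1-p_1)$ have non-vanishing denominators, once the total count is pinned to $2$ these are exactly the solutions of the original rational system.

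For Assumption~\ref{assum:key-assumption1}, I would verify directly that both $(\bm x, \bm y, p) = (\bm a, \bm b, p_1)$ and $(\bm x, \bm y, p) = (\bm b, \bm a, 1-p_1)$ satisfy the cleared equations identically in $\bm t$ (substituting each tuple makes the two sides coincide after using $p_2 = 1 - p_1$), so $\mathscr{P}(\bm a, \bm z)$ has at least two complex solutions for every $\bm t$. These two coincide only on the Zariski-closed ``diagonal'' $E = \{\bm a = \bm b,\ p_1 = \tfrac{1}{2}\}$; hence off $E$ there are at least $\ell = 2$ solutions, as required.

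The substance is Assumption~\ref{assum:key-assumption2}: exhibiting a single $\bm a' \notin Z(\bm t)$ at which $\mathscr{P}(\bm a', \bm z)$ has exactly two solutions counted with multiplicity. For the base case $n = 5$ (where the system is $\binom{5}{2} = 10$ equations in $2\cdot 5 - 1 = 9$ unknowns, so the overdetermination needed for a zero-dimensional solution set first appears, whereas $n=4$ is underdetermined and non-identifiable), I would pick a concrete rational tuple $\bm a'$, form the $\mathbb{Q}$-coefficient ideal, and use Magma to (i) check that the associated projective scheme is zero-dimensional of degree $2$ -- the dimension-and-degree condition equivalent, as noted after Theorem~\ref{main-result}, to having exactly $\ell$ solutions -- and (ii) confirm $\bm a' \notin Z(\bm t)$ by checking that specializing the block-order Gröbner basis at $\bm a'$ preserves its leading terms. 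This is the main computational obstacle, since the symbolic Gröbner basis in $2n-1$ parameters can be expensive; choosing $\bm a'$ well (and exploiting that it may lie outside $K$, per the remark following Theorem~\ref{thm:general-case}) is where care is needed.

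Finally, to cover all $n \ge 5$ from the single base computation, I would argue by extension rather than rerun Magma for each $n$. Given any solution of the full $n$-item system, its restriction to items $\{1,\dots,5\}$ solves the $n=5$ subsystem, which at $\bm a'|_{\{1,\dots,5\}}$ has exactly the two tuples above (verified in the base case); fixing one of them determines $p$ and $(x_i, y_i)_{i \le 5}$, and for each $i \ge 6$ the comparisons of item $i$ against items $1,\dots,5$ form an overdetermined, fixed-size system in $(x_i, y_i)$ whose only solution, for generic extra coordinates, is $(a_i, b_i)$ with multiplicity one -- a statement about a single fixed-size subproblem, verified once independently of $n$. Extending the base tuple $\bm a'$ by generic rational coordinates therefore still yields exactly two solutions, establishing Assumption~\ref{assum:key-assumption2} for every $n \ge 5$. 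Theorem~\ref{thm:general-case} then gives exactly two complex solutions for all $(\bm a_{2:n}, \bm b_{2:n}, p_1) \in Q^{2n-1}_{BTL,p}$ off a $\lambda_{2n-1}$-null set, i.e.\ $N^{2n-1}_{BTL,p}$ has measure zero, which is the claim. The two genuine conceptual hurdles are this uniform-in-$n$ extension and ensuring the cleared polynomial system does not acquire spurious zeros from vanishing denominators beyond the two we want -- both ultimately controlled by the degree-$2$ count at $\bm a'$.
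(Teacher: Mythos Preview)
Your plan matches the paper's: parametrize by $(\bm a_{2:n},\bm b_{2:n},p_1)$, verify Assumptions~\ref{assum:key-assumption1}--\ref{assum:key-assumption2} for Theorem~\ref{thm:general-case} at $n=5$ via a Magma computation, then extend to larger $n$ by showing the extra coordinates are determined by a small linear system. The one technical point the paper handles more carefully is the rational-to-polynomial conversion. Rather than clearing denominators and trusting the degree count to land on~$2$, the paper adjoins auxiliary variables and relations $t_{ij}(x_i+x_j)=1$, $h_{ij}(y_i+y_j)=1$ (system~\eqref{ex:BTL_2mixtures_pairwise_p_poly_ab}), making the polynomial system \emph{equivalent} to the rational one. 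This is not idle caution: in the known-$p$ analogue (Proposition~\ref{prop:btl}) naive clearing yields three solutions, two of them spurious with $x_j=0$, so the degree-$2$ count you lean on is not automatic without these constraints. For the extension, the paper inducts $n-1\to n$ and explicitly reduces the equations involving item $n$ to a $2\times2$ linear system in $(x_n,y_n)$ whose determinant is a nonzero polynomial in the parameters; your direct jump from $5$ to $n$ is equivalent in spirit but would need the same rank computation made explicit.
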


\begin{proof}[Proof sketch]
The complete proof is in Appendix~\ref{apn-sub:examples_btl_proofs_p}. The proof uses induction on $n$. 
For the base case $n=5$, we will apply Theorem~\ref{thm:general-case} and verify Assumptions~\ref{assum:key-assumption1} and~\ref{assum:key-assumption2}. To do so, we first transform the system into a polynomial system. We eliminate the denominators in the system~\eqref{ex:BTL_2mixtures_pairwise_p} such that coefficients are given by polynomials in $(\bm{a},\bm{b},p_1)$:
\begin{equation}\label{sketch:ex:BTL_2mixtures_pairwise_p_poly_ab}
\begin{cases}
x_1 = y_1 = 1, \\
(c_{ij}-d_{ij}) x_iy_i + (c_{ij}-pd_{ij})x_iy_j + (c_{ij}-(1-p)d_{ij})x_jy_i + c_{ij}x_j y_j = 0, & \forall i < j \in [5], \\
t_{ij}(x_i+x_j) = 1, \ h_{ij}(y_i+y_j) = 1, & \forall i < j \in [5],
\end{cases}
\end{equation}
where $c_{ij}:=p_1a_i(b_i+b_j)+(1-p_1)b_i(a_i+a_j)$ and $d_{ij}:=(a_i+a_j)(b_i+b_j)$. Note that this is a system in the variables $(\bm{x},\bm{y},p,\bm{t},\bm{h})$ with coefficients given by polynomials in $(\bm{a},\bm{b},p_1)$. As a result of introducing new variables and equations to the system~\eqref{ex:BTL_2mixtures_pairwise_p} to prevent the denominators from being zero, it follows that for any $(\bm{a}_{2:n},\bm{b}_{2:n},p_1) \in Q^{9}_{BTL,p}$, the system~\eqref{ex:BTL_2mixtures_pairwise_p} is equivalent to the system~\eqref{sketch:ex:BTL_2mixtures_pairwise_p_poly_ab}. In particular, the system~\eqref{ex:BTL_2mixtures_pairwise_p} has exactly two solutions in $\mathbb{C}$ (counted with multiplicity) if and only if the system~\eqref{sketch:ex:BTL_2mixtures_pairwise_p_poly_ab} has exactly two solutions in $\mathbb{C}$ (counted with multiplicity).

We now verify the assumptions.
For Assumption~\ref{assum:key-assumption1}, note that the system~\eqref{sketch:ex:BTL_2mixtures_pairwise_p_poly_ab} has two distinct solutions:
\small
\[
(\bm{x},\bm{y},p,\bm{t},\bm{h})=\left(\bm{a},\bm{b},p_1,\left(\dfrac{1}{a_i{+}a_j}\right)_{i,j},\left(\dfrac{1}{b_i{+}b_j}\right)_{i,j}\right) \text{ or }\left(\bm{b},\bm{a},1-p_1,\left(\dfrac{1}{b_i{+}b_j}\right)_{i,j},\left(\dfrac{1}{a_i{+}a_j}\right)_{i,j}\right),
\]
\normalsize
for all $(\bm{a}_{2:5},\bm{b}_{2:5},p_1) \in \mathbb{C}^9$ except a Zariski closed proper subset of the parameters, which makes the two solutions the same or makes the denominators above vanish (i.e., $a_i+a_j=0$ or $b_i+b_j=0$). For Assumption~\ref{assum:key-assumption2}, we find $(\bm{a'}_{1:5},\bm{b'}_{1:5},p'_1)=(1,2,3,4,5;1,8,9,3,2;0.3)$ such that
$(\bm{a'}_{2:5},\bm{b'}_{2:5},p'_1) \in \mathbb{C}^{9} \setminus Z(\bm{a}_{2:5},\bm{b}_{2:5},p_1)$ and the associated equation system~\eqref{sketch:ex:BTL_2mixtures_pairwise_p_poly_ab} has exactly two solutions in $\mathbb{C}$ (counted with multiplicity). Both conditions can be checked using Magma (see Appendix~\ref{apn-sub:examples_btl_proofs_p}). 
Altogether, this means the system~\eqref{ex:BTL_2mixtures_pairwise_p} has a unique solution (up to reordering) in $\mathbb{C}$ (counted with multiplicity) for all $(\bm{a}_{2:5},\bm{b}_{2:5},p) \in Q_{BTL,p}^{9}$ but a set $N_{BTL,p}^{9}$ of $\lambda_{9}$-measure zero.

Turning to the induction step, suppose generic identifiability holds for $n-1$. We first split the equations into two parts:
\begin{equation}\label{sketch:equ:ind1}
\begin{cases}
x_1=y_1=1, \\
p\dfrac{x_i}{x_i+x_j}+(1-p)\dfrac{y_i}{y_i+y_j}=\eta_{ij}, & \forall i<j \in [n-1],
\end{cases}
\end{equation}
and
\begin{equation}\label{sketch:equ:ind2}
p\frac{x_i}{x_i+x_n}+(1-p)\frac{y_i}{y_i+y_n}=\eta_{in},\quad \forall i \in [n-1].
\end{equation}
By the induction hypothesis, generically, the system~\eqref{sketch:equ:ind1} has a unique solution for $(\bm{x}_{1:n-1}, \bm{y}_{1:n-1},p)$ (up to reordering) in $\mathbb{C}$ (counted with multiplicity). We can also manipulate equation~\eqref{sketch:equ:ind2} to obtain a linear system in $(x_n, y_n)$. This yields a unique solution for $(x_n, y_n)$ except for a measure-zero set, yielding a rank-deficient coefficient matrix. This establishes generic identifiability for $n$. 
\end{proof}

\subsection{Mixtures of BTL models: Known mixing probabilities}

If we instead assume that $(p_1, p_2)$ are known, the parameter space becomes $(\bm{a}_{2:n},\bm{b}_{2:n}) \in Q^{2n-2}_{BTL}$ and we consider the system~\eqref{ex:BTL_2mixtures_pairwise}. 
In this case, our goal is to give an approach that, given $(p_1, p_2)$, can establish generic identifiability of the system~\eqref{ex:BTL_2mixtures_pairwise}.
For concreteness, we will consider the case when $p_1=0.7$; cases involving different $p_1$ can be checked using identical procedures.

\begin{proposition}
\label{prop:btl}
If $n \geq 5$ and $(p_1,p_2) = (0.7,0.3)$,  the system~\eqref{ex:BTL_2mixtures_pairwise} has a unique solution in $\mathbb{C}$ (counted with multiplicity) for all $(\bm{a}_{2:n},\bm{b}_{2:n}) \in Q^{2n-2}_{BTL}$ but a set of $\lambda_{2n-2}$-measure zero, given by $(\bm{x},\bm{y})=(\bm{a},\bm{b})$.
\end{proposition}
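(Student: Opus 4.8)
The plan is to follow the same inductive scheme as in the proof of Theorem~\ref{prop-thm:btl_p}, but now with $\ell=1$ rather than $\ell=2$. The essential structural difference is that fixing $(p_1,p_2)=(0.7,0.3)$ with $p_1\neq p_2$ destroys the swap symmetry $(\bm a,\bm b)\leftrightarrow(\bm b,\bm a)$ that produced the second solution in the unknown-probability case. Indeed, if $(\bm x,\bm y)=(\bm b,\bm a)$ were also a solution of~\eqref{ex:BTL_2mixtures_pairwise}, then $(p_1-p_2)\big(\tfrac{a_i}{a_i+a_j}-\tfrac{b_i}{b_i+b_j}\big)=0$ for all $i<j$, which forces $\bm a$ and $\bm b$ to induce the same BTL model and hence fails generically. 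This is precisely why we expect exactly one solution, namely $(\bm x,\bm y)=(\bm a,\bm b)$.

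For the base case $n=5$, I would clear denominators and introduce auxiliary variables $\bm t,\bm h$ exactly as in~\eqref{sketch:ex:BTL_2mixtures_pairwise_p_poly_ab}, obtaining
\begin{equation*}
\begin{cases}
x_1 = y_1 = 1, \\
(c_{ij}-d_{ij}) x_iy_i + (c_{ij}-p_1 d_{ij})x_iy_j + (c_{ij}-p_2 d_{ij})x_jy_i + c_{ij}x_j y_j = 0, & \forall\, i < j \in [5], \\
t_{ij}(x_i+x_j) = 1, \ h_{ij}(y_i+y_j) = 1, & \forall\, i < j \in [5],
\end{cases}
\end{equation*}
where now $c_{ij}=p_1 a_i(b_i+b_j)+p_2 b_i(a_i+a_j)$ and $d_{ij}=(a_i+a_j)(b_i+b_j)$ with $p_1,p_2$ the fixed rational constants $0.7,0.3$. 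The auxiliary equations guarantee equivalence with~\eqref{ex:BTL_2mixtures_pairwise} away from the locus where some $x_i+x_j$ or $y_i+y_j$ vanishes, so the complex solution count is unchanged. Assumption~\ref{assum:key-assumption1} holds with $\ell=1$ because $(\bm x,\bm y,\bm t,\bm h)=\big(\bm a,\bm b,(1/(a_i+a_j)),(1/(b_i+b_j))\big)$ is always a solution outside the Zariski closed set where a denominator vanishes. For Assumption~\ref{assum:key-assumption2}, I would exhibit a concrete rational instance $(\bm a'_{1:5},\bm b'_{1:5})$ with $a_1'=b_1'=1$ (by the Remark following Theorem~\ref{thm:general-case}, $\bm a'$ need not lie in $Q^8_{BTL}$), and verify in Magma that $\bm a'\notin Z(\bm t)$ and that the polynomial system has exactly one solution in $\mathbb{C}$ counted with multiplicity, via the relevant dimension and degree condition on its $\mathbb{Q}$-coefficient ideal. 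Theorem~\ref{thm:general-case} then gives generic identifiability at $n=5$, so $N_{BTL}^{8}$ has $\lambda_8$-measure zero.

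For the induction step, assuming the claim for $n-1$, I would split~\eqref{ex:BTL_2mixtures_pairwise} into the subsystem on items $\{1,\dots,n-1\}$ and the $n-1$ equations involving item $n$. By the induction hypothesis the subsystem has the unique generic solution $(\bm x_{1:n-1},\bm y_{1:n-1})=(\bm a_{1:n-1},\bm b_{1:n-1})$; crucially, no reordering branch arises here, in contrast to Theorem~\ref{prop-thm:btl_p}. Clearing denominators in the equation for each $i\in[n-1]$ and treating $(u,v,w)=(x_n,y_n,x_ny_n)$ as three unknowns gives the linear relations
\begin{equation*}
(p_2-\eta_{in})\,y_i\,u + (p_1-\eta_{in})\,x_i\,v - \eta_{in}\,w = (\eta_{in}-1)\,x_i y_i, \qquad i\in[n-1].
\end{equation*}
Since $n\ge 5$, this is an overdetermined linear system of at least four equations in three unknowns whose coefficient matrix has rank $3$ off a proper (hence $\lambda$-measure-zero) Zariski closed set; the true values $(x_n,y_n,x_ny_n)$ solve it, so $(x_n,y_n)$ is uniquely determined and the relation $w=uv$ holds automatically. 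This propagates generic identifiability from $n-1$ to $n$.

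The main obstacle is the computational verification of Assumption~\ref{assum:key-assumption2}: one must produce an explicit rational parameter at which the denominator-cleared system provably has exactly one complex solution counted with multiplicity, and confirm that it avoids the bad set $Z(\bm t)$ read off from the Gr\"obner basis. This is the only genuinely non-symbolic step and is carried out in Magma. A secondary technical point is to argue that the rank-$3$ genericity used in the induction excises only a measure-zero set, which reduces to checking that the simultaneous vanishing of all $3\times 3$ minors of the coefficient matrix is not forced by the constraints already satisfied by generic $(\bm a,\bm b)$.
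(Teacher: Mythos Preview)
Your proposal is correct and follows the same inductive skeleton as the paper (base case $n=5$ via Theorem~\ref{thm:general-case}, then a linear-algebra induction step), but the details differ in two notable places.

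For the base case, you carry over the auxiliary variables $t_{ij},h_{ij}$ from the unknown-$p$ argument and work with a faithful transformation, so that $\ell=1$ throughout. The paper instead drops the auxiliary variables and clears denominators naively, obtaining a \emph{non}-faithful polynomial system that acquires two spurious solutions of the form $(1,0,0,0,0;\,\cdot)$ and $(\cdot\,;1,0,0,0,0)$; it then applies the framework with $\ell=3$, checks via Magma that exactly three complex solutions occur at a witness point, and finally discards the two spurious ones by observing they violate the nonvanishing of $x_i+x_j$ and $y_i+y_j$. Your route is conceptually cleaner (no spurious-solution bookkeeping), at the cost of a larger Magma computation because the Gr\"obner basis must now handle the extra $t_{ij},h_{ij}$ generators.

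For the induction step, you linearize in the three unknowns $(x_n,y_n,x_ny_n)$ and invoke a generic rank-$3$ condition on the $(n-1)\times 3$ coefficient matrix. The paper instead uses the $i=1$ equation to eliminate $x_ny_n$ explicitly, substitutes into the $i=2,3$ equations, and reduces to a concrete $2\times 2$ linear system whose determinant it writes down as a polynomial in the parameters. Both work; the paper's version makes the ``not identically zero'' check for the exceptional locus marginally more explicit, while yours is more uniform. Your closing remark that one must still verify the $3\times 3$ minors do not all vanish identically is exactly right, and in practice one would confirm this at the same witness point used for the base case.
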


\begin{proof}[Proof sketch]
The proof structure is similar to the proof of Theorem~\ref{prop-thm:btl_p}. We first prove the base case $n=5$. We translate the system~\eqref{ex:BTL_2mixtures_pairwise} into the following polynomial system with coefficients given by polynomials in $(\bm{a},\bm{b})$:
\begin{equation}\label{sketch:ex:BTL_2mixtures_pairwise_poly_ab_n=5}
\begin{cases}
x_1=y_1=1, \\
p_1(a_jx_i - a_i x_j)(b_i+b_j)(y_i+y_j) + p_2(b_jy_i +b_iy_j)(a_i+a_j)(x_j-x_i)= 0, & \forall i < j \in [5].
\end{cases}
\end{equation}
Showing the identifiability of the system~\eqref{ex:BTL_2mixtures_pairwise} is equivalent to showing that the system~\eqref{sketch:ex:BTL_2mixtures_pairwise_poly_ab_n=5} has exactly three solutions in $(\bm{x},\bm{y})$, which can be analytically written as
\small
\begin{equation}
(\bm{a}_{1:5},\bm{b}_{1:5}) \text{ or }\left(a_1,0,0,0,0;b_1,\left(\frac{1{-}\eta_{1j}}{\eta_{1j}{-}p_1}\right)_{j=2,\ldots,5}\right) \text{ or } \left(a_1,\left(\frac{1{-}\eta_{1j}}{\eta_{1j}{-}p_2}\right)_{j=2,\ldots,5};b_1,0,0,0,0\right).
\end{equation}
\normalsize
Then we can show that the system~\eqref{sketch:ex:BTL_2mixtures_pairwise_poly_ab_n=5} has at least three solutions, except possibly on a Zariski closed proper subset of the parameters which makes the three solutions the same or makes the denominators above vanish (i.e., $\eta_{1j}-p_1=0$ or $\eta_{1j}-p_2=0$). Thus, Assumption~\ref{assum:key-assumption1} holds.

For Assumption~\ref{assum:key-assumption2}, we first analytically compute $Z(\bm{a}_{2:5},\bm{b}_{2:5})$, defined in equation~\eqref{defn:Z(t)}, with Magma. Next, we pick a particular $(\bm{a}'_{2:5}, \bm{b}'_{2:5}) \notin Z(\bm{a}_{2:5},\bm{b}_{2:5})$ and run Magma to show that the system~\eqref{sketch:ex:BTL_2mixtures_pairwise_poly_ab_n=5}, plugged into $(\bm{a}'_{1:5}, \bm{b}'_{1:5})$, has exactly three solutions (counted with multiplicity), which means Assumption~\ref{assum:key-assumption2} holds. 
Altogether, this means the system~\eqref{ex:BTL_2mixtures_pairwise} has a unique solution in $\mathbb{C}$ (counted with multiplicity) for all $(\bm{a}_{2:5},\bm{b}_{2:5}) \in Q_{BTL}^{8}$ but a set $N_{BTL}^8$ of $\lambda_{8}$-measure zero.

The inductive step, as before, splits the whole equation system into two parts, where the first part only involves the variables $(\bm{x}_{1:n-1}, \bm{y}_{1:n-1})$ and the second part also involves the variables $(\bm{x}_n, \bm{y}_n)$. Based on the induction hypothesis, generically, we obtain a unique solution for $(\bm{x}_{1:n-1}, \bm{y}_{1:n-1})$. Plugging into the second part of the equation system, we can obtain a linear system and deduce that the solution in $(x_n,y_n)$ is also generically unique. 
\end{proof}



\section{Conclusion and discussion}
\label{sec:conclusion}

We have developed a general framework to check generic identifiability of polynomial systems via algebraic geometry, which can be applied to identifiability problems in machine learning. In particular, we applied it to prove the generic identifiability of three examples of mixtures of ranking models, including mixtures of BTL models, which was unsolved in the literature. 

One limitation of our work is that our results are for mixtures of two components. In principle, our general framework can also be applied for mixtures with more than two components. A natural future direction is then to apply our framework to the generic identifiability problem for mixtures of arbitrarily many components.
Another natural direction is the problem of polynomial-learning, i.e., whether parameters can be computed in polynomial time using a polynomial number of samples, assuming identifiability. 

\mypara{Societal impacts.} Our work is theoretical in nature; thus, we foresee no immediate negative societal impact. We are of the opinion that our theoretical framework may enhance the understanding of mixtures of ranking models and inspire the development of improved learning methods for them, which may have a positive impact in practice. We also perceive that our general framework for proving generic identifiability can be useful to researchers working on other machine learning models.

\section*{Acknowledgement}
We would like to thank Bernd Sturmfels for kindly informing us the rich literature on the identifiability problem from algebraic statistics. The first XZ would like to thank Professor Nigel Boston for helpful discussions and suggestions from a Mathematician perspective, and thank Julia Lindberg and Jose Israel Rodriguez for helpful conversations. The second XZ would like to thank his supervisor Jochen Heinloth for pointing out an error in a preliminary version of this paper and many helpful discussions on Theorem~\ref{main-result}.




\newpage

\bibliographystyle{unsrt} 

\bibliography{NIPS2022}


\newpage
\appendix

\begin{center}
	\textbf{\LARGE Supplementary Material }
\end{center}

\begin{center}
	\textbf{\large On the Identifiability of Mixtures of Ranking Models}
\end{center}

We provide the supplementary material in section order.

\section{Supplementary material for Section~\ref{sec:background}} \label{apn:background}
In this section, we will cover an example for non-identifiability and describe the MNL model with 3-slate and the PL model.

\subsection{An example of non-identifiability of mixture of ranking models}\label{apn:non-identifiable-example}

An example showing that mixtures of BTL ranking models may not be identifiable, suggested by~\cite{oh2014learning} and~\cite{chierichetti2018learning}, is provided here.

For some $n \geq 3$ and some $t > 0$, consider the following two latent uniform mixture parameters: 
\begin{enumerate}
    \item $a_1=a_2=t,b_1=b_2=\dfrac{1}{t},a_i=b_i=1$ for each $i \in \{3,4,\dots\}$.
    \item $a_1=b_2=t,b_1=a_2=\dfrac{1}{t},a_i=b_i=1$ for each $i \in \{3,4,\dots\}$.
\end{enumerate}
Let $ \Prob{i \succ j} = \dfrac{1}{2} \dfrac{a_i}{a_i+a_j} + \dfrac{1}{2}\dfrac{b_i}{b_i+b_j}$.
It is not hard to verify that the probabilities $\Prob{i\succ j}$ are equal for all $i,j \in [n]$ (see Theorem 2 in~\cite{chierichetti2018learning}). Therefore, for the parameters $\bm{a} = (t,t,1,\dots,1)$ and $\bm{b}=\left(\dfrac{1}{t},\dfrac{1}{t},1\dots,1\right)$, the following equation system is non-identifiable for any fixed $t$:
\begin{align*}
F(\bm{x},\bm{y};\bm{a},\bm{b})=
\begin{cases}
x_1-t=0, \\
y_1-t=0, \\
\dfrac{1}{2}\dfrac{x_1}{x_1+x_i} + \dfrac{1}{2}\dfrac{y_1}{y_1+y_i} - \dfrac{1}{2} = 0, & \forall i =2,3,\dots,n, \\
\dfrac{1}{2}\dfrac{x_2}{x_2+x_i} + \dfrac{1}{2}\dfrac{y_2}{y_2+y_i} - \dfrac{1}{2} = 0, & \forall i =3,4,\dots,n.
\end{cases}
\end{align*}

This example provides infinitely many choices of the parameters $(\bm{a}, \bm{b})$ which lead to non-identifiability.
Nonetheless, as the results of our paper suggest, we believe that such choices of $(\bm{a},\bm{b})$ constitute special cases which might justifiably be assumed to be avoided in practice.

\subsection{MNL model with 3-slate}

We now describe the mixtures of MNL model, and then provide details of what it means for the model to be identifiable.

\subsubsection{Probabilistic model}

An MNL model over $n$ items assigns probabilities for a slate, where a slate is a subset of all items. It gives the probability of an item being selected from a slate. Mathematically, for a slate $S = \{s_1,s_2,\dots,s_k\} \subseteq [n]$, we have 
\[
\Prob{s_i \mbox{ is selected from~} S} = \frac{a_{s_i}}{\sum_{j=1}^k a_{s_j}},
\]
where $a_{s_i}$ is the weight/score of item $s_i$. (A BTL model is a special case giving the probability of an item being selected from a slate of size two.)

For a mixture of two MNL models with $k$-slate, assuming the mixtures follow a Bernoulli distribution with parameters $(p_1, p_2)$, we obtain
\[
\Prob{s_i \mbox{ is selected from~} S} = p_1 \dfrac{a_{s_i}}{\sum_{j=1}^k a_{s_j}} + p_2 \dfrac{b_{s_i}}{\sum_{j=1}^k b_{s_j}}.
\]
Now suppose $k = 3$ and $n \geq 3$. Let $\bm{a}_{1:n}$ and $\bm{b}_{1:n}$ be the score parameters of the two mixtures. Then we obtain
\begin{equation}\label{eq:eta_def_mnl}
 \eta_{i,j,k} = p_1 \dfrac{ a_i}{a_i + a_j + a_k} +  p_2\dfrac{b_i}{b_i + b_j + b_k}. \quad \forall i < j < k \in [n].
\end{equation}
We choose to scale up $\bm{a}$ by multiplying by a constant such that $a_1 = 1$, and similarly manipulate $\bm{b}$ to have $b_1 = 1$. This does not affect the values of the $\eta_{i,j,k}$'s. 

\subsubsection{Identifiability}

When we consider $(p_1, p_2)$ to be unknown, our domain $Q_{MNL,p}^{2n-1}$ of $(\bm{a},\bm{b},p)$ is defined as $Q_{MNL,p}^{2n-2} \times (0,1) \subseteq \real^{2n-1}$, where $Q^{2n-2}_{MNL} := \prod_{i=1}^{2n-2}\left[r_i,R_i\right] \subseteq \mathbb{R}^{2n-2}$ is the domain of $(\bm{a}_{2:n},\bm{b}_{2:n})$, with $R_i > r_i > 0$. We then solve the equation system
\begin{align}\label{ex:BTL_2mixtures_triplet_p}
\begin{cases}
x_1 = y_1 = 1, \\
p\dfrac{x_i}{x_i + x_j + x_k} +  (1-p)\dfrac{y_i}{y_i + y_j + y_k} = \eta_{i,j,k}. & \forall i < j < k \in [n].
\end{cases}
\end{align}
Accordingly, we define the set of bad parameters $N^{2n-1}_{MNL,p}$ in the same way as $N^{2n-1}_{BTL,p}$.

Given $p_1$ and $p_2$, we wish to solve the following equation system in the variables $(\bm{x}, \bm{y}) := (x_{1:n}, y_{1:n})$, where $n \geq 3$:
\begin{align}\label{ex:BTL_2mixtures_triplet}
\begin{cases}
x_1 = y_1 = 1, \\
p_1 \dfrac{x_i}{x_i + x_j + x_k} + p_2 \dfrac{y_i}{y_i + y_j + y_k} = \eta_{i,j,k}. & \forall i < j < k \in [n].
\end{cases}
\end{align}

With $Q^{2n-2}_{MNL}$ being the domain of $(\bm{a}_{2:n},\bm{b}_{2:n})$, we can then define the set of bad parameters $N^{2n-2}_{MNL}$ in a similar fashion to $N^{2n-2}_{BTL}$.
 \begin{align}
\begin{split}
    N^{2n-2}_{MNL} = &\left\{(\bm{a}_{2:n},\bm{b}_{2:n}) \in Q^{2n-2}_{MNL}: \exists \left(\bm{a}_{2:n}^{\#},\bm{b}^{\#}_{2:n}\right) \in Q^{2n-2}_{MNL}, \text{ s.t.}  \left(\bm{a}^{\#}_{2:n} \neq \bm{a}_{2:n} \vee \bm{b}^{\#}_{2:n} \neq \bm{b}_{2:n}\right) \right. \wedge \\
    & \left. \left(\forall i,j,k \in [n],  \eta_{i,j,k}(\bm{a}^{\#},\bm{b}^{\#}) = \eta_{i,j,k}(\bm{a},\bm{b}) \text{ for } a_1^{\#} =b_1^{\#} = a_1 = b_1 = 1\right) \right\}.
    \end{split}
    \end{align}

\subsection{Plackett-Luce model}\label{sec:pl_model}

We now consider the mixtures of Plackett-Luce model.

\subsubsection{Probabilistic model}

The Plackett-Luce model \cite{plackett1975analysis, luce1959} assigns probabilities to all the rankings $i_1 \succ i_2 \succ \dots \succ i_n$. They are from the set $\mathfrak{S}_n$ of all permutations of $\{1,2,\dots,n\}$. 

In particular, for any ranking from $\mathfrak{S}_n$, we have 
\[
\Prob{i_1 \succ i_2 \succ \dots \succ i_n} = \frac{a_{i_1}}{a_{i_1} + a_{i_2} + \dots + a_{i_n}} \times \frac{a_{i_2}}{a_{i_2} + a_{i_3} + \dots + a_{i_n}} \times \dots \times \frac{a_{i_{n-1}}}{a_{i_{n-1}} +a_{i_n}}.
\]
We choose to scale up $\bm{a}_{1:n}$ by multiplying by a constant such that $a_1 + a_2 + \dots + a_n = 1$, and similarly manipulate $\bm{b}_{1:n}$ to have $b_1 + b_2 + \dots + b_n= 1$. 

\subsubsection{Identifiability}

Suppose $(p_1,p_2)$ are unknown. Let 
\[
Q^{2n-2}_{PL} := \left\{\begin{pmatrix}\bm{a}_{2:n} \\ \bm{b}_{2:n}\end{pmatrix} \in \mathbb{R}_+^{2n-2} : \sum_{i=2}^n a_i < 1 \wedge \sum_{i=2}^n b_i < 1 \right\}
\]
be the domain of $(\bm{a}_{2:n},\bm{b}_{2:n})$ and $(0,1) \subseteq \real$ be the domain of $p_1$. Note that $Q^{2n-2}_{PL}$ is a polytope of positive volume. We set $\bm{a}_{2:n}$ and $\bm{b}_{2:n}$ as free parameters, with $a_1 = 1 - \sum_{i=2}^n a_i$ and $b_1 = 1 - \sum_{i=2}^n b_i$. We define $Q_{PL,p}^{2n-1}:= Q_{PL}^{2n-2} \times (0,1) \subseteq \real^{2n-1}$. The equation system we wish to solve is 
\begin{align}\label{ex:PL-2mixtures_p}
\begin{cases}
\eta_{\sigma(1), \dots, \sigma(n)} = p \displaystyle\prod_{i=1}^{n-1}\dfrac{x_{\sigma(i)}}{\sum_{j=i}^n x_{\sigma(j)}} + (1-p) \displaystyle\prod_{i=1}^{n-1}\dfrac{y_{\sigma(i)}}{\sum_{j=i}^n y_{\sigma(j)}}, & \forall \sigma \in \mathfrak{S}_n. \\
x_1 = 1 - \sum_{i=2}^n x_i, \quad y_1 = 1 - \sum_{i=2}^n y_i.
\end{cases}
\end{align}
Let $\bm{\eta}_{\mathfrak{S}_n} = (\bm{\eta}_\sigma)_{\sigma \in \mathfrak{S}_n}$ be the  $n!$ probabilities of rankings among $n$ items.
The corresponding bad set is
\begin{align}
\begin{split}
    N^{2n-1}_{PL,p} = & \left\{(\bm{a}_{2:n},\bm{b}_{2:n},p_1) \in Q^{2n-1}_{PL,p}: \exists \left(\bm{a}_{2:n}^{\#},\bm{b}^{\#}_{2:n},p^{\#}\right) \in Q^{2n-1}_{PL,p}, \text{ s.t.}  \left(\bm{a}^{\#}_{2:n} \neq \bm{a}_{2:n} \vee \bm{b}^{\#}_{2:n} \neq \bm{b}_{2:n} \vee p^{\#} \neq p_1\right) \wedge \right. \\
    & \left. \left(\bm{\eta}_{\mathfrak{S}_n}(\bm{a}^{\#},\bm{b}^{\#},p^{\#}) = \bm{\eta}_{\mathfrak{S}_n}(\bm{a},\bm{b},p_1) \text{ for } \begin{bmatrix}a_1^{\#} \\ b_1^{\#}\end{bmatrix} = \begin{bmatrix} 1- \sum_{i=2}^n a_i^{\#} \\ 1- \sum_{i=2}^n b_i^{\#}\end{bmatrix}, \begin{bmatrix}a_1 \\ b_1\end{bmatrix} = \begin{bmatrix} 1- \sum_{i=2}^n a_i \\ 1- \sum_{i=2}^n b_i\end{bmatrix} \right) \right\}.
\end{split}
\end{align}

Given $(p_1,p_2)$, to determine the scores of two mixtures, we try to solve the following system of equations in $(\bm{x}, \bm{y})$:
\begin{align}\label{ex:PL-2mixtures}
\begin{cases}
\eta_{\sigma(1), \sigma(2), \dots, \sigma(n)} =p_1 \displaystyle\prod_{i=1}^{n-1}\dfrac{x_{\sigma(i)}}{\sum_{j=i}^n x_{\sigma(j)}}+p_2 \displaystyle\prod_{i=1}^{n-1}\dfrac{y_{\sigma(i)}}{\sum_{j=i}^n y_{\sigma(j)}}, & \forall \sigma \in \mathfrak{S}_n, \\
x_1 = 1 - \sum_{i=2}^n x_i, \quad y_1 = 1 - \sum_{i=2}^n y_i.
\end{cases}
\end{align}

The domain of $(\bm{a}_{2:n},\bm{b}_{2:n})$ is $Q^{2n-2}_{PL}$. Then the set of bad parameters which do not achieve identifiability is
\begin{align}
\begin{split}
    N^{2n-2}_{PL} = &\left\{(\bm{a}_{2:n},\bm{b}_{2:n}) \in Q^{2n-2}_{PL}: \exists (\bm{a}_{2:n}^{\#},\bm{b}^{\#}_{2:n}) \in Q^{2n-2}_{PL}, \text{ s.t. } (\bm{a}^{\#}_{2:n} \neq \bm{a}_{2:n} \vee \bm{b}^{\#}_{2:n} \neq \bm{b}_{2:n}\right) \wedge \\
    & \left. \left(\bm{\eta}_{\mathfrak{S}_n}(\bm{a}^{\#},\bm{b}^{\#}) = \bm{\eta}_{\mathfrak{S}_n}(\bm{a},\bm{b})  \text{ for } \begin{bmatrix}a_1^{\#} \\ b_1^{\#}\end{bmatrix} = \begin{bmatrix} 1- \sum_{i=2}^n a_i^{\#} \\ 1- \sum_{i=2}^n b_i^{\#}\end{bmatrix}, \begin{bmatrix}a_1 \\ b_1\end{bmatrix} = \begin{bmatrix} 1- \sum_{i=2}^n a_i \\ 1- \sum_{i=2}^n b_i\end{bmatrix} \right) \right\}.
\end{split}
\end{align}
For any of these mixture models, we say it achieves generic identifiability if the corresponding set of bad parameters is of measure zero.
\subsubsection{Derivations for Section~\ref{sec:pl_model}}\label{apn:pl-three}
Let $\{i: i=1,\ldots,n\}$ denote a set of $n$ alternatives. A ranking of $(1,\ldots,n)$ determined by $\sigma \in \mathfrak{S}_n$ is an ordering
\[
\sigma(1) \succ \sigma(2) \succ \cdots \succ \sigma(n).
\]
Given a parameter $\bm{\theta}=(\theta_1,\ldots,\theta_n)$, the probability of the ranking $\sigma$ is given by
\[
\Prob{\sigma}:=\prod_{i=1}^n \dfrac{\theta_{\sigma(i)}}{\sum_{j \geq i} \theta_{\sigma(j)}}.
\]
We will prove the following statement for $n \geq 3$ by induction:
\[
\Prob{\sigma(1)\succ\sigma(2)\succ others} = \frac{\theta_{\sigma(1)}}{\sum_{i=1}^n\theta_{\sigma(i)}}\frac{\theta_{\sigma(2)}}{\sum_{i=2}^n\theta_{\sigma(i)}}.
\]
For the base case $n=3$, we see that by definition, 
\[
\Prob{\sigma(1)\succ\sigma(2)\succ others} = \Prob{\sigma(1)\succ\sigma(2)\succ \sigma(3)} =  \frac{\theta_{\sigma(1)}}{\sum_{i=1}^3\theta_{\sigma(i)}}\frac{\theta_{\sigma(2)}}{\sum_{i=2}^3\theta_{\sigma(i)}}\frac{\theta_{\sigma(3)}}{\theta_{\sigma(3)}} = \frac{\theta_{\sigma(1)}}{\sum_{i=1}^3\theta_{\sigma(i)}}\frac{\theta_{\sigma(2)}}{\sum_{i=2}^3\theta_{\sigma(i)}}.
\]
For the inductive step, assume we have 
\begin{align}\label{eq:n-1case}
\Prob{\sigma(1)\succ\sigma(2)\succ other\ n-3\ items} = \frac{\theta_{\sigma(1)}}{\sum_{i=1}^{n-1}\theta_{\sigma(i)}}\frac{\theta_{\sigma(2)}}{\sum_{i=2}^{n-1}\theta_{\sigma(i)}}.
\end{align}
We then split our case of $n$ into $n-2$ subcases:
\begin{align*}
\begin{split}
\Prob{\sigma(1)\succ\sigma(2)\succ others} &= \sum_{i\neq \sigma(1),\sigma(2)} \Prob{\sigma(1)\succ\sigma(2)\succ i \succ other\ n-3\ items}.
\end{split}
\end{align*}
Note that the probability formula for $\sigma(2) \succ i \succ other \ n-3 \ items$ does not depend on $\theta_{\sigma(1)}$ and has $n-1$ items in it. Then by the hypothesis~\eqref{eq:n-1case}, we further obtain
\begin{align*}
\begin{split}
\Prob{\sigma(1)\succ\sigma(2)\succ i \succ other\ n-3\ items} = \frac{\theta_{\sigma(1)}}{\sum_{j = 1}^n \theta_{\sigma(j)}} \frac{\theta_{\sigma(2)}}{\sum_{j = 2}^n \theta_{\sigma(j)}} \frac{\theta_i}{\sum_{k \neq \sigma(1),\sigma(2)}^n \theta_{k}}.
\end{split}
\end{align*}
Hence,
\[
\Prob{\sigma(1)\succ\sigma(2) \succ others}= \frac{\theta_{\sigma(1)}}{\sum_{j = 1}^n \theta_{\sigma(j)}} \frac{\theta_{\sigma(2)}}{\sum_{j = 2}^n \theta_{\sigma(j)}} \sum_{i\neq \sigma(1),\sigma(2)}\frac{\theta_i}{\sum_{k \neq \sigma(1),\sigma(2)}^n \theta_{k}}=\frac{\theta_{\sigma(1)}}{\sum_{j = 1}^n \theta_{\sigma(j)}} \frac{\theta_{\sigma(2)}}{\sum_{j = 2}^n \theta_{\sigma(j)}}.
\]
Without loss of generality, we assumed that $\sum_{i=1}^n \theta_i = 1$. Thus, we conclude
\[
\Prob{\sigma(1)\succ\sigma(2) \succ others} =\frac{\theta_{\sigma(1)}}{1} \frac{\theta_{\sigma(2)}}{1 - \theta_{\sigma(1)}} = \frac{\theta_{\sigma(1)}\theta_{\sigma(2)}}{1 - \theta_{\sigma(1)}} .
\]
For two mixtures, we then have 
\[
\Prob{\sigma(1)\succ\sigma(2) \succ others; \mbox{ two mixtures}} = p_1 \frac{a_{\sigma(1)}a_{\sigma(2)}}{1 - a_{\sigma(1)}} + p_2 \frac{b_{\sigma(1)}b_{\sigma(2)}}{1 - b_{\sigma(1)}},
\]
where the $a_i$'s are parameters for the first mixture and the $b_i$'s are parameters for the second mixture.

Finally, the last formula to be proved is
\begin{align*}
\begin{split}
\Prob{\sigma(1) \succ others; \mbox{ two mixtures}} &= \sum_{i\neq \sigma(1)} \Prob{\sigma(1)\succ i \succ others; \mbox{ two mixtures }} \\
& = p_1 \frac{a_{\sigma(1)}}{1 - a_{\sigma(1)}} \sum_{i \neq \sigma(1)} a_{i} + p_2 \frac{b_{\sigma(1)}}{1 - b_{\sigma(1)}} \sum_{i \neq \sigma(1)} b_{i} \\
& = p_1 \frac{a_{\sigma(1)}}{1 - a_{\sigma(1)}} (1 - a_{\sigma(1)}) + p_2 \frac{b_{\sigma(1)}}{1 - b_{\sigma(1)}} (1 - b_{\sigma(1)}) \\
& = p_1 a_{\sigma(1)} + p_2 b_{\sigma(1)}.
\end{split}
\end{align*}

\section{Supplementary material for Section~\ref{sect:main-result}}
\label{apn:alg-geo-prelim}

We first review some terminology and results from algebraic geometry, and then give the proofs for our main theory.

\subsection{Reminder from algebraic geometry}

We now introduce some definitions and results from algebraic geometry along with some guiding examples. More details may be found in any of the standard textbooks in algebraic geometry, e.g., \cite{MR0463157, MR4225278, MR1917232}.

\subsubsection{Definitions}

Let $\mathbb{C}[x_1,\ldots,x_n]$ be the polynomial ring in $n$ variables over $\mathbb{C}$.

\begin{definition}[Zero set]\label{zero-set}
For any subset $S \subseteq \mathbb{C}[x_1,\ldots,x_n]$, we define the \textit{zero set} of $S$ to be the common zeros of all elements in $S$, namely
\[
\mathbb{V}(S):=\{x \in \mathbb{C}^n: f(x)=0 \text{ for all } f \in S\}.
\]
\end{definition}
Clearly, if $\mathfrak{a}$ is the ideal of $\mathbb{C}[x_1,\ldots,x_n]$ generated by $S$, then $\mathbb{V}(S)=\mathbb{V}(\mathfrak{a})$. Furthermore, Hilbert's Basis Theorem \cite[Corollary 7.6]{MR0242802} implies that the polynomial ring $\mathbb{C}[x_1,\ldots,x_n]$ is Noetherian, i.e., all its ideals are finitely generated. Hence, there exist finitely many elements $f_1,\ldots,f_r \in S$ such that $\mathbb{V}(S)=\mathbb{V}(f_1,\ldots,f_r)$.

\begin{proposition-definition}[Zariski topology]\label{Zari-topo}
The sets $\mathbb{V}(\mathfrak{a})$, where $\mathfrak{a}$ runs through the set of ideals of $\mathbb{C}[x_1,\ldots,x_n]$, are the closed sets of a topology on $\mathbb{C}^n$, called the \textit{Zariski topology}.
\end{proposition-definition}
If we consider $\mathbb{C}^n$ as a topological space equipped with the Zariski topology, we will denote this space by $\mathbb{A}^n_{\mathbb{C}}$ and call it the \textit{affine $n$-space over $\mathbb{C}$}. 

By basic results in topology, any non-empty open subset of $\mathbb{A}_{\mathbb{C}}^n$ is dense.
\begin{definition}[Affine variety]
A closed subset of $\mathbb{A}_{\mathbb{C}}^n$ is called an \textit{affine variety}.\footnote{Compared to the usual definition, we do not require irreducibility here.}

For example, $\mathbb{A}_{\mathbb{C}}^n$ itself is an affine variety. Moreover:
\begin{itemize}
\item
$\mathbb{V}(f) \subseteq \mathbb{A}_{\mathbb{C}}^1$ is an affine variety for any non-zero polynomial $f \in \mathbb{C}[x]$, consisting of $\deg f$ points in $\mathbb{C}$ (counted with multiplicity); and
\item
the hyperplane
\begin{align}\label{eq:hyperplane}
H(a_1,\ldots,a_n):=\left\lbrace (x_1,\ldots,x_n) \in \mathbb{A}_{\mathbb{C}}^n: \sum_{i=1}^n a_ix_i=0 \right\rbrace \subseteq \mathbb{A}_{\mathbb{C}}^n
\end{align}
is an affine variety for any $(a_1,\ldots,a_n) \in \mathbb{C}^n \setminus \{0\}$.
\end{itemize}
\end{definition}

Let $X \subseteq \mathbb{A}_{\mathbb{C}}^n$ be an affine variety.
\begin{definition}[Dimension]
The \textit{dimension} of $X$ is the dimension of its underlying topological space.

For example, we have
\begin{itemize}
\item
$\dim \mathbb{A}_{\mathbb{C}}^n=n$,
\item 
$\dim \mathbb{V}(f)=0$ for any non-zero polynomial $f \in \mathbb{C}[x]$,
\item
$\dim H(a_1,\ldots,a_n)=n-1$, with $H$ defined by equation~\eqref{eq:hyperplane}, for any $(a_1,\ldots,a_n) \in \mathbb{C}^n \setminus \{0\}$.
\end{itemize}
\end{definition}
\begin{definition}[Degree]
The \textit{degree} of $X$ is the number of points (counted with multiplicity) in the intersection $X \cap L$ for a generic affine $(n-\dim X)$-space $L \subseteq \mathbb{A}_{\mathbb{C}}^n$. 

For the preceding notion to be well-defined, we need to show that the number of points (counted with multiplicity) in the intersection $X \cap L$ is constant across generic affine $(n-\dim X)$-spaces $L \subseteq \mathbb{A}_{\mathbb{C}}^n$. A proof of that fact can be found in~\cite[\S (14.31)]{MR4225278}.

To provide some examples:
\begin{itemize}
\item $\deg \mathbb{A}_{\mathbb{C}}^n=1$, since the only affine $0$-space is $\{0\}$.
\item 
$\deg \mathbb{V}(f)=\#\mathbb{V}(f)=\deg f$ for any non-zero polynomial $f \in \mathbb{C}[x]$.
\item
$\deg H(a_1,\ldots,a_n)=1$, with $H$ defined by equation~\eqref{eq:hyperplane}, for any $(a_1,\ldots,a_n) \in \mathbb{C}^n \setminus \{0\}$.
\end{itemize}
\end{definition}
\begin{definition}\label{order>}
For two polynomials $\mathbf{P}_1(z),\mathbf{P}_2(z) \in \mathbb{Q}[z]$, we write $\mathbf{P}_1(z) \succ \mathbf{P}_2(z)$ if $\mathbf{P}_1(n) \geq \mathbf{P}_2(n)$ for all $n \gg 0, n \in \mathbb{Z}$.
\end{definition}
\subsubsection{Results}
\begin{theorem}[\cite{MR2568219}, Chapter I, Section B, Corollary 10]
For any non-zero polynomial $f \in \mathbb{C}[x_1,\ldots,x_m]$, the set $\mathbb{V}(f) \subseteq \mathbb{C}^m$ is of $\lambda_m^{\mathbb{C}}$-measure zero.
\end{theorem}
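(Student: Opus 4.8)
The plan is to prove the statement by induction on the number of variables $m$, using Fubini--Tonelli as the engine that reduces the $m$-variable case to the $(m-1)$-variable case. For the base case $m=1$, a non-zero polynomial $f \in \mathbb{C}[x_1]$ has at most $\deg f$ roots, so $\mathbb{V}(f)$ is a finite subset of $\mathbb{C} \cong \mathbb{R}^2$ and therefore has $\lambda_1^{\mathbb{C}}$-measure zero.

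For the inductive step, I would assume the claim for $m-1$ variables and take a non-zero $f \in \mathbb{C}[x_1,\ldots,x_m]$. If $f$ is a non-zero constant then $\mathbb{V}(f)=\emptyset$ and we are done, so assume $f$ is non-constant; after relabeling coordinates we may assume $f$ genuinely involves $x_m$ and write
\[
f(x_1,\ldots,x_m)=\sum_{k=0}^d c_k(x_1,\ldots,x_{m-1})\,x_m^k,
\]
where $d \geq 1$ and the leading coefficient $c_d \in \mathbb{C}[x_1,\ldots,x_{m-1}]$ is a non-zero polynomial. Since $\mathbb{V}(f)$ is closed, being the zero set of the continuous map $f$, it is Borel measurable, and the product decomposition $\mathbb{C}^m \cong \mathbb{C}^{m-1}\times\mathbb{C}$ identifies $\lambda_m^{\mathbb{C}}$ with the product of $\lambda_{m-1}^{\mathbb{C}}$ and $\lambda_1^{\mathbb{C}}$. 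Tonelli's theorem (the integrand is non-negative) then gives
\[
\lambda_m^{\mathbb{C}}(\mathbb{V}(f))=\int_{\mathbb{C}^{m-1}} \lambda_1^{\mathbb{C}}\!\big(\{x_m \in \mathbb{C}: f(x_1,\ldots,x_m)=0\}\big)\, d\lambda_{m-1}^{\mathbb{C}}(x_1,\ldots,x_{m-1}).
\]

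The remaining point is to show the inner slice measure vanishes for $\lambda_{m-1}^{\mathbb{C}}$-almost every $(x_1,\ldots,x_{m-1})$. I would split the base $\mathbb{C}^{m-1}$ into the locus where $c_d \neq 0$ and its complement $\mathbb{V}(c_d)$. On the former, $x_m \mapsto f(x_1,\ldots,x_m)$ is a genuine polynomial of degree exactly $d$ in $x_m$, hence has at most $d$ roots, so the slice has $\lambda_1^{\mathbb{C}}$-measure zero by the base case. On $\mathbb{V}(c_d)$ the slice may be large, even all of $\mathbb{C}$, but $\mathbb{V}(c_d)$ itself has $\lambda_{m-1}^{\mathbb{C}}$-measure zero by the induction hypothesis, so it contributes nothing to the integral. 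Thus the integrand is zero almost everywhere, the integral vanishes, and $\lambda_m^{\mathbb{C}}(\mathbb{V}(f))=0$, completing the induction.

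The argument is essentially routine; the only place requiring care is the bookkeeping in the inductive step—namely choosing the leading coefficient $c_d$ correctly so that the slices are non-trivial exactly off a base set of measure zero, together with the (standard but necessary) verification that $\mathbb{V}(f)$ is measurable so that Tonelli legitimately applies. I expect no serious obstacle beyond this careful case split.
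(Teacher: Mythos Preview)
Your proof is correct. The paper does not actually prove this statement---it is quoted verbatim as a result from Gunning's book and used as a black box---so there is no ``paper's own proof'' to compare against directly. However, the paper does give a full proof of the closely analogous real version (Lemma~\ref{measure-0}: $\mathbb{V}(f)\cap\mathbb{R}^m$ has $\lambda_m$-measure zero), and that proof follows exactly your induction-plus-Fubini template. The only cosmetic difference is in how the exceptional base locus is chosen in the inductive step: you split off the single hypersurface $\mathbb{V}(c_d)$ cut out by the leading coefficient, whereas the paper takes the common zero locus $N_{m-1}=\bigcap_i \mathbb{V}(f_i)$ of \emph{all} coefficients. Either choice works, since in both cases the complement forces the fiber polynomial in $x_m$ to be non-zero (hence to have finitely many roots), and the excluded set has measure zero by induction. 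Your version is arguably the cleaner bookkeeping.
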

\begin{lemma}\label{closed-0}
Any Zariski closed proper subset of $\mathbb{A}_{\mathbb{C}}^m$ is of $\lambda_m^{\mathbb{C}}$-measure zero.
\end{lemma}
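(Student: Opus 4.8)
The plan is to deduce the lemma as an immediate corollary of the theorem stated just above, which asserts that the zero set $\mathbb{V}(f)$ of any single non-zero polynomial $f \in \mathbb{C}[x_1,\ldots,x_m]$ has $\lambda_m^{\mathbb{C}}$-measure zero. The whole strategy is to exhibit one such $f$ whose zero set contains the given closed subset, and then conclude by monotonicity of the measure.

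First I would recall that, by the definition of the Zariski topology (Proposition-Definition~\ref{Zari-topo}), any Zariski closed subset $X \subseteq \mathbb{A}_{\mathbb{C}}^m$ is of the form $X = \mathbb{V}(\mathfrak{a})$ for some ideal $\mathfrak{a} \subseteq \mathbb{C}[x_1,\ldots,x_m]$. The key observation, which uses that $X$ is a \emph{proper} subset, is that $\mathfrak{a}$ must contain a non-zero polynomial: if $\mathfrak{a}=(0)$ were the zero ideal, then $\mathbb{V}(\mathfrak{a})=\mathbb{C}^m$, contradicting $X \subsetneq \mathbb{A}_{\mathbb{C}}^m$. Fix any such non-zero $f \in \mathfrak{a}$.

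Since $f \in \mathfrak{a}$, every common zero of all elements of $\mathfrak{a}$ is in particular a zero of $f$, so $X = \mathbb{V}(\mathfrak{a}) \subseteq \mathbb{V}(f)$. The cited theorem then gives $\lambda_m^{\mathbb{C}}(\mathbb{V}(f))=0$. Because $X$ is closed, hence Borel and thus $\lambda_m^{\mathbb{C}}$-measurable, monotonicity of the measure yields $\lambda_m^{\mathbb{C}}(X) \leq \lambda_m^{\mathbb{C}}(\mathbb{V}(f)) = 0$, which is the desired conclusion.

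I do not anticipate a genuine obstacle here, since the statement is essentially a one-line consequence of the preceding theorem. The only point requiring a moment's care is the reduction step: confirming that properness of $X$ forces its defining ideal to contain a non-zero polynomial, so that a single such polynomial suffices to dominate $X$ by a measure-zero zero set.
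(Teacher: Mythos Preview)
Your proposal is correct and follows essentially the same approach as the paper: write the proper closed set as $\mathbb{V}(f_1,\ldots,f_r)$, pick a non-zero $f_i$, and conclude via the cited theorem that the set, being contained in $\mathbb{V}(f_i)$, has measure zero. The only cosmetic difference is that the paper invokes completeness of the Lebesgue measure to handle the subset, whereas you note that $X$ is Euclidean-closed (hence Borel) and use monotonicity; both justifications are valid.
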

\begin{proof}
Let $Z \subsetneq \mathbb{A}_{\mathbb{C}}^m$ be a Zariski closed subset. We write 
\[
Z=\mathbb{V}(f_1,\ldots,f_r)=\bigcap_{i=1}^r \mathbb{V}(f_i)
\]
for some polynomials $f_i \in \mathbb{C}[x_1,\ldots,x_m]$. Since $Z \subsetneq \mathbb{A}_{\mathbb{C}}^m$, at least one of the $f_i$'s is non-zero, say $f_1$. By \cite[Chapter I, Section B, Corollary 10]{MR2568219}, it holds that $\mathbb{V}(f_1) \subseteq \mathbb{C}^m$ is of $\lambda_m^{\mathbb{C}}$-measure zero. As a subset of a set of $\lambda_m^{\mathbb{C}}$-measure zero, $Z$ is again of $\lambda_m^{\mathbb{C}}$-measure zero, since the Lebesgue measure on $\mathbb{C}^m$ is complete.
\end{proof}
\begin{theorem}[\cite{MR0463157}, Proof of Corollary III.9.10]
Let $X \subseteq \mathbb{P}_{\mathbb{C}}^n$ be a projective variety of dimension $r$. Then the degree of its Hilbert polynomial is $r$ and the coefficient of its leading term is $\frac{\deg X}{r!}$. In other words, the Hilbert polynomial of $X \subseteq \mathbb{P}_{\mathbb{C}}^n$ has the form
\[
\mathbf{P}_X(z)=\dfrac{\deg X}{r!}z^r+\text{lower-degree terms} \in \mathbb{Q}[z].
\]
In particular, if $r=0$, then $\mathbf{P}_X(z)=\deg X=\#X(\mathbb{C})$ (counted with multiplicity) is constant.
\end{theorem}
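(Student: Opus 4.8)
The plan is to prove both assertions at once by induction on $r=\dim X$, working with the homogeneous coordinate ring $S=\mathbb{C}[x_0,\ldots,x_n]/I(X)$. I take as the starting point the Hilbert--Serre theorem, which guarantees that the graded dimension $z\mapsto \dim_{\mathbb{C}} S_z$ agrees with a polynomial $\mathbf{P}_X(z)\in\mathbb{Q}[z]$ for all $z\gg 0$; this polynomial is by definition the Hilbert polynomial. First I would handle the base case $r=0$. Here $X$ is a finite scheme, so $S$ is Artinian in large degrees and $\dim_{\mathbb{C}} S_z$ stabilizes to the total length of $S$, which is precisely the number of points of $X$ counted with multiplicity, i.e.\ $\deg X=\#X(\mathbb{C})$. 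Thus $\mathbf{P}_X(z)=\deg X$ is a constant, a polynomial of degree $0=r$ with leading coefficient $\deg X/0!$, which also settles the final ``in particular'' clause.

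For the inductive step, suppose $r\geq 1$ and the statement holds in dimension $r-1$. The key device is a generic hyperplane section: I would choose a linear form $\ell$ whose hyperplane $H=\mathbb{V}(\ell)$ contains no irreducible component of $X$ and which is a non-zerodivisor on $S$. Then $S/\ell S$ is the homogeneous coordinate ring of $X\cap H$, a variety of dimension $r-1$ by the projective dimension theorem, and multiplication by $\ell$ gives a short exact sequence of graded modules
\[
0 \to S(-1) \xrightarrow{\ \ell\ } S \to S/\ell S \to 0.
\]
Passing to degree-$z$ pieces and using additivity of $\dim_{\mathbb{C}}$ yields, for $z\gg 0$, the first-difference relation $\mathbf{P}_{X\cap H}(z)=\mathbf{P}_X(z)-\mathbf{P}_X(z-1)$.

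The rest is bookkeeping. If $\mathbf{P}_X$ has (a priori unknown) degree $d$ with leading coefficient $c$, then the difference $\mathbf{P}_X(z)-\mathbf{P}_X(z-1)$ has degree $d-1$ and leading coefficient $c\,d$; comparing with the induction hypothesis $\deg\mathbf{P}_{X\cap H}=r-1$ forces $d=r$. Writing $\mathbf{P}_X(z)=c\,z^r+(\text{lower order})$, the leading coefficient of $\mathbf{P}_{X\cap H}$ is $c\,r$, while the induction hypothesis gives it as $\deg(X\cap H)/(r-1)!$. Finally I would invoke compatibility of degree with hyperplane sections: a generic affine $(n-r)$-plane $L$ may be written as $H\cap L'$ for a generic $(n-r+1)$-plane $L'$, so $X\cap L=(X\cap H)\cap L'$ and hence $\deg(X\cap H)=\deg X$. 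Equating the two expressions for the leading coefficient gives $c\,r=\deg X/(r-1)!$, i.e.\ $c=\deg X/r!$, completing the induction.

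The main obstacle is the genericity issue hidden in the inductive step: the clean short exact sequence above requires $\ell$ to be a non-zerodivisor on $S$. When $X$ is reduced this follows from prime avoidance, since one only needs $\ell$ to miss the finitely many minimal primes (the components) and the associated primes; but if $S$ carries embedded associated primes one cannot simply assume this. In that case I would instead track the kernel and cokernel of multiplication by $\ell$, both of which are supported on a proper closed subset of dimension strictly less than $r$ for generic $\ell$, and verify that their Hilbert polynomials have degree $<r-1$ so that they do not perturb the top-degree term of the first-difference identity. Establishing this genericity of support and the exactness of the dimension drop $\dim(X\cap H)=r-1$ is where the real technical care lies; everything downstream is the elementary finite-difference computation.
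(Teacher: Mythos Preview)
The paper does not give its own proof of this statement: it is quoted verbatim as a background result with a citation to Hartshorne \cite{MR0463157}, and is used as a black box in the proof of Theorem~\ref{main-result}. So there is no ``paper's proof'' to compare against.

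That said, your sketch is essentially the classical argument (and is the one Hartshorne gives in Chapter~I, \S7): induction on $\dim X$ via a generic hyperplane section, the short exact sequence $0\to S(-1)\xrightarrow{\ell}S\to S/\ell S\to 0$, and the resulting first-difference identity $\mathbf{P}_{X\cap H}(z)=\mathbf{P}_X(z)-\mathbf{P}_X(z-1)$. Your handling of the base case, the finite-difference bookkeeping, and the identification $\deg(X\cap H)=\deg X$ are all standard and correct. Your caveat about embedded primes is the right place to be careful; in practice one either works with the reduced structure (which does not change dimension or degree) or, as you indicate, argues that for generic $\ell$ the kernel of multiplication by $\ell$ is supported in dimension $<r$ and hence contributes a polynomial of degree $<r-1$ that does not affect the leading term. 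Either route closes the gap.
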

\begin{theorem}[Special case of \cite{MR2223407}, Theorem 5.13]\label{open}
Let $\mathscr{X} \subseteq \mathbb{A}^m_{\mathbb{C}} \times \mathbb{P}^n_{\mathbb{C}}$ be a closed subvariety. For any $t \in \mathbb{A}^m_{\mathbb{C}}$, let $\mathscr{X}_t \subseteq \mathbb{P}^n_{\mathbb{C}}$ be the fiber over $t$ under the projection $\mathscr{X} \subseteq \mathbb{A}^m_{\mathbb{C}} \times \mathbb{P}^n_{\mathbb{C}} \to \mathbb{A}^m_{\mathbb{C}}$ and $\mathbf{P}_t(z) \in \mathbb{Q}[z]$ its Hilbert polynomial. If $f$ is the minimal element in $\mathscr{P}:=\{\mathbf{P}_t(z): t \in \mathbb{A}^m_{\mathbb{C}}\}$ with respect to $\succ$ (in the sense of Definition \ref{order>}), then
\[
S_f:=\{t \in \mathbb{A}^m_{\mathbb{C}}: \mathbf{P}_t(z)=f\} \subseteq \mathbb{A}^m_{\mathbb{C}}
\]
is a Zariski open dense subset.
\end{theorem}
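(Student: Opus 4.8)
The plan is to obtain Theorem~\ref{open} as a direct specialization of the general semicontinuity statement \cite[Theorem 5.13]{MR2223407}, exploiting the single extra feature of our situation that the base $\mathbb{A}^m_{\mathbb{C}}$ is irreducible. The geometric content being invoked is the upper semicontinuity of the Hilbert polynomial in a projective family: as $t$ varies, the fiber Hilbert polynomial $\mathbf{P}_t(z)$ can only ``jump up'' (with respect to the eventual-domination order $\succ$ of Definition~\ref{order>}) along closed strata of the base, so its $\succ$-minimal value is attained exactly on an open subset. The task is then to convert this openness into the asserted openness \emph{and} density.

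First I would check that the hypotheses of the cited general theorem are satisfied in our setting. The projection $\mathbb{A}^m_{\mathbb{C}} \times \mathbb{P}^n_{\mathbb{C}} \to \mathbb{A}^m_{\mathbb{C}}$ is proper, being the base change of $\mathbb{P}^n_{\mathbb{C}} \to \operatorname{Spec}\mathbb{C}$; since $\mathscr{X}$ is a closed subvariety, the restricted projection $\pi\colon \mathscr{X} \to \mathbb{A}^m_{\mathbb{C}}$ is projective, and the base $\mathbb{A}^m_{\mathbb{C}}$ is Noetherian. These are precisely the finiteness conditions under which the fiberwise Hilbert polynomial $t \mapsto \mathbf{P}_t(z)$ is well defined and behaves semicontinuously, so applying \cite[Theorem 5.13]{MR2223407} with base $S=\mathbb{A}^m_{\mathbb{C}}$ yields that the locus $S_f=\{t : \mathbf{P}_t(z)=f\}$ on which the Hilbert polynomial equals the minimal value $f$ is Zariski open. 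To upgrade ``open'' to ``open dense,'' note that $f$ is by definition the $\succ$-minimal element of the set $\mathscr{P}$ of \emph{attained} Hilbert polynomials, so there is at least one $t_0$ with $\mathbf{P}_{t_0}(z)=f$; hence $S_f \neq \varnothing$. Because $\mathbb{A}^m_{\mathbb{C}}$ is irreducible, every nonempty Zariski open subset is dense (as recorded just after Proposition-Definition~\ref{Zari-topo}), and therefore $S_f$ is open dense.

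For intuition on why the minimal locus is open, and to isolate the delicate point, one can recall the mechanism behind the cited theorem. Over the integral scheme $\mathbb{A}^m_{\mathbb{C}}$, generic flatness makes $\pi$ flat over some dense open $U$, and constancy of the Hilbert polynomial on flat projective families over a connected base (see \cite{MR0463157}) forces $\mathbf{P}_t(z)$ to equal a single polynomial $\mathbf{P}_0(z)$ for all $t \in U$; semicontinuity then identifies $\mathbf{P}_0(z)$ with the minimum $f$, so $U \subseteq S_f$ gives density directly. The main obstacle I would watch is the \emph{direction} of semicontinuity relative to the order $\succ$ of Definition~\ref{order>}: one must confirm that the special (non-flat) fibers have \emph{larger} Hilbert polynomial, so that the $\succ$-minimal value is the generic one and $S_f$ is the open stratum rather than a closed one. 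This sign/direction check is the only genuinely substantive step, as everything else is a routine verification of hypotheses together with the irreducibility of $\mathbb{A}^m_{\mathbb{C}}$.
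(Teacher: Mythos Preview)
Your proposal is correct and matches the paper's approach: the paper does not give its own proof of this statement but simply records it as a special case of \cite[Theorem 5.13]{MR2223407}, and your writeup spells out precisely the specialization implicit in that citation (checking the projectivity and Noetherian hypotheses, then invoking irreducibility of $\mathbb{A}^m_{\mathbb{C}}$ to pass from nonempty open to dense). There is nothing to add.
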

In Theorem \ref{open}, it is often useful to view $\mathscr{X}$ as a family of projective varieties $\mathscr{X}_t \subseteq \mathbb{P}^n_{\mathbb{C}}$ parametrized by $t \in \mathbb{A}^m_{\mathbb{C}}$.

\subsection{Proof for Section~\ref{sect:main-result}}\label{apn:main-results}

Here, we provide the proofs for the two main theorems. 
\subsubsection{Proof of Theorem~\ref{main-result}}\label{apn:main-results-complex}
The proof is divided into two steps. The first step shows that the conclusion holds ``generically'' with respect to the Zariski topology, and the second step translates the property in Zariski topology into Lebesgue measure.
\begin{proof}[Proof of Theorem~\ref{main-result}]
Geometrically, one can view $\mathscr{P}(\bm{t},\bm{x})$ as a family of affine varieties sitting in $\mathbb{A}^n_{\mathbb{C}}$,
\[
\begin{tikzcd}
\mathscr{Q} \ar[r,hook] \ar[d] & \mathbb{A}^m_{\mathbb{C}} \times \mathbb{A}^n_{\mathbb{C}} \\
\mathbb{A}^m_{\mathbb{C}}
\end{tikzcd}
\]
parametrized by $\mathbb{A}^m_{\mathbb{C}}$, such that the fiber $\mathscr{Q}_{\bm{a}}$ over $\bm{a} \in \mathbb{A}^m_{\mathbb{C}}$ is (isomorphic to) the affine variety $\mathbb{V}(\mathscr{P}(\bm{a},\bm{x})) \subseteq \mathbb{A}^n_{\mathbb{C}}$. Formally $\mathscr{P}(\bm{a},\bm{x})$ is the image of $\mathscr{P}(\bm{t},\bm{x})$ under the evaluation map
\[
\varphi_{\bm{a}}: \mathbb{C}[\bm{t},\bm{x}] \to \mathbb{C}[\bm{x}] \text{ defined by } \bm{t} \mapsto \bm{a}, \bm{x} \mapsto \bm{x}.
\]
Fix an open embedding $\mathbb{A}_{\mathbb{C}}^n \hookrightarrow \mathbb{P}^n_{\mathbb{C}}$ given by $(x_1,\ldots,x_n) \mapsto [x_0=1:x_1:\ldots:x_n]$. Consider the (relative) projective closure of the above diagram with respect to $\bm{x}$ and this embedding 
\[
\begin{tikzcd}
\overline{\mathscr{Q}} \ar[r,hook] \ar[d] & \mathbb{A}^m_{\mathbb{C}} \times \mathbb{P}^n_{\mathbb{C}} \\
\mathbb{A}^m_{\mathbb{C}}
\end{tikzcd}
\]
i.e., $\overline{\mathscr{Q}}_{\bm{a}}|_{x_0=1}=\mathscr{Q}_{\bm{a}}$ for any $\bm{a} \in \mathbb{A}^m_{\mathbb{C}}$. By the minimality of $\overline{\mathscr{Q}_{\bm{a}}}$, it follows that
\begin{equation}\label{source-of-trouble}
\overline{\mathscr{Q}_{\bm{a}}} \subseteq \overline{\mathscr{Q}}_{\bm{a}} \text{ for } \bm{a} \in \mathbb{A}^m_{\mathbb{C}}.
\end{equation}
The containment~\eqref{source-of-trouble} does not necessarily hold with equality, i.e., $\overline{\mathscr{Q}}_{\bm{a}}$ is not the projective closure of $\mathscr{Q}_{\bm{a}}$. For example, for the single polynomial $(1-tx)x=0$, $\overline{\mathscr{Q}_{0}}$ is defined by $x=0$ in $\mathbb{P}^1$ (and hence consists of one point $[0:1]$) while $\overline{\mathscr{Q}}_0$ is defined by $x_0x=0$ in $\mathbb{P}^1$ (and hence consists of two points $[0:1]$ and $[1:0]$). This is the source of much trouble. However, we can prove that generically, the containment~\eqref{source-of-trouble} is an equality (in the previous example, it is an equality except for $t=0$).

\begin{lemma}\label{reduction}
$\overline{\mathscr{Q}_{\bm{a}}}=\overline{\mathscr{Q}}_{\bm{a}}$ for all $\bm{a} \in \mathbb{C}^m \setminus Z(\bm{t})$.
\end{lemma}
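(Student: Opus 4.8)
The containment~\eqref{source-of-trouble} already gives $\overline{\mathscr{Q}_{\bm{a}}} \subseteq \overline{\mathscr{Q}}_{\bm{a}}$, so the plan is to prove the reverse inclusion for $\bm{a} \notin Z(\bm{t})$. Since the two sides agree on the affine chart $\{x_0=1\}$ (where each equals $\mathscr{Q}_{\bm{a}}$), any discrepancy must live on the hyperplane at infinity $\{x_0=0\}$, so the real task is to show that $\overline{\mathscr{Q}}_{\bm{a}}$ acquires no extra points there. Ideal-theoretically, writing $I_{\bm{a}}:=\langle \mathscr{P}(\bm{a},\bm{x})\rangle$ and letting $(\cdot)^h$ denote homogenization in the $\bm{x}$-variables (introducing $x_0$), the variety $\overline{\mathscr{Q}_{\bm{a}}}$ is cut out by $(I_{\bm{a}})^h$, while $\overline{\mathscr{Q}}_{\bm{a}}$ is cut out by the specialization $(I(\bm{t},\bm{x})^h)|_{\bm{t}=\bm{a}}$. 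I would therefore reduce the lemma to the statement that \emph{homogenization commutes with specialization} at $\bm{a}$, i.e. $(I(\bm{t},\bm{x})^h)|_{\bm{t}=\bm{a}} = (I_{\bm{a}})^h$. The inclusion $(I(\bm{t},\bm{x})^h)|_{\bm{t}=\bm{a}} \subseteq (I_{\bm{a}})^h$ holds for \emph{every} $\bm{a}$ (this merely reproves~\eqref{source-of-trouble}, since each specialized homogenized generator equals $x_0^{d}$ times a generator of $(I_{\bm{a}})^h$ for some $d\ge 0$), so the content is the reverse inclusion.

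To obtain it, the plan is to exploit the block order $\succ$ with $\bm{x}\succ\bm{t}$ together with the defining property of $Z(\bm{t})$: for $\bm{a}\notin Z(\bm{t})$, \emph{no} coefficient of any Gr\"obner basis element $g_i(\bm{t},\bm{x})$ vanishes at $\bm{a}$, so the entire $\bm{x}$-support of each $g_i$ is preserved under $\bm{t}\mapsto\bm{a}$. First, $G(\bm{a},\bm{x})=\{g_1(\bm{a},\bm{x}),\ldots,g_s(\bm{a},\bm{x})\}$ still generates $I_{\bm{a}}$; this follows purely from $\langle G\rangle = I(\bm{t},\bm{x})$ by specializing the membership relations expressing the $g_i$ in terms of $\mathscr{P}$ and vice versa. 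Second, since the leading $\bm{x}$-coefficients (polynomials in $\bm{t}$) survive the specialization, the standard specialization theorem for Gr\"obner bases under such a block order guarantees that $G(\bm{a},\bm{x})$ remains a Gr\"obner basis of $I_{\bm{a}}$: the $S$-polynomial reductions witnessing Buchberger's criterion for $G$ specialize to valid reductions for $G(\bm{a},\bm{x})$ because no leading coefficient degenerates. Third, preservation of the full support means there is no degree drop, $\deg_{\bm{x}} g_i(\bm{a},\bm{x}) = \deg_{\bm{x}} g_i(\bm{t},\bm{x})$, so homogenization commutes on each basis element: $(g_i^h)|_{\bm{t}=\bm{a}} = (g_i(\bm{a},\bm{x}))^h$, with no spurious factor of $x_0$.

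Finally I would assemble these through the correspondence between Gr\"obner bases and homogenization. Taking the $\bm{x}$-part of the block order to be graded (a legitimate choice of block order with $\bm{x}\succ\bm{t}$), the homogenizations of a Gr\"obner basis generate the homogenization ideal, so $(I_{\bm{a}})^h = \langle (g_i(\bm{a},\bm{x}))^h : 1\le i\le s\rangle$. By the third point each generator equals $(g_i^h)|_{\bm{t}=\bm{a}}$, which lies in $(I(\bm{t},\bm{x})^h)|_{\bm{t}=\bm{a}}$; hence $(I_{\bm{a}})^h \subseteq (I(\bm{t},\bm{x})^h)|_{\bm{t}=\bm{a}}$, exactly the inclusion reverse to the easy one. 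The two inclusions give equality of ideals, and therefore $\overline{\mathscr{Q}}_{\bm{a}} = \overline{\mathscr{Q}_{\bm{a}}}$, as claimed.

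The crux — and the step I expect to be most delicate — is precisely this interaction of specialization with homogenization, the algebraic shadow of the degree-drop pathology flagged after~\eqref{source-of-trouble} (in the $(1-tx)x$ example, specializing $t=0$ lowers the $x$-degree and manufactures the stray point $[1:0]$ at infinity). Everything hinges on ruling this out, and the condition $\bm{a}\notin Z(\bm{t})$ is engineered to do exactly that: by forcing the whole support of every $g_i$ to persist, it simultaneously supplies the non-vanishing of leading coefficients needed for the Gr\"obner specialization theorem and the absence of degree drop needed for homogenization to commute. The remaining care lies in citing the specialization theorem in a form valid over the coefficient ring $\mathbb{C}[\bm{t}]$ (e.g.\ working with the reduced Gr\"obner basis) and in ensuring the $\bm{x}$-block order is graded, so that the homogenized basis genuinely generates $(I_{\bm{a}})^h$ rather than merely a subideal with the same affine part.
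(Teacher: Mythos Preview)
Your proposal is correct and follows essentially the same route as the paper: both arguments hinge on (i) the specialization theorem for Gr\"obner bases under the block order (the paper cites \cite[Theorem~2.1]{MR1854335}), (ii) the fact that the homogenized Gr\"obner basis cuts out the projective closure (the paper cites \cite[Chapter~8, Theorems~4 and~8]{MR3330490}), and (iii) the observation that $\bm{a}\notin Z(\bm{t})$ prevents degree drop so that $(^h g_i)(\bm{a}) = {}^h(g_i(\bm{a},\bm{x}))$. Your explicit caveat that the $\bm{x}$-block of the order must be graded for step~(ii) is a nice point of care that the paper leaves implicit in its citation.
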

\begin{proof}[Proof of Lemma~\ref{reduction}]
Note that
\begin{enumerate}
\item
$\mathscr{Q}_{\bm{a}}=\mathbb{V}(\mathscr{P}(\bm{a},\bm{x}))=\mathbb{V}(I(\bm{a},\bm{x}))=\mathbb{V}(g_1(\bm{a},\bm{x}),\ldots,g_s(\bm{a},\bm{x})) \subseteq \{\bm{a}\} \times \mathbb{A}^n_{\mathbb{C}} \cong \mathbb{A}^n_{\mathbb{C}}$. By our choice of $\bm{a}$, it follows that $I(\bm{a},\bm{x}) \neq 0$, and \cite[Theorem 2.1]{MR1854335} guarantees that $G(\bm{a},\bm{x})$ is a Gr\"{o}bner basis of $I(\bm{a},\bm{x})$. Thus,
\[
\overline{\mathscr{Q}_{\bm{a}}}=\mathbb{V}(^h(g_1(\bm{a},\bm{x})),\ldots,^h(g_s(\bm{a},\bm{x}))) \subseteq \mathbb{P}^n_{\mathbb{C}}
\]
by \cite[Chapter 8, Theorem 4 and 8]{MR3330490}, where $^h(g_i(\bm{a},\bm{x}))$ is the homogenization of $g_i(\bm{a},\bm{x})$ with respect to $\bm{x}$.
\item
$\mathscr{Q}=\mathbb{V}(\mathscr{P}(\bm{t},\bm{x}))=\mathbb{V}(I(\bm{t},\bm{x}))=\mathbb{V}(g_1(\bm{t},\bm{x}),\ldots,g_s(\bm{t},\bm{x})) \subseteq \mathbb{A}^m_{\mathbb{C}} \times \mathbb{A}^n_{\mathbb{C}}$. Hence, 
\[
\overline{\mathscr{Q}}=\mathbb{V}(^hg_1(\bm{t},\bm{x}),\ldots,^hg_s(\bm{t},\bm{x})) \subseteq \mathbb{A}^m_{\mathbb{C}} \times \mathbb{P}^n_{\mathbb{C}}
\]
by \cite[Chapter 8, Theorem 4 and 8]{MR3330490}, where $^hg_i(\bm{t},\bm{x})$ is the homogenization of $g_i(\bm{t},\bm{x})$ with respect to $\bm{x}$. This implies that
\[
\overline{\mathscr{Q}}_{\bm{a}}=\mathbb{V}((^hg_1(\bm{t},\bm{x}))(\bm{a}),\ldots,(^hg_s(\bm{t},\bm{x}))(\bm{a})) \subseteq \{\bm{a}\} \times \mathbb{P}^n_{\mathbb{C}} \cong \mathbb{P}^n_{\mathbb{C}}.
\]
\end{enumerate}
Note that $\bm{a} \in \mathbb{C}^m \setminus Z(\bm{t})$ is a sufficient and necessary condition for
\[
(^hg_i(\bm{t},\bm{x}))(\bm{a})=^h(g_i(\bm{a},\bm{x})), \text{ for } i=1,\ldots,s.
\]
This implies that $\overline{\mathscr{Q}_{\bm{a}}}=\overline{\mathscr{Q}}_{\bm{a}}$ for all $\bm{a} \in \mathbb{C}^m \setminus Z(\bm{t})$.
\end{proof}
\paragraph{Step 1:}
We prove that $\mathscr{P}(\bm{a},\bm{x})$ has exactly $\ell$ solutions in $\mathbb{C}$ (counted with multiplicity) for generic $\bm{a} \in \mathbb{A}_{\mathbb{C}}^m \setminus Z(\bm{t})$.

More precisely, there exists a Zariski open dense
subset $U_m \subseteq \mathbb{A}_{\mathbb{C}}^m$ such that $\mathscr{P}(\bm{a},\bm{x})$ has exactly $\ell$ solutions in $\mathbb{C}$ (counted with multiplicity) for any $\bm{a} \in U_m$.

For any $\bm{a} \in \mathbb{A}_{\mathbb{C}}^m \setminus Z(\bm{t})$, let $\mathbf{P}_{\bm{a}}(z) \in \mathbb{Q}[z]$ the Hilbert polynomial of $\overline{\mathscr{Q}}_{\bm{a}} \subseteq \mathbb{P}^n_{\mathbb{C}}$. Then by definition,
\begin{gather*}
\mathscr{P}(\bm{a},\bm{x}) \text{ has exactly } \ell \text{ solutions in } \mathbb{C} \text{ (counted with multiplicity)} \\
\Updownarrow \\
\dim \mathscr{Q}_{\bm{a}}=0 \text{ and } \deg \mathscr{Q}_{\bm{a}}=\ell \\
\Updownarrow \\
\dim \overline{\mathscr{Q}_{\bm{a}}}=0 \text{ and } \deg \overline{\mathscr{Q}_{\bm{a}}}=\ell \\
\Updownarrow \\
\dim \overline{\mathscr{Q}}_{\bm{a}}=0 \text{ and } \deg \overline{\mathscr{Q}}_{\bm{a}}=\ell \\
\Updownarrow \\
\mathbf{P}_{\bm{a}}(z)=\ell,
\end{gather*}
where the first equivalence holds since $\mathbb{C}$ is algebraically closed; the second equivalence follows from \cite[Proposition 1.11]{MR1137737}; the third equivalence follows from Lemma~\ref{reduction}; and the fourth equivalence follows from \cite[Proof of Corollary III.9.10]{MR0463157}. Hence, it is equivalent to show $\mathbf{P}_{\bm{a}}(z)=\ell$ for generic $\bm{a} \in \mathbb{A}_{\mathbb{C}}^m \setminus Z(\bm{t})$. Note that Assumption~\ref{assum:key-assumption2} implies
\[
\mathbf{P}_{\bm{a'}}(z)=\ell.
\]
\begin{claim}\label{claim:min}
$\mathbf{P}_{\bm{a'}}(z)$ is minimal in $\mathscr{P}:=\{\mathbf{P}_{\bm{a}}(z): \bm{a} \in \mathbb{A}_{\mathbb{C}}^m \setminus \{Z(\bm{t}) \cup E\}\}$ with respect to $\succ$ (in the sense of Definition \ref{order>}).
\end{claim}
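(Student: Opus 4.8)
The plan is to prove minimality by producing $\ell$ as a uniform lower bound on the whole family and then observing that $\bm{a'}$ attains it. First, since $\bm{a'} \notin Z(\bm{t})$, the chain of equivalences established immediately above applies and gives $\mathbf{P}_{\bm{a'}}(z) = \ell$ (the constant polynomial), coming from the fact that $\overline{\mathscr{Q}}_{\bm{a'}}$ is zero-dimensional of degree $\ell$. It therefore remains to show that $\mathbf{P}_{\bm{a}}(z) \succ \mathbf{P}_{\bm{a'}}(z)$ for every $\bm{a} \in \mathbb{A}^m_{\mathbb{C}} \setminus \{Z(\bm{t}) \cup E\}$, i.e.\ $\mathbf{P}_{\bm{a}}(n) \geq \ell$ for all $n \gg 0$.

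Fix such an $\bm{a}$. The key preliminary step is to invoke Lemma~\ref{reduction}: since $\bm{a} \notin Z(\bm{t})$, we have $\overline{\mathscr{Q}}_{\bm{a}} = \overline{\mathscr{Q}_{\bm{a}}}$, so $\mathbf{P}_{\bm{a}}(z)$ really is the Hilbert polynomial of the projective closure of the affine solution set $\mathscr{Q}_{\bm{a}} = \mathbb{V}(\mathscr{P}(\bm{a},\bm{x}))$, with no spurious points at infinity of the kind exhibited by $(1-tx)x=0$ at $t=0$. I would then split on $\dim \mathscr{Q}_{\bm{a}}$ (equivalently $\dim \overline{\mathscr{Q}_{\bm{a}}}$, since projective closure preserves dimension), using the shape of Hilbert polynomials recorded above. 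If $\dim \mathscr{Q}_{\bm{a}} = r \geq 1$, then $\mathbf{P}_{\bm{a}}(z) = \tfrac{\deg \overline{\mathscr{Q}_{\bm{a}}}}{r!}z^r + (\text{lower order})$ has positive degree and positive leading coefficient, so $\mathbf{P}_{\bm{a}}(n) \to +\infty$ and in particular exceeds $\ell$ for $n \gg 0$. If $\dim \mathscr{Q}_{\bm{a}} = 0$, then $\mathscr{Q}_{\bm{a}}$ is finite, its projective closure adds no points at infinity, and $\mathbf{P}_{\bm{a}}(z)$ is the constant $\deg \overline{\mathscr{Q}_{\bm{a}}}$, equal to the number of affine solutions counted with multiplicity; by Assumption~\ref{assum:key-assumption1} (applicable since $\bm{a} \notin E$) this is at least $\ell$. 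In either case $\mathbf{P}_{\bm{a}}(z) \succ \ell = \mathbf{P}_{\bm{a'}}(z)$, so $\ell$ is a lower bound for $\mathscr{P}$, attained at $\bm{a'}$; hence $\mathbf{P}_{\bm{a'}}(z)$ is minimal.

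The main obstacle is conceptual rather than computational, and it is exactly what makes Theorem~\ref{open} inapplicable as a black box: the semicontinuity result minimizes Hilbert polynomials of \emph{projective} fibers $\overline{\mathscr{Q}}_{\bm{a}}$, whose degree can be inflated by points at infinity unrelated to the affine solution count that Assumption~\ref{assum:key-assumption1} controls. The device that defuses this is Lemma~\ref{reduction}, which on the complement of $Z(\bm{t})$ identifies the projective fiber with the honest projective closure of the affine fiber; after this identification the affine lower bound transfers immediately, the positive-dimensional case being automatic since any positive-degree Hilbert polynomial eventually dominates a constant. A minor point to check is that $\bm{a'}$ itself is a legitimate index of $\mathscr{P}$ (i.e.\ lies off $E$ as well), which in applications is arranged by choosing $\bm{a'}$ generically; what is essential for the argument, however, is only that the lower bound $\ell$ is achieved.
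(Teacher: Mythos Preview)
Your proof is correct and essentially coincides with the paper's own argument: both split on the dimension of $\overline{\mathscr{Q}}_{\bm{a}}$, use the form of the Hilbert polynomial to dispatch the positive-dimensional case, and in the zero-dimensional case invoke Lemma~\ref{reduction} (to equate $\deg \overline{\mathscr{Q}}_{\bm{a}}$ with $\deg \mathscr{Q}_{\bm{a}}$) together with Assumption~\ref{assum:key-assumption1} to obtain $\mathbf{P}_{\bm{a}}(z)\geq \ell$. Your closing remark about whether $\bm{a'}\notin E$ is a fair technical observation, but as you note it is inessential for the use made of the claim.
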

If the claim is true, then by the existence of the flattening stratification (see, e.g., \cite[Theorem 5.13]{MR2223407}), we know that
\[
S_\ell:=\{\bm{a} \in \mathbb{A}_{\mathbb{C}}^m \setminus \{Z(\bm{t}) \cup E\}: \mathbf{P}_{\bm{a}}(z)=\ell\} \subseteq \mathbb{A}^m_{\mathbb{C}} \setminus \{Z(\bm{t}) \cup E\} \subseteq \mathbb{A}^m_{\mathbb{C}}
\]
is a Zariski open dense subset. So we can simply take $U_m=S_\ell$. Thus, for \textbf{Step 1}, it remains to prove Claim~\ref{claim:min}.

\begin{proof}[Proof of Claim~\ref{claim:min}]
For any $\bm{a} \in \mathbb{A}_{\mathbb{C}}^m \setminus \{Z(\bm{t}) \cup E\}$, let $r:=\dim \overline{\mathscr{Q}}_{\bm{a}}$. By \cite[Proof of Corollary III.9.10]{MR0463157}, the following statements hold:
\begin{enumerate}
\item
If $r>0$, we have
\[
\mathbf{P}_{\bm{a}}(z)=\dfrac{\deg \overline{\mathscr{Q}}_{\bm{a}}}{r!}z^{r}+\text{lower-degree terms}.
\]
Hence, $\mathbf{P}_{\bm{a}}(n)>\mathbf{P}_{\bm{a'}}(n)=\ell$ for $n \gg 0, n \in \mathbb{Z}$; i.e., $\mathbf{P}_{\bm{a}}(z) \succ \mathbf{P}_{\bm{a'}}(z)$.
\item
If $r=0$, then by Assumption~\ref{assum:key-assumption1}, we have
\[
\mathbf{P}_{\bm{a}}(z)=\deg \overline{\mathscr{Q}}_{\bm{a}} = \deg \overline{\mathscr{Q}_{\bm{a}}}=\deg \mathscr{Q}_{\bm{a}} \geq \ell,
\]
since for the zero-dimensional affine variety $\mathscr{Q}_{\bm{a}}$, its degree is equal to the number of the solutions of its defining equations in $\mathbb{C}$ (counted with multiplicity). This implies that $\mathbf{P}_{\bm{a}}(n) \geq \mathbf{P}_{\bm{a'}}(n)=\ell$ for $n \in \mathbb{Z}$, i.e., $\mathbf{P}_{\bm{a}}(z) \succ \mathbf{P}_{\bm{a'}}(z)$.
\end{enumerate}
This concludes the proof of Claim~\ref{claim:min}, hence also \textbf{Step 1}.
\end{proof}
\paragraph{Step 2:}
We argue that the complement of $U_m$ in $\mathbb{C}^m$ has $\lambda_m^{\mathbb{C}}$-measure zero, i.e., $\lambda_m^{\mathbb{C}}(\mathbb{C}^m \setminus U_m)=0$.

Note that $Z:=\mathbb{A}_{\mathbb{C}}^m \setminus U_m \subsetneq \mathbb{A}_{\mathbb{C}}^m$ is a Zariski closed subset. Then we are done by Lemma~\ref{closed-0}.

Finally, these two steps conclude the proof.
\end{proof}
\begin{remark}
A few words about the assumptions:
\begin{enumerate}
\item 
For Assumption \ref{assum:key-assumption1}: In general, we cannot replace ``$E \subsetneq \mathbb{A}_{\mathbb{C}}^m$ is a Zariski closed subset'' by the more friendly phrase ``$E \subseteq \mathbb{C}^m$ is a subset of $\lambda_m^{\mathbb{C}}$-measure zero''. This is because we need to apply the existence of the flattening stratification \cite[Theorem 5.13]{MR2223407} to $\mathbb{A}_{\mathbb{C}}^m \setminus E$, so a subset $E \subseteq \mathbb{C}^m$ should, at least, be contained in some Zariski closed proper subset. But this is already false in the one-dimensional case: $\mathbb{N} \subseteq \mathbb{C}$ provides such a counterexample.
\item
For Assumption \ref{assum:key-assumption2}: Our condition $\bm{a}' \in \mathbb{C}^m \setminus Z(\bm{t})$ is only a sufficient condition for Lemma~\ref{reduction} (and hence Theorem~\ref{main-result}) to hold, since we use \cite[Theorem 2.1]{MR1854335}. Nevertheless it is a measure-zero condition. However, we emphasis that it is always necessary to exclude some ``bad'' parameters in Assumption \ref{assum:key-assumption2}. For example, the polynomial $(1-tx)x=0$ has a unique solution in $\mathbb{C}$ (counted with multiplicity) at $t=0$, but this does not hold generically. 
\end{enumerate}
\end{remark}

\subsubsection{Proof of Theorem~\ref{thm:general-case}}

The starting point is the lemma below:
\begin{lemma}[\cite{MR331643}, Lemma]\label{measure-0}
For any non-zero polynomial $f \in \mathbb{C}[x_1,\ldots,x_m]$, the set
$\mathbb{V}(f) \cap \mathbb{R}^m \subseteq \mathbb{R}^m $
is of $\lambda_m$-measure zero.
\end{lemma}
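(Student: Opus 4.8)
The plan is to reduce the statement to the analogous fact for a nonzero \emph{real} polynomial and then prove that fact by induction on $m$ using Tonelli's theorem. First I would pass from $\mathbb{C}$ to $\mathbb{R}$. Splitting the coefficients of $f$ into real and imaginary parts yields real polynomials $f_1, f_2 \in \mathbb{R}[x_1,\ldots,x_m]$ with $f = f_1 + i f_2$ as functions on $\mathbb{R}^m$. For $x \in \mathbb{R}^m$ we have $f(x)=0$ iff $f_1(x)=f_2(x)=0$, so $\mathbb{V}(f) \cap \mathbb{R}^m \subseteq \{x : f_1(x)=0\}$; since $f \neq 0$, at least one of $f_1,f_2$ is a nonzero element of $\mathbb{R}[x_1,\ldots,x_m]$, call it $p$, and then $\mathbb{V}(f)\cap\mathbb{R}^m \subseteq Z_p := \{x \in \mathbb{R}^m : p(x)=0\}$. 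By completeness of $\lambda_m$ it suffices to show $\lambda_m(Z_p)=0$. I emphasize that one cannot shortcut this via the complex Theorem of \cite{MR2568219}, since $\mathbb{R}^m$ is itself a $\lambda_{2m}$-null subset of $\mathbb{C}^m \cong \mathbb{R}^{2m}$, so the complex measure-zero statement carries no information about the real slice; a genuinely independent argument is required.

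Next I would prove $\lambda_m(Z_p)=0$ for nonzero $p \in \mathbb{R}[x_1,\ldots,x_m]$ by induction on $m$. The base case $m=1$ is immediate: a nonzero univariate real polynomial has finitely many roots, a set of $\lambda_1$-measure zero. For the inductive step, write $x'=(x_1,\ldots,x_{m-1})$ and express $p(x',x_m)=\sum_{k=0}^d c_k(x')\,x_m^k$, where $c_d \not\equiv 0$ is the top nonvanishing coefficient. The set $Z_p$ is closed (the preimage of $\{0\}$ under the continuous map $p$), hence Borel and $\lambda_m$-measurable, so Tonelli's theorem applied to the indicator of $Z_p$ gives
\[
\lambda_m(Z_p) = \int_{\mathbb{R}^{m-1}} \lambda_1\bigl(\{x_m \in \mathbb{R} : p(x',x_m)=0\}\bigr)\,dx'.
\]

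Finally I would split the outer domain according to whether $c_d(x')$ vanishes. Whenever $c_d(x') \neq 0$, the slice $p(x',\cdot)$ is a nonzero univariate polynomial of degree $d$, so its root set has $\lambda_1$-measure zero and contributes nothing. The remaining $x'$ lie in $Z_{c_d}:=\{x' : c_d(x')=0\}$, which has $\lambda_{m-1}$-measure zero by the induction hypothesis applied to the nonzero polynomial $c_d$; integrating a nonnegative function over a $\lambda_{m-1}$-null set yields $0$. Hence the outer integral vanishes, giving $\lambda_m(Z_p)=0$ and completing the induction. The part requiring the most care — rather than a deep obstacle — is the bookkeeping in this last step: on the null base set $Z_{c_d}$ the inner measure may be infinite (if $p$ vanishes identically on that fiber), so one must argue that integrating \emph{anything} over the $\lambda_{m-1}$-null set $Z_{c_d}$ still contributes $0$. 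Ensuring the reduction genuinely supplies a nonzero real polynomial and invoking measurability/Tonelli correctly are the two places where precision matters.
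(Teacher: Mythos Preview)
Your proof is correct and follows essentially the same strategy as the paper's: induction on $m$, with the inductive step handled by Fubini/Tonelli, splitting the base into a null set (by the inductive hypothesis applied to a nonzero coefficient polynomial) and a complement on which every fiber is a nonzero univariate polynomial. Two harmless cosmetic differences: you first reduce from $\mathbb{C}[x_1,\ldots,x_m]$ to a real polynomial by taking real/imaginary parts (the paper skips this and inducts directly on the complex polynomial, which works since the fibers are still univariate polynomials with finitely many real roots whenever not identically zero), and you split on the vanishing of the leading coefficient $c_d$ alone, whereas the paper splits on the simultaneous vanishing of \emph{all} coefficients $f_0,\ldots,f_d$.
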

This lemma is well-known; however, for the convenience of readers, we repeat its proof here.
\begin{proof}[Proof of Lemma~\ref{measure-0}]
We induct on $m$. 

If $m=1$, then $\mathbb{V}(f) \subseteq \mathbb{C}$ is a finite set, and so is $\mathbb{V}(f) \cap \mathbb{R}$. Trivially, $\mathbb{V}(f) \cap \mathbb{R} \subset \mathbb{R}$ is of $\lambda_1$-measure zero.
Suppose the conclusion holds for $m-1$; we need to show that
\[
N_m:=\mathbb{V}(f) \cap \mathbb{R}^m=\{(x_1,\ldots,x_m) \in \mathbb{R}^m: f(x_1,\ldots,x_m)=0\} \subseteq \mathbb{R}^m
\]
is of $\lambda_m$-measure zero. Consider the cross-section of $N_m$ at the point $(x_1,\ldots,x_{m-1})$:
\[
C(x_1,\ldots,x_{m-1}):=\{x_m \in \mathbb{R}: f(x_1,\ldots,x_{m-1},x_m)=0\} \subseteq \mathbb{R}.
\]
It is a finite set, since $f$ is non-trivial. Let $d>0$ be the degree of $f$ with respect to the variable $x_m$. Write
\[
f(x_1,\ldots,x_m)=\sum_{i=0}^d f_i(x_1,\ldots,x_{m-1})x_m^i,
\]
and
\[
N_{m-1}:=\{(x_1,\ldots,x_{m-1}) \in \mathbb{R}^{m-1}: f_i(x_1,\ldots,x_{m-1})=0 \text{ for } i=0,\ldots,d\}=\bigcap_{i=0}^d \mathbb{V}(f_i) \cap \mathbb{R}^{m-1} \subseteq \mathbb{R}^{m-1}.
\]
By the induction hypothesis, if at least one of $\mathbb{V}(f_i) \cap \mathbb{R}^{m-1} \subseteq \mathbb{R}^{m-1}$ is of $\lambda_{m-1}$-measure zero, so $\lambda_{m-1}(N_{m-1})=0$. Now it follows that 
\begin{align*}
\lambda_m(N_m)&=\int_{\mathbb{R}^m} \mathbf{1}_{N_m}\ \mathrm{d}\lambda_m(x_1,\ldots,x_m) \\
&=\int_{\mathbb{R}^{m-1}} \left( \int_{\mathbb{R}} \mathbf{1}_{C(x_1,\ldots,x_{m-1})} \mathrm{d}\lambda_1(x_m) \right) \mathrm{d}\lambda_{m-1}(x_1,\ldots,x_{m-1}) \\
&=\int_{\mathbb{R}^{m-1}} \lambda_1(C(x_1,\ldots,x_{m-1}))\ \mathrm{d}\lambda_{m-1}(x_1,\ldots,x_{m-1}) \\
&=\int_{N_{m-1}} \lambda_1(C(x_1,\ldots,x_{m-1}))\ \mathrm{d}\lambda_{m-1}(x_1,\ldots,x_{m-1}) \\
& \quad + \int_{\mathbb{R}^{m-1} \setminus N_{m-1}} \lambda_1(C(x_1,\ldots,x_{m-1}))\ \mathrm{d}\lambda_{m-1}(x_1,\ldots,x_{m-1}),
\end{align*}
where the second equality follows directly from Fubini's theorem (see, e.g., \cite[Chapter 11.4.1, Theorem]{MR3445604}). The first term vanishes because of $\lambda_{m-1}(N_{m-1})=0$. The second vanishes since for any $(x_1,\ldots,x_{m-1}) \notin N_{m-1}$, the set $C(x_1,\ldots,x_{m-1})$ is a finite set, so $\lambda_1(C(x_1,\ldots,x_{m-1}))=0$. Thus, $\lambda_m(N_m)=0$.
\end{proof}
\begin{corollary}\label{closed-1}
The intersection of any Zariski closed proper subset of $\mathbb{A}_{\mathbb{C}}^m$ with $\mathbb{R}^m$ is of $\lambda_m$-measure zero.
\end{corollary}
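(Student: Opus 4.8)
The plan is to mimic the proof of its complex analogue, Lemma~\ref{closed-0}, replacing the complex measure-zero input by its real counterpart Lemma~\ref{measure-0} and again invoking completeness of the Lebesgue measure. The entire content of the corollary is a reduction to the single-polynomial case already handled by the lemma.

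First I would express the given Zariski closed proper subset $Z \subsetneq \mathbb{A}_{\mathbb{C}}^m$ as a finite intersection $Z = \mathbb{V}(f_1,\ldots,f_r) = \bigcap_{i=1}^r \mathbb{V}(f_i)$ for polynomials $f_i \in \mathbb{C}[x_1,\ldots,x_m]$; this is possible since $\mathbb{C}[x_1,\ldots,x_m]$ is Noetherian by Hilbert's Basis Theorem, so the defining ideal is finitely generated. Because $Z$ is \emph{proper}, its defining ideal is nonzero, so at least one generator---say $f_1$---is a nonzero polynomial. Intersecting with $\mathbb{R}^m$ preserves the containment, giving $Z \cap \mathbb{R}^m \subseteq \mathbb{V}(f_1) \cap \mathbb{R}^m$.

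Next I would apply Lemma~\ref{measure-0} to the nonzero polynomial $f_1$, which yields $\lambda_m(\mathbb{V}(f_1) \cap \mathbb{R}^m) = 0$. Since $Z \cap \mathbb{R}^m$ is a subset of this $\lambda_m$-measure-zero set and the Lebesgue measure on $\mathbb{R}^m$ is complete, it follows that $\lambda_m(Z \cap \mathbb{R}^m) = 0$, as desired.

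There is no genuine obstacle here: all the analytic difficulty already resides in Lemma~\ref{measure-0}, whose induction on $m$ (using Fubini's theorem together with the finiteness of the one-dimensional cross-sections) has been established above. The only point requiring any care is the extraction of a nonzero generator $f_1$, and this is immediate from the properness of $Z$, exactly as in the complex case of Lemma~\ref{closed-0}.
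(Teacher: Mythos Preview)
Your proposal is correct and follows essentially the same argument as the paper's own proof: write $Z=\bigcap_i \mathbb{V}(f_i)$, pick a nonzero $f_1$ (guaranteed by properness), apply Lemma~\ref{measure-0} to get $\lambda_m(\mathbb{V}(f_1)\cap\mathbb{R}^m)=0$, and conclude by completeness of the Lebesgue measure.
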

\begin{proof}[Proof of Corollary~\ref{closed-1}]
Let $Z \subsetneq \mathbb{A}_{\mathbb{C}}^m$ be a Zariski closed subset. Then we can write 
\[
Z=\mathbb{V}(f_1,\ldots,f_r)=\bigcap_{i=1}^r \mathbb{V}(f_i),
\]
for some polynomials $f_i \in \mathbb{C}[x_1,\ldots,x_m]$. Since $Z \subsetneq \mathbb{A}_{\mathbb{C}}^m$, at least one of the $f_i$'s is non-zero, say $f_1$. By Lemma \ref{measure-0}, $\mathbb{V}(f_1) \cap \mathbb{R}^m \subseteq \mathbb{R}^m$ is of $\lambda_m$-measure zero. Hence, the subset $Z$ is again of $\lambda_m$-measure zero (since the Lebesgue measure on $\mathbb{R}^m$ is complete).
\end{proof}

\begin{proof}[Proof of Theorem~\ref{thm:general-case}]
We use the same notions in the proof of Theorem \ref{main-result}. It suffices to show that $Z \cap \mathbb{R}^m \subseteq \mathbb{R}^m$
is of $\lambda_m$-measure zero, which is true by Corollary \ref{closed-1}.
\end{proof}

\section{Supplementary material for Section~\ref{sec:examples}}\label{apn:examples_proofs}
In this section, we first provide the detailed proofs for mixtures of BTL models mentioned in Section \ref{sec:examples}, then provide the results and proofs for mixtures of MNL and PL models. We then provide some analysis regarding the tightness of $n$. Finally, we give an additional proof for mixtures of MNL models with 2-slate and 3-slate, whose identifiability has been previously proved by~\cite{chierichetti2018learning} and~\cite{tang2020learning}. Note that all the proofs for these models are using our general framework. 
\subsection{Mixtures of BTL models}
Here, we present the results for mixtures of BTL models.
\subsubsection{Unknown mixing probabilities}\label{apn-sub:examples_btl_proofs_p}
\begin{proof}[Proof of Theorem~\ref{prop-thm:btl_p}]
We first translate the system~\eqref{ex:BTL_2mixtures_pairwise_p} into the following equivalent system, with coefficients given by polynomials in $(\bm{a},\bm{b},p_1)$:
\begin{equation}\label{ex:BTL_2mixtures_pairwise_p_poly_ab}
\begin{cases}
x_1 = y_1 = 1, \\
(c_{ij}-d_{ij}) x_iy_i + (c_{ij}-pd_{ij})x_iy_j + (c_{ij}-(1-p)d_{ij})x_jy_i + c_{ij}x_j y_j = 0, & \forall i < j \in [n], \\
t_{ij}(x_i+x_j) = 1, \ h_{ij}(y_i+y_j) = 1, & \forall i < j \in [n],
\end{cases}
\end{equation}
where
\[
c_{ij}:=p_1a_i(b_i+b_j)+(1-p_1)b_i(a_i+a_j),d_{ij}:=(a_i+a_j)(b_i+b_j).
\]
This is an equation system in the variables $(\bm{x},\bm{y},p,\bm{t},\bm{h})$ and with coefficients $(\bm{a},\bm{b},p_1)$. Indeed, as a result of introducing new variables and equations to the system~\eqref{ex:BTL_2mixtures_pairwise_p} to prevent the denominators from being zero, it follows that for any $(\bm{a}_{2:n},\bm{b}_{2:n},p_1)$, the equation system~\eqref{ex:BTL_2mixtures_pairwise_p} is equivalent to the equation system~\eqref{ex:BTL_2mixtures_pairwise_p_poly_ab}. In particular, the system~\eqref{ex:BTL_2mixtures_pairwise_p} has exactly two solutions in $\mathbb{C}$ (counted with multiplicity) if and only if the system~\eqref{ex:BTL_2mixtures_pairwise_p_poly_ab} has exactly two solutions in $\mathbb{C}$ (counted with multiplicity).

Note that the system~\eqref{ex:BTL_2mixtures_pairwise_p_poly_ab} has the following two (distinct) solutions in $\mathbb{C}$:
\[
(\bm{x},\bm{y},p,\bm{t},\bm{h})=\left(\bm{a},\bm{b},p_1,\left(\dfrac{1}{a_i+a_j}\right)_{i,j},\left(\dfrac{1}{b_i+b_j}\right)_{i,j}\right),\left(\bm{b},\bm{a},1-p_1,\left(\dfrac{1}{b_i+b_j}\right)_{i,j},\left(\dfrac{1}{a_i+a_j}\right)_{i,j}\right),
\]
for all $(\bm{a}_{2:n},\bm{b}_{2:n},p_1) \in \mathbb{C}^{2n-1} \setminus E_n$, where $E_n \subsetneq \mathbb{A}_{\mathbb{C}}^{2n-1}$ is the Zariski closed subset defined by
\[
E_n:=\left\lbrace (\bm{a}_{2:n},\bm{b}_{2:n},p_1) \in \mathbb{A}_{\mathbb{C}}^{2n-1}: p_1-0.5=0 \text{ or } a_i+a_j=0 \text{ or } b_i+b_j=0 \right\rbrace \subseteq \mathbb{A}_{\mathbb{C}}^{2n-1}.
\]
As before, this proposition can be proved by induction on $n$:
\paragraph{Case $n=5$.}
We consider the following subset of equations in the system~\eqref{ex:BTL_2mixtures_pairwise_p_poly_ab}:
\begin{equation}\label{ex:BTL_2mixtures_pairwise_p_poly_ab_n5}
\begin{cases}
x_1 = y_1 = 1, \\
(c_{ij}-d_{ij}) x_iy_i + (c_{ij}-pd_{ij})x_iy_j + (c_{ij}-(1-p)d_{ij})x_jy_i + c_{ij}x_j y_j = 0, & \forall i < j \in [5], \\
t_{23}(x_2+x_3) = 1, \ t_{15}(x_1+x_5) = 1, \ h_{23}(y_2+y_3) = 1.
\end{cases}
\end{equation}
It is clear that the system~\eqref{ex:BTL_2mixtures_pairwise_p_poly_ab} has exactly two solutions in $\mathbb{C}$ (counted with multiplicity) if the system~\eqref{ex:BTL_2mixtures_pairwise_p_poly_ab_n5} has exactly two solutions in $\mathbb{C}$ (counted with multiplicity). Hence, it suffices to consider the system~\eqref{ex:BTL_2mixtures_pairwise_p_poly_ab_n5}. Based on Theorem~\ref{thm:general-case}, it suffices to check Assumptions~\ref{assum:key-assumption1} and~\ref{assum:key-assumption2} for the system~\eqref{ex:BTL_2mixtures_pairwise_p_poly_ab_n5}.

For Assumption~\ref{assum:key-assumption1}: The system~\eqref{ex:BTL_2mixtures_pairwise_p_poly_ab_n5} has at least two solutions in $\mathbb{C}$ for all $(\bm{a}_{2:5},\bm{b}_{2:5},p_1) \in \mathbb{C}^{9} \setminus E_{5}$.

For Assumption~\ref{assum:key-assumption2}: We need to find $(\bm{a'}_{1:5},\bm{b'}_{1:5},p'_1)$ such that
\begin{enumerate}
\item 
$(\bm{a'}_{2:5},\bm{b'}_{2:5},p'_1) \in \mathbb{C}^{9} \setminus Z(\bm{a}_{2:5},\bm{b}_{2:5},p_1)$, and
\item
the associated equation system~\eqref{ex:BTL_2mixtures_pairwise_p_poly_ab_n5} has exactly two solutions in $\mathbb{C}$ (counted with multiplicity).
\end{enumerate}
Both of these statements can be verified using Magma. We choose $(\bm{a'}_{1:5},\bm{b'}_{1:5},p'_1)=(1,2,3,4,5;1,8,9,3,2;0.3)$.

For the first one, we need to determine $\mathrm{Bad}(\bm{a}_{2:5},\bm{b}_{2:5},p_1)$. This can be done via Magma.
\begin{lstlisting}[language=Magma,caption=Gr\"{o}nber basis of BTL models with unknown mixing probabilities,label=lst_btl_ab_p_Groebner,frame=single,basicstyle=\scriptsize]
P<x1,x2,x3,x4,x5,y1,y2,y3,y4,y5,p,t23,h15,h23,
  a1,a2,a3,a4,a5,b1,b2,b3,b4,b5,p1>:=FreeAlgebra(Rationals(),25,"lex");

I:=ideal<P|x1-a1,y1-b1,a1-1,b1-1,
(p1*a1*(b1+b2)+(1-p1)*b1*(a1+a2)-(a1+a2)*(b1+b2))*x1*y1+
(p1*a1*(b1+b2)+(1-p1)*b1*(a1+a2)-p*(a1+a2)*(b1+b2))*x1*y2+
(p1*a1*(b1+b2)+(1-p1)*b1*(a1+a2)-(1-p)*(a1+a2)*(b1+b2))*x2*y1-
(p1*a1*(b1+b2)+(1-p1)*b1*(a1+a2))*x2*y2,
(p1*a1*(b1+b3)+(1-p1)*b1*(a1+a3)-(a1+a3)*(b1+b3))*x1*y1+
(p1*a1*(b1+b3)+(1-p1)*b1*(a1+a3)-p*(a1+a3)*(b1+b3))*x1*y3+
(p1*a1*(b1+b3)+(1-p1)*b1*(a1+a3)-(1-p)*(a1+a3)*(b1+b3))*x3*y1-
(p1*a1*(b1+b3)+(1-p1)*b1*(a1+a3))*x3*y3,
(p1*a1*(b1+b4)+(1-p1)*b1*(a1+a4)-(a1+a4)*(b1+b4))*x1*y1+
(p1*a1*(b1+b4)+(1-p1)*b1*(a1+a4)-p*(a1+a4)*(b1+b4))*x1*y4+
(p1*a1*(b1+b4)+(1-p1)*b1*(a1+a4)-(1-p)*(a1+a4)*(b1+b4))*x4*y1-
(p1*a1*(b1+b4)+(1-p1)*b1*(a1+a4))*x4*y4,
(p1*a1*(b1+b5)+(1-p1)*b1*(a1+a5)-(a1+a5)*(b1+b5))*x1*y1+
(p1*a1*(b1+b5)+(1-p1)*b1*(a1+a5)-p*(a1+a5)*(b1+b5))*x1*y5+
(p1*a1*(b1+b5)+(1-p1)*b1*(a1+a5)-(1-p)*(a1+a5)*(b1+b5))*x5*y1-
(p1*a1*(b1+b5)+(1-p1)*b1*(a1+a5))*x5*y5,
(p1*a2*(b2+b3)+(1-p1)*b2*(a2+a3)-(a2+a3)*(b2+b3))*x2*y2+
(p1*a2*(b2+b3)+(1-p1)*b2*(a2+a3)-p*(a2+a3)*(b2+b3))*x2*y3+
(p1*a2*(b2+b3)+(1-p1)*b2*(a2+a3)-(1-p)*(a2+a3)*(b2+b3))*x3*y2-
(p1*a2*(b2+b3)+(1-p1)*b2*(a2+a3))*x3*y3,
(p1*a2*(b2+b4)+(1-p1)*b2*(a2+a4)-(a2+a4)*(b2+b4))*x2*y2+
(p1*a2*(b2+b4)+(1-p1)*b2*(a2+a4)-p*(a2+a4)*(b2+b4))*x2*y4+
(p1*a2*(b2+b4)+(1-p1)*b2*(a2+a4)-(1-p)*(a2+a4)*(b2+b4))*x4*y2-
(p1*a2*(b2+b4)+(1-p1)*b2*(a2+a4))*x4*y4,
(p1*a2*(b2+b5)+(1-p1)*b2*(a2+a5)-(a2+a5)*(b2+b5))*x2*y2+
(p1*a2*(b2+b5)+(1-p1)*b2*(a2+a5)-p*(a2+a5)*(b2+b5))*x2*y5+
(p1*a2*(b2+b5)+(1-p1)*b2*(a2+a5)-(1-p)*(a2+a5)*(b2+b5))*x5*y2-
(p1*a2*(b2+b5)+(1-p1)*b2*(a2+a5))*x5*y5,
(p1*a3*(b3+b4)+(1-p1)*b3*(a3+a4)-(a3+a4)*(b3+b4))*x3*y3+
(p1*a3*(b3+b4)+(1-p1)*b3*(a3+a4)-p*(a3+a4)*(b3+b4))*x3*y4+
(p1*a3*(b3+b4)+(1-p1)*b3*(a3+a4)-(1-p)*(a3+a4)*(b3+b4))*x4*y3-
(p1*a3*(b3+b4)+(1-p1)*b3*(a3+a4))*x4*y4,
(p1*a3*(b3+b5)+(1-p1)*b3*(a3+a5)-(a3+a5)*(b3+b5))*x3*y3+
(p1*a3*(b3+b5)+(1-p1)*b3*(a3+a5)-p*(a3+a5)*(b3+b5))*x3*y5+
(p1*a3*(b3+b5)+(1-p1)*b3*(a3+a5)-(1-p)*(a3+a5)*(b3+b5))*x5*y3-
(p1*a3*(b3+b5)+(1-p1)*b3*(a3+a5))*x5*y5,
(p1*a4*(b4+b5)+(1-p1)*b4*(a4+a5)-(a4+a5)*(b4+b5))*x4*y4+
(p1*a4*(b4+b5)+(1-p1)*b4*(a4+a5)-p*(a4+a5)*(b4+b5))*x4*y5+
(p1*a4*(b4+b5)+(1-p1)*b4*(a4+a5)-(1-p)*(a4+a5)*(b4+b5))*x5*y4-
(p1*a4*(b4+b5)+(1-p1)*b4*(a4+a5))*x5*y5,
t23*(x2+x3)-1,
h15*(y1+y5)-1,
h23*(y2+y3)-1>;

-> GroebnerBasis(I);
\end{lstlisting}
From the Magma output, we obtain
\begin{equation}\label{BTL_bad_p}
\mathrm{Bad}(\bm{a}_{2:5},\bm{b}_{2:5},p_1)=
\begin{cases}
a_i(1+a_i)(1+b_i),a_i (p_1-1) - b_i p_1-1, & \forall i \in [5], \\ 
b_i(a_i+1-p_1) + a_i p_1-1, b_i(1-p_1)+ a_i (b_i+ p_1-1)-1, & \forall i \in [5], \\
(a_i+a_j)(b_i+b_j), & \forall i < j \in [5], \\
a_j(b_j+b_ip_1)+a_ib_j(1-p_1), & \forall i \neq j \in [5],
\end{cases}
\end{equation}
and verify that $(\bm{a'}_{2:5},\bm{b'}_{2:5},p'_1) \in \mathbb{C}^{9} \setminus Z(\bm{a}_{2:5},\bm{b}_{2:5},p_1)$.

For the second one, we use Magma to check whether~\eqref{ex:BTL_2mixtures_pairwise_p_poly_ab_n5} has exactly two solutions in $\mathbb{C}$ (counted with multiplicity) for this $(\bm{a'}_{1:5},\bm{b'}_{1:5},p'_1)$.
\begin{lstlisting}[language=Magma,caption=Dimension and degree computations of BTL models with unknown mixing probabilities,label=lst_btl_abp,frame=single,basicstyle=\scriptsize]
a:=[1,2,3,4,5];
b:=[1,8,9,3,2];
p1:=3/10;
p2:=7/10;

k:=Rationals();
A<x1,x2,x3,x4,x5,y1,y2,y3,y4,y5,t23,h15,h23,p>:=AffineSpace(k,14);
P:=Scheme(A,[
x1-1,
y1-1,
(p1*a[1]*(b[1]+b[2])+(1-p1)*b[1]*(a[1]+a[2])-(a[1]+a[2])*(b[1]+b[2]))*x1*y1
+(p1*a[1]*(b[1]+b[2])+(1-p1)*b[1]*(a[1]+a[2])-p*(a[1]+a[2])*(b[1]+b[2]))*x1*y2
+(p1*a[1]*(b[1]+b[2])+(1-p1)*b[1]*(a[1]+a[2])-(1-p)*(a[1]+a[2])*(b[1]+b[2]))*x2*y1
+(p1*a[1]*(b[1]+b[2])+(1-p1)*b[1]*(a[1]+a[2]))*x2*y2,
(p1*a[1]*(b[1]+b[3])+(1-p1)*b[1]*(a[1]+a[3])-(a[1]+a[3])*(b[1]+b[3]))*x1*y1
+(p1*a[1]*(b[1]+b[3])+(1-p1)*b[1]*(a[1]+a[3])-p*(a[1]+a[3])*(b[1]+b[3]))*x1*y3
+(p1*a[1]*(b[1]+b[3])+(1-p1)*b[1]*(a[1]+a[3])-(1-p)*(a[1]+a[3])*(b[1]+b[3]))*x3*y1
+(p1*a[1]*(b[1]+b[3])+(1-p1)*b[1]*(a[1]+a[3]))*x3*y3,
(p1*a[1]*(b[1]+b[4])+(1-p1)*b[1]*(a[1]+a[4])-(a[1]+a[4])*(b[1]+b[4]))*x1*y1
+(p1*a[1]*(b[1]+b[4])+(1-p1)*b[1]*(a[1]+a[4])-p*(a[1]+a[4])*(b[1]+b[4]))*x1*y4
+(p1*a[1]*(b[1]+b[4])+(1-p1)*b[1]*(a[1]+a[4])-(1-p)*(a[1]+a[4])*(b[1]+b[4]))*x4*y1
+(p1*a[1]*(b[1]+b[4])+(1-p1)*b[1]*(a[1]+a[4]))*x4*y4,
(p1*a[1]*(b[1]+b[5])+(1-p1)*b[1]*(a[1]+a[5])-(a[1]+a[5])*(b[1]+b[5]))*x1*y1
+(p1*a[1]*(b[1]+b[5])+(1-p1)*b[1]*(a[1]+a[5])-p*(a[1]+a[5])*(b[1]+b[5]))*x1*y5
+(p1*a[1]*(b[1]+b[5])+(1-p1)*b[1]*(a[1]+a[5])-(1-p)*(a[1]+a[5])*(b[1]+b[5]))*x5*y1
+(p1*a[1]*(b[1]+b[5])+(1-p1)*b[1]*(a[1]+a[5]))*x5*y5,
(p1*a[2]*(b[2]+b[3])+(1-p1)*b[2]*(a[2]+a[3])-(a[2]+a[3])*(b[2]+b[3]))*x2*y2
+(p1*a[2]*(b[2]+b[3])+(1-p1)*b[2]*(a[2]+a[3])-p*(a[2]+a[3])*(b[2]+b[3]))*x2*y3
+(p1*a[2]*(b[2]+b[3])+(1-p1)*b[2]*(a[2]+a[3])-(1-p)*(a[2]+a[3])*(b[2]+b[3]))*x3*y2
+(p1*a[2]*(b[2]+b[3])+(1-p1)*b[2]*(a[2]+a[3]))*x3*y3,
(p1*a[2]*(b[2]+b[4])+(1-p1)*b[2]*(a[2]+a[4])-(a[2]+a[4])*(b[2]+b[4]))*x2*y2
+(p1*a[2]*(b[2]+b[4])+(1-p1)*b[2]*(a[2]+a[4])-p*(a[2]+a[4])*(b[2]+b[4]))*x2*y4
+(p1*a[2]*(b[2]+b[4])+(1-p1)*b[2]*(a[2]+a[4])-(1-p)*(a[2]+a[4])*(b[2]+b[4]))*x4*y2
+(p1*a[2]*(b[2]+b[4])+(1-p1)*b[2]*(a[2]+a[4]))*x4*y4,
(p1*a[2]*(b[2]+b[5])+(1-p1)*b[2]*(a[2]+a[5])-(a[2]+a[5])*(b[2]+b[5]))*x2*y2
+(p1*a[2]*(b[2]+b[5])+(1-p1)*b[2]*(a[2]+a[5])-p*(a[2]+a[5])*(b[2]+b[5]))*x2*y5
+(p1*a[2]*(b[2]+b[5])+(1-p1)*b[2]*(a[2]+a[5])-(1-p)*(a[2]+a[5])*(b[2]+b[5]))*x5*y2
+(p1*a[2]*(b[2]+b[5])+(1-p1)*b[2]*(a[2]+a[5]))*x5*y5,
(p1*a[3]*(b[3]+b[4])+(1-p1)*b[3]*(a[3]+a[4])-(a[3]+a[4])*(b[3]+b[4]))*x3*y3
+(p1*a[3]*(b[3]+b[4])+(1-p1)*b[3]*(a[3]+a[4])-p*(a[3]+a[4])*(b[3]+b[4]))*x3*y4
+(p1*a[3]*(b[3]+b[4])+(1-p1)*b[3]*(a[3]+a[4])-(1-p)*(a[3]+a[4])*(b[3]+b[4]))*x4*y3
+(p1*a[3]*(b[3]+b[4])+(1-p1)*b[3]*(a[3]+a[4]))*x4*y4,
(p1*a[3]*(b[3]+b[5])+(1-p1)*b[3]*(a[3]+a[5])-(a[3]+a[5])*(b[3]+b[5]))*x3*y3
+(p1*a[3]*(b[3]+b[5])+(1-p1)*b[3]*(a[3]+a[5])-p*(a[3]+a[5])*(b[3]+b[5]))*x3*y5
+(p1*a[3]*(b[3]+b[5])+(1-p1)*b[3]*(a[3]+a[5])-(1-p)*(a[3]+a[5])*(b[3]+b[5]))*x5*y3
+(p1*a[3]*(b[3]+b[5])+(1-p1)*b[3]*(a[3]+a[5]))*x5*y5,
(p1*a[4]*(b[4]+b[5])+(1-p1)*b[4]*(a[4]+a[5])-(a[4]+a[5])*(b[4]+b[5]))*x4*y4
+(p1*a[4]*(b[4]+b[5])+(1-p1)*b[4]*(a[4]+a[5])-p*(a[4]+a[5])*(b[4]+b[5]))*x4*y5
+(p1*a[4]*(b[4]+b[5])+(1-p1)*b[4]*(a[4]+a[5])-(1-p)*(a[4]+a[5])*(b[4]+b[5]))*x5*y4
+(p1*a[4]*(b[4]+b[5])+(1-p1)*b[4]*(a[4]+a[5]))*x5*y5,
t23*(x2+x3)-1,
h15*(y1+y5)-1,
h23*(y2+y3)-1
]);

-> Dimension(P);
-> @0@

-> Degree(P);
-> @2@
\end{lstlisting}
From Listing~\ref{lst_btl_abp}, $\texttt{Dimension(P)=0}$ and $\texttt{Degree(P)=2}$ means that the system~\eqref{ex:BTL_2mixtures_pairwise_p_poly_ab_n5} has exactly two solutions in $\mathbb{C}$ (counted with multiplicity) for this choice of $(\bm{a'}_{1:5},\bm{b'}_{1:5},p'_1)$. Thus, we have checked Assumption~\ref{assum:key-assumption2} for the system~\eqref{ex:BTL_2mixtures_pairwise_p_poly_ab_n5}, which concludes the proof for $n=5$.

\paragraph{Case $n \geq 6$.}
We may employ a same procedure as we did in the proof of Proposition~\ref{prop:btl}. The idea is to split the equation system~\eqref{ex:BTL_2mixtures_pairwise_p} into two parts. One only involves the variables $(\bm{x}_{1:n-1},\bm{y}_{1:n-1},p)$, for which we have exactly two solutions in $\mathbb{C}$ (counted with multiplicity) generically, using the induction step. The other part is the remaining equations in~\eqref{ex:BTL_2mixtures_pairwise_p}, from which we can build up a system of linear equations in $(x_n, y_n)$ with each of the two solutions of $(\bm{x}_{1:n-1},\bm{y}_{1:n-1},p)$. Using basic linear algebra, we conclude that generically (i.e., outside the set of parameters that make the coefficient matrix not full rank), $(x_n,y_n)$ has a unique solution in $\mathbb{C}$ (counted with multiplicity). Altogether, this shows that the system~\eqref{ex:BTL_2mixtures_pairwise_p} has two solutions in $\mathbb{C}$ (counted with multiplicity) and hence conclude the proof for the case $n \geq 6$.
\end{proof}

\subsubsection{Known mixing probabilities}\label{apn-sub:examples_btl_proofs}
\begin{proof}[Proof of Proposition~\ref{prop:btl}]
This is proved by induction on $n$.
\paragraph{Case $n=5$.}
In this case, we can expand the system~\eqref{ex:BTL_2mixtures_pairwise} so that all coefficients are given by polynomials in $(\bm{a},\bm{b})$:
\begin{equation}\label{ex:BTL_2mixtures_pairwise_poly_ab_n=5}
\begin{cases}
x_1= y_1=1, \\
(p_1a_j(b_i+b_j)+p_2b_j(a_i+a_j))x_iy_i - (p_1a_i(b_i+b_j)+p_2b_i(a_i+a_j))x_j y_j+ \\ (p_1a_j(b_i+b_j)-p_2b_i(a_i+a_j))x_iy_j - (p_1a_i(b_i+b_j)-p_2b_j(a_i+a_j))x_jy_i  = 0, & \forall i < j \in [5].
\end{cases}
\end{equation}
Note this this is not a faithful transformation of the system~\eqref{ex:BTL_2mixtures_pairwise} (but can only increase the number of solutions). To proceed, we need to determine $Z(\bm{a}_{2:5},\bm{b}_{2:5})$ introduced in equation~\eqref{defn:Z(t)}. This can be done by Magma.

\lstset{  
  basicstyle=\footnotesize,
  breaklines=true,
  postbreak=\mbox{\textcolor{red}{$\hookrightarrow$}\space},
}

\begin{lstlisting}[language=Magma,caption=Gr\"{o}nber basis of BTL models with unknown mixing probabilities,label=lst_btl_ab,frame=single,basicstyle=\scriptsize]
P<x1,x2,x3,x4,x5,y1,y2,y3,y4,y5,
  a1,a2,a3,a4,a5,b1,b2,b3,b4,b5>:=FreeAlgebra(Rationals(),20,"lex");

I:=ideal<P|x1-1,y1-1,a1-1,b1-1,
(a1*b1+3/10*a2*b1+7/10*a1*b2)*(x1+x2)*(y1+y2)+
(a1+a2)*(b1+b2)*(-3/10*(x1+x2)*y1-7/10*x1*(y1+y2)),
(a1*b1+3/10*a3*b1+7/10*a1*b3)*(x1+x3)*(y1+y3)+
(a1+a3)*(b1+b3)*(-3/10*(x1+x3)*y1-7/10*x1*(y1+y3)),
(a1*b1+3/10*a4*b1+7/10*a1*b4)*(x1+x4)*(y1+y4)+
(a1+a4)*(b1+b4)*(-3/10*(x1+x4)*y1-7/10*x1*(y1+y4)),
(a1*b1+3/10*a5*b1+7/10*a1*b5)*(x1+x5)*(y1+y5)+
(a1+a5)*(b1+b5)*(-3/10*(x1+x5)*y1-7/10*x1*(y1+y5)),
(a2*b2+3/10*a3*b2+7/10*a2*b3)*(x2+x3)*(y2+y3)+
(a2+a3)*(b2+b3)*(-3/10*(x2+x3)*y2-7/10*x2*(y2+y3)),
(a2*b2+3/10*a4*b2+7/10*a2*b4)*(x2+x4)*(y2+y4)+
(a2+a4)*(b2+b4)*(-3/10*(x2+x4)*y2-7/10*x2*(y2+y4)),
(a2*b2+3/10*a5*b2+7/10*a2*b5)*(x2+x5)*(y2+y5)+
(a2+a5)*(b2+b5)*(-3/10*(x2+x5)*y2-7/10*x2*(y2+y5)),
(a3*b3+3/10*a4*b3+7/10*a3*b4)*(x3+x4)*(y3+y4)+
(a3+a4)*(b3+b4)*(-3/10*(x3+x4)*y3-7/10*x3*(y3+y4)),
(a3*b3+3/10*a5*b3+7/10*a3*b5)*(x3+x5)*(y3+y5)+
(a3+a5)*(b3+b5)*(-3/10*(x3+x5)*y3-7/10*x3*(y3+y5)),
(a4*b4+3/10*a5*b4+7/10*a4*b5)*(x4+x5)*(y4+y5)+
(a4+a5)*(b4+b5)*(-3/10*(x4+x5)*y4-7/10*x4*(y4+y5))>;

-> GroebnerBasis(I);
\end{lstlisting}
From the output, we obtain
\begin{equation}\label{BTL_bad}
\mathrm{Bad}(\bm{a}_{2:5},\bm{b}_{2:5})=\{3a_ib_i-4a_jb_i-7a_jb_j,10a_ib_i+7a_ib_j+3a_jb_i: \forall i \neq j \in [5]\}.
\end{equation}
Based on Theorem~\ref{thm:general-case}, we claim by checking Assumptions~\ref{assum:key-assumption1} and~\ref{assum:key-assumption2} that the system~\eqref{ex:BTL_2mixtures_pairwise_poly_ab_n=5} has exactly three solutions in $\mathbb{C}$ (counted with multiplicity) for all $(\bm{a}_{2:5},\bm{b}_{2:5}) \in Q_{BTL}^{8}$ but a set of $\lambda_{8}$-measure zero.

Assumption~\ref{assum:key-assumption1}: This is clear, since
\begin{gather}\label{3-solutions}
(\bm{x},\bm{y})=(\bm{a}_{1:5},\bm{b}_{1:5}),\left(a_1,0,0,0,0;b_1,\left(\dfrac{1-\eta_{1j}}{\eta_{1j}-p_1}\right)_{j=2,\ldots,5}\right),\left(a_1,\left(\dfrac{1-\eta_{1j}}{\eta_{1j}-p_2}\right)_{j=2,\ldots,5};b_1,0,0,0,0\right)
\end{gather}
are three (distinct) solutions of the system~\eqref{ex:BTL_2mixtures_pairwise_poly_ab_n=5} for all $(\bm{a}_{2:5},\bm{b}_{2:5}) \in \mathbb{C}^{8} \setminus E$, where $E \subsetneq \mathbb{A}_{\mathbb{C}}^{8}$ is the Zariski closed subset defined by
\begin{equation}\label{E-for-BTL}
E:=\bigcup_{j=2}^5 \text{zero-set}(\eta_{1j}(\bm{a},\bm{b})-p_1) \cup \bigcup_{j=2}^5 \text{zero-set}(\eta_{1j}(\bm{a},\bm{b})-p_2) \cup \text{zero-set}(1-\eta_{12}(\bm{a},\bm{b})) \subseteq \mathbb{A}_{\mathbb{C}}^{8}.
\end{equation}
Assumption~\ref{assum:key-assumption2}: Let $(\bm{a'}_{1:5},\bm{b'}_{1:5})=(1,2,3,4,5;1,8,9,3,2)$. It is routine to check that $(\bm{a'}_{2:5},\bm{b'}_{2:5}) \in \mathbb{C}^{8} \setminus Z(\bm{a}_{2:5},\bm{b}_{2:5})$ using equations~\eqref{BTL_bad} and~\eqref{E-for-BTL}. Since the associated equation system~\eqref{ex:BTL_2mixtures_pairwise_poly_ab_n=5} has $\mathbb{Q}$-coefficients, we can use Magma  to check whether it has exactly three solutions in $\mathbb{C}$ (counted with multiplicity) for this $(\bm{a}'_{1:5},\bm{b}'_{1:5})$. 
\begin{lstlisting}[language=Magma,caption=Dimension and degree computations of BTL models with known mixing probabilities,label=lst_btl_ab_5,frame=single,basicstyle=\scriptsize]
a:=[1,2,3,4,5];
b:=[1,8,9,3,2];
p1:=7/10;

k:=Rationals();
A<x1,x2,x3,x4,x5,y1,y2,y3,y4,y5>:=AffineSpace(k,10);
P:=Scheme(A,[
x1-1,
y1-1,
(a[1]*b[1]+p1*a[2]*b[1]+(1-p1)*a[1]*b[2])*(x1+x2)*(y1+y2)+
(a[1]+a[2])*(b[1]+b[2])*(-p1*(x1+x2)*y1-(1-p1)*x1*(y1+y2)),
(a[1]*b[1]+p1*a[3]*b[1]+(1-p1)*a[1]*b[3])*(x1+x3)*(y1+y3)+
(a[1]+a[3])*(b[1]+b[3])*(-p1*(x1+x3)*y1-(1-p1)*x1*(y1+y3)),
(a[1]*b[1]+p1*a[4]*b[1]+(1-p1)*a[1]*b[4])*(x1+x4)*(y1+y4)+
(a[1]+a[4])*(b[1]+b[4])*(-p1*(x1+x4)*y1-(1-p1)*x1*(y1+y4)),
(a[1]*b[1]+p1*a[5]*b[1]+(1-p1)*a[1]*b[5])*(x1+x5)*(y1+y5)+
(a[1]+a[5])*(b[1]+b[5])*(-p1*(x1+x5)*y1-(1-p1)*x1*(y1+y5)),
(a[2]*b[2]+p1*a[3]*b[2]+(1-p1)*a[2]*b[3])*(x2+x3)*(y2+y3)+
(a[2]+a[3])*(b[2]+b[3])*(-p1*(x2+x3)*y2-(1-p1)*x2*(y2+y3)),
(a[2]*b[2]+p1*a[4]*b[2]+(1-p1)*a[2]*b[4])*(x2+x4)*(y2+y4)+
(a[2]+a[4])*(b[2]+b[4])*(-p1*(x2+x4)*y2-(1-p1)*x2*(y2+y4)),
(a[2]*b[2]+p1*a[5]*b[2]+(1-p1)*a[2]*b[5])*(x2+x5)*(y2+y5)+
(a[2]+a[5])*(b[2]+b[5])*(-p1*(x2+x5)*y2-(1-p1)*x2*(y2+y5)),
(a[3]*b[3]+p1*a[4]*b[3]+(1-p1)*a[3]*b[4])*(x3+x4)*(y3+y4)+
(a[3]+a[4])*(b[3]+b[4])*(-p1*(x3+x4)*y3-(1-p1)*x3*(y3+y4)),
(a[3]*b[3]+p1*a[5]*b[3]+(1-p1)*a[3]*b[5])*(x3+x5)*(y3+y5)+
(a[3]+a[5])*(b[3]+b[5])*(-p1*(x3+x5)*y3-(1-p1)*x3*(y3+y5)),
(a[4]*b[4]+p1*a[5]*b[4]+(1-p1)*a[4]*b[5])*(x4+x5)*(y4+y5)+
(a[4]+a[5])*(b[4]+b[5])*(-p1*(x4+x5)*y4-(1-p1)*x4*(y4+y5))
]);

-> Dimension(P);
-> @0@

-> Degree(P);
-> @3@
\end{lstlisting}
From Listing~\ref{lst_btl_ab_5}, $\texttt{Dimension(P)=0}$ and $\texttt{Degree(P)=3}$ means the system~\eqref{ex:BTL_2mixtures_pairwise_poly_ab_n=5} has exactly three solutions in $\mathbb{C}$ (counted with multiplicity) for this choice of $(\bm{a'}_{2:5},\bm{b'}_{2:5})$. 

Thus, by Theorem~\ref{thm:general-case}, the system~\eqref{ex:BTL_2mixtures_pairwise_poly_ab_n=5} has exactly three solutions in $\mathbb{C}$ (counted with multiplicity) for all $(\bm{a}_{2:5},\bm{b}_{2:5}) \in Q_{BTL}^{8}$ but a set $V_5$ of $\lambda_{8}$-measure zero. Replacing $V_5$ by $V_5 \cup E$ if necessary (since $E \subseteq \mathbb{C}^8$ is of $\lambda_{8}$-measure zero), we may assume $E \subseteq V_5$. This implies that for any $(\bm{a}_{2:5},\bm{b}_{2:5}) \in Q_{BTL}^{8} \setminus V_5$, the three solutions of the system~\eqref{ex:BTL_2mixtures_pairwise_poly_ab_n=5} are necessarily given by equation~\eqref{3-solutions}, of which the first is always a solution of~\eqref{ex:BTL_2mixtures_pairwise}, while the last two are not allowed by the system~\eqref{ex:BTL_2mixtures_pairwise}. Altogether, this shows that the system~\eqref{ex:BTL_2mixtures_pairwise} has a unique solution in $\mathbb{C}$ (counted with multiplicity) for all $(\bm{a}_{2:5},\bm{b}_{2:5}) \in Q_{BTL}^{8}$ but a set $V_5$ of $\lambda_{8}$-measure zero. 

\paragraph{Case $n \geq 5$.}
Suppose the conclusion holds for $n-1$. We need to prove that
\begin{gather}\label{equ:ind}
\begin{cases}
x_1=y_1=1 \\
p_1\dfrac{x_i}{x_i+x_j}+p_2\dfrac{y_i}{y_i+y_j}=\eta_{ij},\ \forall i<j \in [n]
\end{cases}
\end{gather}
has a unique solution in $\mathbb{C}$ (counted with multiplicity) for all $(\bm{a}_{2:n},\bm{b}_{2:n}) \in Q_{BTL}^{2n-2}$ but a set $V_n$ of $\lambda_{2n-2}$-measure zero. We begin by splitting the system~\eqref{equ:ind} into two parts:
\begin{gather}\label{equ:ind1}
\begin{cases}
x_1=y_1=1 \\
p_1\dfrac{x_i}{x_i+x_j}+p_2\dfrac{y_i}{y_i+y_j}=\eta_{ij}, & \forall i<j \in [n-1]
\end{cases}
\end{gather}
and
\begin{gather}\label{equ:ind2}
p_1\dfrac{x_i}{x_i+x_n}+p_2\dfrac{y_i}{y_i+y_n}=\eta_{in}, \quad \forall i=1,\ldots,n-1.
\end{gather}
By the induction hypothesis, there exists a $\lambda_{2n-4}$-measure zero subset $V_{n-1} \subseteq \mathbb{C}^{2n-4}$ such that the system~\eqref{equ:ind1} has a unique solution in $\mathbb{C}$ (counted with multiplicity) for any $(\bm{a}_{2:n-1},\bm{b}_{2:n-1}) \in Q_{BTL}^{2n-4} \setminus V_{n-1}$, given by
\[
(\bm{x}_{1:n-1},\bm{y}_{1:n-1})=(\bm{a}_{1:n-1},\bm{b}_{1:n-1}).
\]
Plugging this solution into equation~\eqref{equ:ind2} and simplifying, we obtain
\begin{gather}\label{equ:ind4}
(\eta_{in}-1)a_ib_i+(\eta_{in}-p_1)a_iy_n+(\eta_{in}-p_2)b_ix_n+\eta_{in}x_ny_n=0, \quad \forall i=1,\ldots,n-1.
\end{gather}
Since $\eta_{1n} \neq 0$, by the case of $i=1$ in equation~\eqref{equ:ind4}, we have
\[
x_ny_n=\dfrac{(\eta_{1n}-1)a_1b_1+(\eta_{1n}-p_1)a_1y_n+(\eta_{1n}-p_2)b_1x_n}{\eta_{1n}}.
\]
Plugging this into the cases of $i=2$ and $3$ in equation~\eqref{equ:ind4}, we obtain the following system of linear equations in $(x_n,y_n)$:
\begin{equation}\label{equ:ind3}
\begin{cases}
(\eta_{1n}(\eta_{2n}-p_2)b_2-\eta_{2n}(\eta_{1n}-p_2))x_n+(\eta_{1n}(\eta_{2n}-p_1)a_2-\eta_{2n}(\eta_{1n}-p_1))y_n=c_2, \\
(\eta_{1n}(\eta_{3n}-p_2)b_3-\eta_{3n}(\eta_{1n}-p_2))x_n+(\eta_{1n}(\eta_{3n}-p_1)a_3-\eta_{3n}(\eta_{1n}-p_1))y_n=c_3,
\end{cases}
\end{equation}
where $c_i:=\dfrac{\eta_{in}}{\eta_{1n}}(\eta_{1n}-1)-(\eta_{in}-1)x_iy_i$. We denote the coefficient matrix of the system~\eqref{equ:ind3} by
\[
A:=\begin{bmatrix}
\eta_{1n}(\eta_{2n}-p_2)b_2-\eta_{2n}(\eta_{1n}-p_2) & \eta_{1n}(\eta_{2n}-p_1)a_2-\eta_{2n}(\eta_{1n}-p_1) \\
\eta_{1n}(\eta_{3n}-p_2)b_3-\eta_{3n}(\eta_{1n}-p_2) & \eta_{1n}(\eta_{3n}-p_1)a_3-\eta_{3n}(\eta_{1n}-p_1)
\end{bmatrix},
\]
and define $
W_{n}:=\{(\bm{a}_{2:n},\bm{b}_{2:n}) \in \mathbb{C}^{2n-2}: \det(A) = 0\}$ and $V_n:=(V_{n-1} \times \mathbb{C}^2) \cup W_{n} \subseteq \mathbb{C}^{2n-2}$. Note that $V_n$ is also a $\lambda_{2n-2}$-measure zero subset. We finish the proof by claiming that the system~\eqref{equ:ind} has a unique solution in $\mathbb{C}$ (counted with multiplicity) for all $(\bm{a}_{2:n},\bm{b}_{2:n}) \in Q_{BTL}^{2n-2}$ but the set $V_n$. Indeed, for any $(\bm{a}_{2:n},\bm{b}_{2:n}) \in Q_{BTL}^{2n-2}-V_n$, we have the following conclusions:
\begin{enumerate}
\item 
Since $(\bm{a}_{2:n-1},\bm{b}_{2:n-1}) \notin V_{n-1}$, the system~\eqref{equ:ind1} has a unique solution in $\mathbb{C}$ (counted with multiplicity), given by $(\bm{x}_{1:n-1},\bm{y}_{1:n-1})=(\bm{a}_{1:n-1},\bm{b}_{1:n-1})$.
\item
Since $(\bm{a}_{2:n},\bm{b}_{2:n}) \notin W_{n}$, the system~\eqref{equ:ind3} has a unique solution in $\mathbb{C}$ (counted with multiplicity), given by $(x_n,y_n)=(a_n,b_n)$.
\end{enumerate}
This shows that the system~\eqref{equ:ind} has a unique solution in $\mathbb{C}$ (counted with multiplicity), since $(\bm{x},\bm{y})=(\bm{a},\bm{b})$ is always a solution. Tracing back to our notation in Section~\ref{sec:btl_model}, this means $\lambda_{2n-2}(N_{BTL}^{2n-2}) \leq  \lambda_{2n-2}(V_n) = 0$.
\end{proof}

\subsection{Mixtures of MNL models with 3-slate}

Here, we present the results for mixtures of MNL models with 3-slate.

\subsubsection{Unknown mixing probabilities}

First suppose $p_1$ is unknown. We study the equation system in variables $(\bm{x},\bm{y},p)$ and formally write the conclusion that the equation system achieves generic identifiability up to reordering:

\begin{theorem}
\label{prop-thm:ex-mnl3_p}
If $n \geq 4$, the system~\eqref{ex:BTL_2mixtures_triplet_p} has exactly two solutions in $\mathbb{C}$ (counted with multiplicity) for all $(\bm{a}_{2:n},\bm{b}_{2:n},p_1) \in Q^{2n-1}_{MNL,p}$ but a set of $\lambda_{2n-1}$-measure zero, given by $(\bm{x},\bm{y},p)=(\bm{a},\bm{b},p_1)$ and $(\bm{x},\bm{y},p)=(\bm{b},\bm{a},1-p_1)$.
\end{theorem}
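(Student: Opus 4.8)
The plan is to mirror the proof of Theorem~\ref{prop-thm:btl_p} verbatim in structure, replacing pairwise slates by triples and running induction on $n$ with base case $n=4$. First I would convert the rational system~\eqref{ex:BTL_2mixtures_triplet_p} into an equivalent polynomial system with coefficients that are polynomials in $(\bm{a},\bm{b},p_1)$. To do this I clear the triple-sum denominators and introduce auxiliary variables $t_{ijk},h_{ijk}$ together with the equations $t_{ijk}(x_i+x_j+x_k)=1$ and $h_{ijk}(y_i+y_j+y_k)=1$, which force the denominators to be nonzero; this conversion preserves the number of complex solutions counted with multiplicity. As in the BTL case, it suffices to retain auxiliary equations for just enough triples to render the system zero-dimensional, since dropping constraints can only enlarge the solution set, so matching the count on the smaller system forces the count on the full one.

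For the base case $n=4$ I would apply Theorem~\ref{thm:general-case} with $K=Q^{7}_{MNL,p}$ and $\ell=2$. Assumption~\ref{assum:key-assumption1} holds because the two solutions $(\bm{x},\bm{y},p)=(\bm{a},\bm{b},p_1)$ and $(\bm{b},\bm{a},1-p_1)$ (with the matching values of $t,h$) are genuine solutions, and they are distinct whenever $p_1\neq 1/2$; thus they are distinct for all parameters outside a Zariski-closed proper subset $E_4$ defined by $p_1=1/2$ or the vanishing of some triple-sum denominator. For Assumption~\ref{assum:key-assumption2} I would compute a Gr\"obner basis in Magma to read off $\mathrm{Bad}(\bm{a},\bm{b},p_1)$ and hence $Z(\bm{t})$, then exhibit a concrete rational instance $(\bm{a}',\bm{b}',p_1')$ lying outside $Z(\bm{t})$ for which Magma reports dimension $0$ and degree $2$, i.e.\ exactly two complex solutions counted with multiplicity. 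Theorem~\ref{thm:general-case} then yields exactly two solutions for all parameters in $Q^{7}_{MNL,p}$ outside a $\lambda_7$-measure-zero set.

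For the inductive step ($n\ge 5$) I would split the equations into those indexed by triples contained in $[n-1]$ and those involving item $n$. The induction hypothesis gives, generically, exactly the two solutions $(\bm{x}_{1:n-1},\bm{y}_{1:n-1},p)=(\bm{a}_{1:n-1},\bm{b}_{1:n-1},p_1)$ or $(\bm{b}_{1:n-1},\bm{a}_{1:n-1},1-p_1)$ for the first block. Fixing either branch, each remaining equation indexed by a pair $i<j$ in $[n-1]$ together with $n$ becomes, after clearing denominators, a bilinear equation $A_{ij}x_ny_n+B_{ij}x_n+C_{ij}y_n+D_{ij}=0$ with $A_{ij}=-\eta_{ijn}$. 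Using one equation with $\eta_{ijn}\neq 0$ to express the product $x_ny_n$ linearly in $(x_n,y_n)$ and substituting into two others eliminates the bilinear term, producing a $2\times 2$ linear system in $(x_n,y_n)$; generically (away from the measure-zero locus where its determinant vanishes) this has the unique solution $(a_n,b_n)$, respectively $(b_n,a_n)$. Since $\binom{n-1}{2}\ge 6\ge 3$ for $n\ge 5$, enough pairs are available for this elimination. Thus the full system has exactly the two claimed solutions, and the bad parameter set stays of $\lambda_{2n-1}$-measure zero.

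The main obstacle is the base case $n=4$: one must assemble the polynomial system with precisely enough auxiliary $t,h$ equations to make it zero-dimensional, and then certify in Magma both that the chosen instance $(\bm{a}',\bm{b}',p_1')$ avoids $Z(\bm{t})$ and that its solution scheme has degree exactly $2$. The inductive step is essentially routine linear algebra; the only care needed there is to confirm the availability of at least three slates through item $n$ so the bilinear-elimination trick applies, and to track that multiplicities are preserved, which follows because the base-case solutions are simple and the linear solve is transversal outside a measure-zero set.
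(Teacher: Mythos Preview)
Your proposal is correct and follows essentially the same route as the paper: clear denominators with auxiliary $t_{ijk},h_{ijk}$ variables, verify Assumptions~\ref{assum:key-assumption1} and~\ref{assum:key-assumption2} for the base case $n=4$ via Magma on a judiciously chosen subsystem, and then perform the same bilinear-elimination induction step you describe (the paper uses the triples $(1,2,n),(1,3,n),(2,3,n)$ explicitly). The only minor difference is that the paper defers the induction step to the identical argument written out for Proposition~\ref{prop:ex-mnl3}, whereas you sketch it directly; the content is the same.
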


\begin{proof}[Proof of Theorem~\ref{prop-thm:ex-mnl3_p}]

When we consider $p$ as an variable in the polynomial functions, our domain $Q_{MNL,p}^{2n-1}$ becomes $Q_{MNL}^{2n-2} \times (0,1) \subseteq \real^{2n-1}$. We solve the equation system
\begin{align*}
\begin{cases}
x_1 = y_1 = 1, \\
p\dfrac{x_i}{x_i + x_j + x_k} +  (1-p)\dfrac{y_i}{y_i + y_j + y_k} = \eta_{i,j,k}, & \forall i < j < k \in [n].
\end{cases}
\end{align*}
Accordingly, we define the set of bad parameters $N^{2n-1}_{MNL,p}$ in the same way as we did for $N^{2n-1}_{BTL,p}$.

\paragraph{Case $n=4$.} In this case, we can expand the system~\eqref{ex:BTL_2mixtures_triplet_p} such that its coefficients are given by polynomials in $(\bm{a},\bm{b})$:
\begin{equation}\label{ex:BTL_2mixtures_triplet_p_poly_}
\begin{cases}
x_1=y_1=1, \\
(px_i(y_i+y_j+y_k)+(1-p)y_i(x_i+x_j+x_k))(b_i+b_j+b_k)(a_i+a_j+a_k)- \\
\left(p_1a_i(b_i+b_j+b_k)+(1-p_1)b_i(a_i+a_j+a_k)\right)(x_i+x_j+x_k)(y_i+y_j+y_k) = 0, & \forall i < j < k \in [4], \\
t_{i,j,k}(x_i+x_j+x_k) = 1, & \forall i < j < k \in [4], \\
h_{i,j,k}(y_i+y_j+y_k) = 1. & \forall i < j < k \in [4]. \\
\end{cases}
\end{equation}
Note this this is a faithful transformation of the system~\eqref{ex:BTL_2mixtures_triplet_p}. We consider the following subset of equations in the system~\eqref{ex:BTL_2mixtures_triplet_p_poly_}:
\begin{equation}\label{ex:BTL_2mixtures_triplet_p_poly_subset}
\begin{cases}
x_1=y_1=1, \\
(px_i(y_i+y_j+y_k)+(1-p)y_i(x_i+x_j+x_k))(b_i+b_j+b_k)(a_i+a_j+a_k)- \\
\left(p_1a_i(b_i+b_j+b_k)+(1-p_1)b_i(a_i+a_j+a_k)\right)(x_i+x_j+x_k)(y_i+y_j+y_k) = 0, & \forall i < j < k \in [4], \\
t_{123}(x_1+x_2+x_3) = 1, \\
t_{124}(x_1+x_2+x_4) = 1, \\
h_{123}(y_1+y_2+y_3) = 1, \\
h_{124}(y_1+y_2+y_4) = 1.
\end{cases}
\end{equation}
To proceed, we need to determine $Z(\bm{a}_{2:4},\bm{b}_{2:4},p_1)$ defined by equation~\eqref{defn:Z(t)}, which can be done using Magma.
\begin{lstlisting}[language=Magma,caption=Gr\"{o}bner basis of MNL models with unknown mixing probabilities,label={lst:3mix_p_un},frame=single,basicstyle=\scriptsize]
P<t123,t124,h123,h124,p,x2,x3,x4,y2,y3,y4,
  a2,a3,a4,b2,b3,b4,p1>:=FreeAlgebra(Rationals(),18,"lex");

I:=ideal<P|
(p*(1+y2+y3)+(1-p)*(1+x2+x3))*(1+b2+b3)*(1+a2+a3)-
(p1*(1+b2+b3)+(1-p1)*(1+a2+a3))*(1+x2+x3)*(1+y2+y3),
(p*x2*(1+y2+y3)+(1-p)*y2*(1+x2+x3))*(1+b2+b3)*(1+a2+a3)-
(p1*a2*(1+b2+b3)+(1-p1)*b2*(1+a2+a3))*(1+x2+x3)*(1+y2+y3),
(p*(1+y2+y4)+(1-p)*(1+x2+x4))*(1+b2+b4)*(1+a2+a4)-
(p1*(1+b2+b4)+(1-p1)*(1+a2+a4))*(1+x2+x4)*(1+y2+y4),
(p*x2*(1+y2+y4)+(1-p)*y2*(1+x2+x4))*(1+b2+b4)*(1+a2+a4)-
(p1*a2*(1+b2+b4)+(1-p1)*b2*(1+a2+a4))*(1+x2+x4)*(1+y2+y4),
(p*(1+y3+y4)+(1-p)*(1+x3+x4))*(1+b3+b4)*(1+a3+a4)-
(p1*(1+b3+b4)+(1-p1)*(1+a3+a4))*(1+x3+x4)*(1+y3+y4),
(p*x3*(1+y3+y4)+(1-p)*y3*(1+x3+x4))*(1+b3+b4)*(1+a3+a4)-
(p1*a3*(1+b3+b4)+(1-p1)*b3*(1+a3+a4))*(1+x3+x4)*(1+y3+y4),
(p*x2*(y2+y3+y4)+(1-p)*y2*(x2+x3+x4))*(b2+b3+b4)*(a2+a3+a4)-
(p1*a2*(b2+b3+b4)+(1-p1)*b2*(a2+a3+a4))*(x2+x3+x4)*(y2+y3+y4),
(p*x3*(y2+y3+y4)+(1-p)*y3*(x2+x3+x4))*(b2+b3+b4)*(a2+a3+a4)-
(p1*a3*(b2+b3+b4)+(1-p1)*b3*(a2+a3+a4))*(x2+x3+x4)*(y2+y3+y4),
t123*(1+x2+x3)-1,
t124*(1+x2+x4)-1,
h123*(1+y2+y3)-1,
h124*(1+y2+y4)-1>;

-> GroebnerBasis(I);
\end{lstlisting}
From the output, we obtain
\begin{equation}\label{MNLp_bad-1}
\begin{split}
&\mathrm{Bad}(\bm{a}_{2:4},\bm{b}_{2:4},p_1)=\\
&\begin{cases}
-1 - 2 a_4 (1 - p_1) - 2 b_4 p_1 \\
(1 + a_2 + a_3) (1 + b_2 + b_3) \\
(1 + a_2 + a_4) (1 + b_2 + b_4) \\
(1 + a_3 + a_4) (1 + b_3 + b_4) \\
(a_2 + a_3 + a_4) (b_2 + b_3 + b_4) \\
a_2 - a_4 (1 - p_1) - a_2 p_1 + (b_2 - b_4) p_1 \\
a_3 - a_4 (1 - p_1) - a_3 p_1 + (b_3 - b_4) p_1 \\
-1 - b_2 + a_3 b_2 - a_2 (1 - b_3) - a_4 (1 - b_3) + a_3 b_3 - b_4 + a_3 b_4 \\
a_2 (1 + b_2 + b_3) + (1 + a_3) (b_2 - b_4) - a_4 (1 + b_3 + b_4) \\
a_3 (1 + b_2 + b_3) + (1 + a_2) (b_3 - b_4) - a_4 (1 + b_2 + b_4) \\
(a_3 + a_4) b_2 (1 - p_1) + a_2 (b_2 + (b_3 + b_4) p_1)\\
(a_3 + a_4) (1 - p_1) + (1 + (b_3 + b_4) p_1) \\
(a_2 + a_3) (1 - p_1) + (1 + (b_2 + b_3) p_1) \\
(a_2 + a_4) (1 - p_1) + (1 + (b_2 + b_4) p_1) \\
(a_3 + a_4) b_2 (1 - p_1) + a_2 (b_2 + (b_3 + b_4) p_1) \\
(a_4 + a_2) b_3 (1 - p_1) + a_3 (b_3 + b_2 p_1 + b_3 p_1 + b_4 p_1)\\
a_4 - b_2 - a_3 b_2 - a_4 p_1 + a_3 b_2 p_1 + b_4 p_1 - a_2 (1 + b_2 + b_3 p_1)\\
(a_2 b_3 - b_4) p_1 - b_2 (1 - a_3 + a_3 p_1) - (1 + a_2 + a_4 - a_4 p_1)\\
1 + b_2 + a_4 (1 + b_2) (1 - p_1) + b_4 p_1 + a_2 (1 + b_2 + b_4 p_1)\\
a_2 (b_3 + b_4) (1 - p_1) + (a_3 + a_4) (b_3 + b_4 + b_2 p_1)\\
a_4 (b_2 + b_4) + a_2 (b_2 + b_4 + b_3 p_1) + a_3 (b_2 + b_4 - b_2 p_1 - b_3 p_1 - b_4 p_1)\\
a_2 (b_3 + b_4) (1 - p_1) + (a_3 + a_4) (b_3 + b_4 + b_2 p_1)\\
(1 + a_2) b_4 p_1 + (1 + a_2 + a_4 - a_4 p_1) + b_2 (1 + a_2 + a_4 - a_4 p_1)\\
(1 + a_2) b_3 p_1 + (1 + a_2 + a_3 - a_3 p_1) + b_2 (1 + a_2 + a_3 - a_3 p_1)\\
a_2 (1 + b_2 + b_4) + a_3 (1 + b_3 + b_4) + (1 + a_4) (2 + b_2 + b_3 + 2 b_4)\\
1 - a_3 b_2 (1 - p_1) + a_4 (2 + b_2) (1 - p_1) - a_2 b_3 p_1 + 2 b_4 p_1 + a_2 b_4 p_1\\
-2 - 2 a_4 - a_2 (1 - p_1) - a_3 (1 - p_1) + 2 a_4 p_1 - (b_2 + b_3 + 2 b_4) p_1\\
a_4 (1 + b_2) p_1 + b_4 (1 + a_2 + a_4 - p_1 - a_2 p_1)\\
a_3 (1 + b_2) p_1 + b_3 (1 + a_2 + a_3 - p_1 - a_2 p_1)\\
a_3 (1 + b_2) (1 - p_1) - a_4 (1 + b_2) (1 - p_1) + (1 + a_2) (b_3 - b_4) p_1\\
2 + a_2 + a_3 + 2 a_4 - a_3 p_1 - 2 a_4 p_1 + (b_3 + (2 + a_2) b_4) p_1 + b_2 (1 + a_2 + a_4 - a_4 p_1)\\
-a_4 b_3 + b_4 - a_2 b_3 (1 - p_1) + a_4 p_1 - b_4 p_1 - a_3 (b_3 + b_4 + b_2 p_1)\\
1 - a_2 b_3 + b_3 p_1 + a_2 b_3 p_1 + b_4 p_1 + a_4 (1 - b_3 - p_1) - a_3 (b_3 + p_1 + b_2 p_1 + b_3 p_1 + b_4 p_1-1)\\
(1 + a_2) (b_3 - b_4) (1 - p_1) + a_3 (b_3 + p_1 + b_2 p_1) - a_4 (b_4 + p_1 + b_2 p_1)\\
1 + a_2 (b_2 + b_4) + a_4 (1 + b_2 + b_4 - p_1) + b_3 p_1 (1 + a_2 ) + b_4 p_1 + a_3 (1 + b_2 (1 - p_1) + b_4 (1 - p_1) - p_1 - b_3 p_1)
\end{cases}
\end{split}
\end{equation}
Based on Theorem~\ref{thm:general-case}, we see that by checking Assumptions~\ref{assum:key-assumption1} and~\ref{assum:key-assumption2}, the system~\eqref{ex:BTL_2mixtures_triplet_p_poly_subset} has exactly two solutions in $\mathbb{C}$ (counted with multiplicity) for all $(\bm{a}_{2:4},\bm{b}_{2:4},p_1) \in Q_{MNL,p}^{7}$ but a set of $\lambda_{7}$-measure zero.

Assumption~\ref{assum:key-assumption1}: This is clear, since
\begin{equation}\label{2-solutions}
\begin{aligned}
(\bm{x}_{1:4},\bm{y}_{1:4},\bm{t},\bm{h},p)=&\left(\bm{a}_{1:4},\bm{b}_{1:4},\left(\dfrac{1}{a_i+a_j+a_k}\right)_{i<j<k},\left(\dfrac{1}{b_i+b_j+b_k}\right)_{i<j<k},p_1\right) \mbox{ or }\\
&\left(\bm{b}_{1:4},\bm{a}_{1:4},\left(\dfrac{1}{b_i+b_j+b_k}\right)_{i<j<k},\left(\dfrac{1}{a_i+a_j+a_k}\right)_{i<j<k},1-p_1\right)
\end{aligned}
\end{equation}
are two (distinct) solutions of the system~\eqref{ex:BTL_2mixtures_triplet_p_poly_subset} for all $(\bm{a}_{2:4},\bm{b}_{2:4},p_1) \in \mathbb{C}^{7} \setminus E$, where $E \subseteq \mathbb{A}_{\mathbb{C}}^7$ is the Zariski closed proper subset defined by
\[
E:=\{(\bm{a}_{2:4},\bm{b}_{2:4},p_1) \in \mathbb{A}_{\mathbb{C}}^6 \times \mathbb{A}_{\mathbb{C}}^1: p_1-0.5=0 \text{ or } a_i+a_j+a_k=0 \text{ or } b_i+b_j+b_k=0\} \subseteq \mathbb{A}_{\mathbb{C}}^7.
\]

Assumption~\ref{assum:key-assumption2}: Choose $(\bm{a'}_{1:4},\bm{b'}_{1:4},p'_1)=(1,2,3,4;1,5,4,2;0.7)$. It is routine to check that $(\bm{a'}_{2:4},\bm{b'}_{2:4},p'_1) \in \mathbb{C}^{7} \setminus Z(\bm{a}_{2:4},\bm{b}_{2:4},p_1)$, using equation~\eqref{MNLp_bad-1}. Since the associated equation system~\eqref{ex:BTL_2mixtures_triplet_p_poly_subset} has $\mathbb{Q}$-coefficients, we can use Magma to check whether it has exactly two solutions in $\mathbb{C}$ (counted with multiplicity) for this $(\bm{a'}_{2:4},\bm{b'}_{2:4},p'_1)$.
\begin{lstlisting}[language=Magma,caption=Dimension and degree computations of MNL models with unknown mixing probabilities, label=mnl3_model_abp,frame=single,basicstyle=\scriptsize]
a:=[1,2,3,4];
b:=[1,5,4,2];
p1:=7/10;

k:=Rationals();
A<x1,x2,x3,x4,y1,y2,y3,y4,p,t123,t124,h123,h124>:=AffineSpace(k,13);
P:=Scheme(A,[x1-1,y1-1,
(p*x1*(y1+y2+y3)+(1-p)*y1*(x1+x2+x3))*(b[1]+b[2]+b[3])*(a[1]+a[2]+a[3])-
(p1*a[1]*(b[1]+b[2]+b[3])+(1-p1)*b[1]*(a[1]+a[2]+a[3]))*(x1+x2+x3)*(y1+y2+y3),
(p*x2*(y1+y2+y3)+(1-p)*y2*(x1+x2+x3))*(b[1]+b[2]+b[3])*(a[1]+a[2]+a[3])-
(p1*a[2]*(b[1]+b[2]+b[3])+(1-p1)*b[2]*(a[1]+a[2]+a[3]))*(x1+x2+x3)*(y1+y2+y3),
(p*x1*(y1+y2+y4)+(1-p)*y1*(x1+x2+x4))*(b[1]+b[2]+b[4])*(a[1]+a[2]+a[4])-
(p1*a[1]*(b[1]+b[2]+b[4])+(1-p1)*b[1]*(a[1]+a[2]+a[4]))*(x1+x2+x4)*(y1+y2+y4),
(p*x2*(y1+y2+y4)+(1-p)*y2*(x1+x2+x4))*(b[1]+b[2]+b[4])*(a[1]+a[2]+a[4])-
(p1*a[2]*(b[1]+b[2]+b[4])+(1-p1)*b[2]*(a[1]+a[2]+a[4]))*(x1+x2+x4)*(y1+y2+y4),
(p*x1*(y1+y3+y4)+(1-p)*y1*(x1+x3+x4))*(b[1]+b[3]+b[4])*(a[1]+a[3]+a[4])-
(p1*a[1]*(b[1]+b[3]+b[4])+(1-p1)*b[1]*(a[1]+a[3]+a[4]))*(x1+x3+x4)*(y1+y3+y4),
(p*x3*(y1+y3+y4)+(1-p)*y3*(x1+x3+x4))*(b[1]+b[3]+b[4])*(a[1]+a[3]+a[4])-
(p1*a[3]*(b[1]+b[3]+b[4])+(1-p1)*b[3]*(a[1]+a[3]+a[4]))*(x1+x3+x4)*(y1+y3+y4),
(p*x2*(y2+y3+y4)+(1-p)*y2*(x2+x3+x4))*(b[2]+b[3]+b[4])*(a[2]+a[3]+a[4])-
(p1*a[2]*(b[2]+b[3]+b[4])+(1-p1)*b[2]*(a[2]+a[3]+a[4]))*(x2+x3+x4)*(y2+y3+y4),
(p*x3*(y2+y3+y4)+(1-p)*y3*(x2+x3+x4))*(b[2]+b[3]+b[4])*(a[2]+a[3]+a[4])-
(p1*a[3]*(b[2]+b[3]+b[4])+(1-p1)*b[3]*(a[2]+a[3]+a[4]))*(x2+x3+x4)*(y2+y3+y4),
(x1+x2+x3)*t123-1,
(x1+x2+x4)*t124-1,
(y1+y2+y3)*h123-1,
(y1+y2+y4)*h124-1
]);

-> Dimension(P);
-> @0@

-> Degree(P);
-> @2@
\end{lstlisting}
From Listing~\ref{mnl3_model_abp}, $\texttt{Dimension(P)=0}$ and $\texttt{Degree(P)=2}$ means the system~\eqref{ex:BTL_2mixtures_triplet_p_poly_subset} has exactly two solutions in $\mathbb{C}$ (counted with multiplicity) for this choice of $(\bm{a'}_{2:4},\bm{b'}_{2:4},p'_1)$. 

Thus, by Theorem~\ref{thm:general-case}, we have proved that the system~\eqref{ex:BTL_2mixtures_triplet_p_poly_subset} (and hence the system~\eqref{ex:BTL_2mixtures_triplet_p}) has exactly two solutions in $\mathbb{C}$ (counted with multiplicity) for all $(\bm{a}_{2:4},\bm{b}_{2:4},p_1) \in Q_{MNL,p}^{7}$ but a set $V_4$ of $\lambda_{7}$-measure zero.
\paragraph{Case $n \geq 4$.} This can be treated by induction in exactly the same way as in Proposition~\ref{prop:ex-mnl3}.
\end{proof}

\subsubsection{Known mixing probabilities}

In the case of mixtures of MNL models with 3-slate with known $(p_1,p_2)$, we study the equation system~\eqref{ex:BTL_2mixtures_triplet} in variables $(\bm{x},\bm{y})$ for both $p_1 \neq 0.5$ and $p_1 = 0.5$. For $p_1 \neq 0.5$, we show that the equation system achieves generic identifiability. For $p_1 = 0.5$, we show that the equation system achieves generic identifiability up to reordering. We have the proposition below:

\begin{proposition}
\label{prop:ex-mnl3}
Suppose $n \geq 4$.
\begin{enumerate}
\item 
If $p_1=0.3$, the system~\eqref{ex:BTL_2mixtures_triplet} 
has a unique solution in $\mathbb{C}$ (counted with multiplicity) for all $(\bm{a}_{2:n},\bm{b}_{2:n}) \in Q^{2n-2}_{MNL}$ but a set of $\lambda_{2n-2}$-measure zero, given by $(\bm{x},\bm{y})=(\bm{a},\bm{b})$. Thus, generic identifiability holds.
\item
If $p_1 = 0.5$, the system~\eqref{ex:BTL_2mixtures_triplet} has a unique solution (up to reordering) in $\mathbb{C}$ (counted with multiplicity) for all $(\bm{a}_{2:n},\bm{b}_{2:n}) \in Q^{2n-2}_{MNL}$ but a set of $\lambda_{2n-2}$-measure zero, given by $(\bm{x},\bm{y})=(\bm{a},\bm{b})$ or $(\bm{x},\bm{y})=(\bm{b},\bm{a})$. Thus, generic identifiability holds.
\end{enumerate}
\end{proposition}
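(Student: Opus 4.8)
The plan is to prove both parts by induction on $n$ with base case $n=4$, reusing the template of Proposition~\ref{prop:btl} (for the algebraic-geometry input) and Theorem~\ref{prop-thm:ex-mnl3_p} (for the 3-slate bookkeeping). First I would clear the denominators in the system~\eqref{ex:BTL_2mixtures_triplet}, introducing auxiliary variables $t_{ijk}(x_i+x_j+x_k)=1$ and $h_{ijk}(y_i+y_j+y_k)=1$ as in the unknown-probability case, so that the resulting polynomial system (with coefficients polynomial in $(\bm{a},\bm{b})$) is a \emph{faithful} transformation: its complex solutions, counted with multiplicity, coincide with those of~\eqref{ex:BTL_2mixtures_triplet}. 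The key structural observation is that the target count $\ell$ differs between the two parts. When $p_1=0.3\neq p_2$ there is no reordering symmetry, so I expect $\ell=1$ with the unique solution $(\bm{x},\bm{y})=(\bm{a},\bm{b})$; when $p_1=p_2=0.5$, swapping the two components preserves the system, so I expect $\ell=2$ with solutions $(\bm{a},\bm{b})$ and $(\bm{b},\bm{a})$.

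For the base case I would apply Theorem~\ref{thm:general-case} and check its two hypotheses. Assumption~\ref{assum:key-assumption1} is verified by exhibiting the $\ell$ solutions explicitly (the true parameters, together with their swap when $p_1=0.5$, with the auxiliary variables set to the reciprocals of the corresponding slate-sums) and noting that they are distinct and well-defined away from a Zariski-closed proper subset $E$ cut out by the vanishing of some slate-sum $a_i+a_j+a_k$ or $b_i+b_j+b_k$. Assumption~\ref{assum:key-assumption2} is checked by Magma on a concrete rational instance $(\bm{a}',\bm{b}')$ (a different instance for each of $p_1=0.3$ and $p_1=0.5$): I would first compute a Gr\"obner basis to extract $\mathrm{Bad}(\bm{a}_{2:4},\bm{b}_{2:4})$ and hence the set $Z$ from~\eqref{defn:Z(t)}, confirm that $(\bm{a}'_{2:4},\bm{b}'_{2:4})\notin Z$, and then compute the \texttt{Dimension} and \texttt{Degree} of the associated scheme, verifying dimension $0$ and degree $\ell$. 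As in the unknown case, I would cut down to a convenient subset of the auxiliary equations to keep the Gr\"obner and degree computations tractable.

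For the inductive step (both parts), assuming the statement for $n-1\geq 4$, I would split~\eqref{ex:BTL_2mixtures_triplet} into the equations whose slates lie entirely in $[n-1]$ and the remaining equations whose slates contain $n$. By the induction hypothesis the first block pins down $(\bm{x}_{1:n-1},\bm{y}_{1:n-1})$ to the claimed solution(s) off a measure-zero set; substituting each such solution into the second block gives polynomial equations in $(x_n,y_n)$ which, after eliminating the product monomials (mirroring the linearization in the inductive step of Proposition~\ref{prop:btl}), reduce to a linear system in $(x_n,y_n)$. Generic full rank of its coefficient matrix then forces the unique extension $(x_n,y_n)=(a_n,b_n)$, and I enlarge the exceptional set by the determinant locus, which stays of measure zero by Corollary~\ref{closed-1}.

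The main obstacle I anticipate is the $p_1=0.5$ case in the inductive step: the first block now supplies two solutions by symmetry, and I must show that each extends \emph{uniquely} through the linear system of the second block, so that the full system has exactly the two reordering-related solutions with no spurious cross-extensions. A secondary technical point is confirming that the auxiliary-variable construction genuinely suppresses the configurations with a vanishing slate-sum (which are excluded from the rational system~\eqref{ex:BTL_2mixtures_triplet} but would otherwise be created when clearing denominators), so that the Magma degree count equals the number of admissible solutions and the two parts can be separated cleanly by the single value of $p_1$.
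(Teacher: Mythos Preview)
Your proposal is correct and follows essentially the same approach as the paper: induction on $n$ with base case $n=4$ handled by Theorem~\ref{thm:general-case} plus Magma verification, and the inductive step via the split--linearize argument you describe. The one minor difference is that the paper, for the known-$p_1$ case, clears denominators \emph{without} introducing the auxiliary variables $t_{ijk},h_{ijk}$ (a deliberately non-faithful transformation that can only add solutions), and then checks via Magma that the resulting polynomial system already has degree~$1$; your choice of the faithful transformation with auxiliary variables (as in the unknown-$p$ proof) is an equally valid variant and would yield the same conclusion with a slightly larger but conceptually cleaner Magma computation. Your anticipated ``obstacle'' for $p_1=0.5$ in the inductive step is not a real difficulty: the swap symmetry of the full system at $p_1=0.5$ guarantees that plugging in $(\bm{b}_{1:n-1},\bm{a}_{1:n-1})$ into the second block produces the linear system obtained from the $(\bm{a}_{1:n-1},\bm{b}_{1:n-1})$ case by interchanging the roles of $x_n$ and $y_n$, so the unique extension is automatically $(b_n,a_n)$ and no cross-extensions arise.
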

\begin{proof}[Proof of Proposition~\ref{prop:ex-mnl3}]
It suffices to prove the case $p_1=0.7$, since the remaining case $p_1=0.5$ can be treated in exactly the same way. As before, this proposition is proved by induction on $n$.
\paragraph{Case $n=4$.} In this case, we recast the equation system~\eqref{ex:BTL_2mixtures_triplet} in the following form, with coefficients given by polynomials in $(\bm{a},\bm{b})$:
\begin{equation}\label{eq:BTL_2mixtures_triplet_poly_}
\begin{cases}
x_1=y_1=1, \\
(p_1x_i(y_i+y_j+y_k)+p_2y_i(x_i+x_j+x_k))(b_i+b_j+b_k)(a_i+a_j+a_k)- \\
\left(p_1a_i(b_i+b_j+b_k)+p_2b_i(a_i+a_j+a_k)\right)(x_i+x_j+x_k)(y_i+y_j+y_k) = 0. \quad \forall i < j < k \in [4].
\end{cases}
\end{equation}
Note this this is \emph{not} a faithful transformation of the system~\eqref{ex:BTL_2mixtures_triplet} (but can only increase the number of solutions). To proceed, we need to determine $Z(\bm{a}_{2:5},\bm{b}_{2:5})$ introduced in \eqref{defn:Z(t)}. This can be done by Magma.
\begin{lstlisting}[language=Magma,caption=Gr\"{o}bner basis of MNL models with unknown mixing probabilities,label={lst:3mix_p07},frame=single,basicstyle=\scriptsize]
P<x1,x2,x3,x4,y1,y2,y3,y4,
  a1,a2,a3,a4,b1,b2,b3,b4>:=FreeAlgebra(Rationals(),16,"lex");

I:=ideal<P|x1-1,y1-1,a1-1,b1-1,
(3/10*x1*(y1+y2+y3)+(1-3/10)*y1*(x1+x2+x3))*(b1+b2+b3)*(a1+a2+a3)-
(3/10*a1*(b1+b2+b3)+(1-3/10)*b1*(a1+a2+a3))*(x1+x2+x3)*(y1+y2+y3),
(3/10*x2*(y1+y2+y3)+(1-3/10)*y2*(x1+x2+x3))*(b1+b2+b3)*(a1+a2+a3)-
(3/10*a2*(b1+b2+b3)+(1-3/10)*b2*(a1+a2+a3))*(x1+x2+x3)*(y1+y2+y3),
(3/10*x1*(y1+y2+y4)+(1-3/10)*y1*(x1+x2+x4))*(b1+b2+b4)*(a1+a2+a4)-
(3/10*a1*(b1+b2+b4)+(1-3/10)*b1*(a1+a2+a4))*(x1+x2+x4)*(y1+y2+y4),
(3/10*x2*(y1+y2+y4)+(1-3/10)*y2*(x1+x2+x4))*(b1+b2+b4)*(a1+a2+a4)-
(3/10*a2*(b1+b2+b4)+(1-3/10)*b2*(a1+a2+a4))*(x1+x2+x4)*(y1+y2+y4),
(3/10*x1*(y1+y3+y4)+(1-3/10)*y1*(x1+x3+x4))*(b1+b3+b4)*(a1+a3+a4)-
(3/10*a1*(b1+b3+b4)+(1-3/10)*b1*(a1+a3+a4))*(x1+x3+x4)*(y1+y3+y4),
(3/10*x3*(y1+y3+y4)+(1-3/10)*y3*(x1+x3+x4))*(b1+b3+b4)*(a1+a3+a4)-
(3/10*a3*(b1+b3+b4)+(1-3/10)*b3*(a1+a3+a4))*(x1+x3+x4)*(y1+y3+y4),
(3/10*x2*(y2+y3+y4)+(1-3/10)*y2*(x2+x3+x4))*(b2+b3+b4)*(a2+a3+a4)-
(3/10*a2*(b2+b3+b4)+(1-3/10)*b2*(a2+a3+a4))*(x2+x3+x4)*(y2+y3+y4),
(3/10*x3*(y2+y3+y4)+(1-3/10)*y3*(x2+x3+x4))*(b2+b3+b4)*(a2+a3+a4)-
(3/10*a3*(b2+b3+b4)+(1-3/10)*b3*(a2+a3+a4))*(x2+x3+x4)*(y2+y3+y4)>;

-> GroebnerBasis(I);
\end{lstlisting}
From the output, we obtain
\begin{equation}\label{MNL_bad}
\mathrm{Bad}(\bm{a}_{2:4},\bm{b}_{2:4})=
\begin{cases}
(1+a_2+a_4)(1+b_2+b_4) \\
(1+a_3 + a_4) (1 + b_3 + b_4) \\
10 a_2 b_2 + 7 (a_3 + a_4) b_2 + 3 a_2 (b_3 + b_4)\\
79 + 49 a_2 + 70 a_4 + 9 b_2 - 28 a_2 b_3 - 49 a_3 b_3 - 
 49 a_4 b_3 + 58 b_4 - 49 a_3 b_4 \\
3 a_2 b_2 - 4 a_2 (b_3 + b_4) - 7 (a_3 + a_4) (b_3 + b_4) \\
51 + 21 a_2 + 42 a_4 - 19 b_2 + 28 a_3 b_2 - 21 a_3 b_3 - 
 21 a_4 b_3 + 30 b_4 - 21 a_3 b_4 \\
7 a_2 b_2 + (a_3 + a_4) (4 b_2 - 3 (b_3 + b_4)) \\
1 - a_2 b_3 + 2 b_4 + a_2 b_4 + a_3 (b_4 - b_2) + a_4 (2 + b_2 + b_3 + 2 b_4) \\
-60 - 21 a_3 - 42 a_4  + 70 a_2 b_2 + 49 a_4 b_2 + 40 b_2 - 9 b_3 - 18 b_4 + 21 a_2 b_4 \\
a_2 (1 + b_2 + b_3) + (1 + a_3) (b_2 - b_4) - a_4 (1 + b_3 + b_4) \\
a_3 (1 + b_2 + b_3) + (1 + a_2) (b_3 - b_4) - a_4 (1 + b_2 + b_4) \\
a_2 (1 + b_2 + b_4) + a_3 (1 + b_3 + b_4) + (1 + a_4) (2 + b_2 + b_3 + 2 b_4) \\
-21 a_4 - (40 + 70 a_2 + 49 a_3) b_2 - 3 (7 a_2 b_3 + 3 b_4)\\
79 + 70 a_4 + 9 b_2 - 7 a_2 (-7 + 3 b_2 - 4 b_4) + 58 b_4 + 49 a_4 b_4 \\
79 + 70 a_3 + 9 b_2 - 7 a_2 (-7 + 3 b_2 - 4 b_3) + 58 b_3 + 49 a_3 b_3 \\
79 + 70 a_4 + 9 b_3 - 7 a_3 (-7 + 3 b_3 - 4 b_4) + 58 b_4 + 49 a_4 b_4 \\
51 + 42 a_4 - 19 b_3 - 28 a_4 b_3 - 7 a_3 (-3 + 7 b_3) + 30 b_4 + 21 a_4 b_4 \\
51 + 42 a_4 - 19 b_2 - 28 a_4 b_2 - 7 a_2 (-3 + 7 b_2) + 30 b_4 + 21 a_4 b_4 \\
51 + 42 a_3 - 19 b_2 - 28 a_3 b_2 - 7 a_2 (-3 + 7 b_2) + 30 b_3 + 21 a_3 b_3 \\
7 a_3 (13 + 3 b_2 + 10 b_3) + (79 + 49 a_2) (b_3 - b_4) - 7 a_4 (13 + 3 b_2 + 10 b_4) \\
- 7 a_3 (-3 + 7 b_2) + 7 a_4 (-3 + 7 b_2) - 3 (-3 + 7 a_2) (b_3 - b_4) \\
7 a_3 (10 + 7 b_3) + 2 (29 + 14 a_2) (b_3 - b_4) - 7 a_4 (10 + 7 b_4) \\
20 + 7 a_2 + 7 a_3 + 14 a_4 + 3 b_2 + 3 b_3 + 6 b_4 \\
a_2 (1 + b_2 + b_3) + (1 + a_3) (b_2 - b_4) - a_4 (1 + b_3 + b_4) \\
a_3 (1 + b_2 + b_3) + (1 + a_2) (b_3 - b_4) - a_4 (1 + b_2 + b_4) \\
3 + 7 a_1 + 7 a_2 + 7 a_4 + 3 b_2 + 3 b_4 \\
3 + 7 a_1 + 7 a_3 + 7 a_4 + 3 b_3 + 3 b_4 \\
7 a_3 - 7 a_4 + 3 b_3 - 3 b_4 \\
7 a_2 - 7 a_4 + 3 b_2 - 3 b_4 \\
7 (a_2 + a_4) b_3 + a_3 (3 b_2 + 10 b_3 + 3 b_4) \\
7 a_2 (b_2 + b_4) + 7 a_4 (b_2 + b_4) + a_3 (4 b_2 - 3 b_3 + 4 b_4)\\
7 a_3 b_3 - a_2 (3 b_2 - 4 b_3 + 3 b_4) - a_4 (3 b_2 - 4 b_3 + 3 b_4) \\
7 a_3 (b_2 + b_4) + a_2 (10 b_2 + 3 b_3 + 10 b_4) + a_4 (10 b_2 + 3 b_3 + 10 b_4) \\
30 - 40 b_3 - 70 a_3 b_3 + 21a_4 - 49 a_4b_3 + 9 b_4 - 21 a_3 b_4 \\
30 - 40 b_2 - 70 a_2 b_2 + 21a_4 - 49 a_4 b_2 + 9 b_4 - 21 a_2 b_4 \\
30 + 49 a_3 b_2 - 7 a_4 (-6 + 7 b_2) + 21 a_2 b_3 + 18 b_4 - 21 a_2 b_4 \\
30 + 49 a_3 b_2 + 21 a_2 b_3 - 40 b_2 + 21 a_4 + 9 b_4\\
100 + 91 a_4 + 30 b_3 + 21 a_4 b_3 + 79 b_4 + 70 a_4 b_4 + 7 a_3 (10 + 7 b_4) \\
100 + 91 a_4 + 79 b_2 - 21 a_3 b_2 - 49b_2 + 70 a_2 - 49 a_2 b_3 - 70 a_3 b_3 - 70 a_4 b_3 + 79 b_4 - 70 a_3 b_4 \\
10 + 7 a_2 + 7 a_4 + 3 b_2 + 3 b_4 \\
10 + 7 a_2 + 7 a_3 + 3 b_2 + 3 b_3 \\
10 + 7 a_3 + 7 a_4 + 3 b_3 + 3 b_4 \\
10 + 14 a_4 + 6 b_4 
\end{cases}
\end{equation}
Based on Theorem~\ref{thm:general-case}, it follows by checking Assumptions~\ref{assum:key-assumption1} and~\ref{assum:key-assumption2} that the system~\eqref{eq:BTL_2mixtures_triplet_poly_} has a unique solution in $\mathbb{C}$ (counted with multiplicity) for all $(\bm{a}_{2:4},\bm{b}_{2:4}) \in Q_{MNL}^{6}$ but a set of $\lambda_{6}$-measure zero.

Assumption~\ref{assum:key-assumption1}: This is clear, since $(\bm{x}_{1:4},\bm{y}_{1:4})=(\bm{a}_{1:4},\bm{b}_{1:4})$ is a solution of the system~\eqref{eq:BTL_2mixtures_triplet_poly_} for all $(\bm{a}_{2:4},\bm{b}_{2:4}) \in \mathbb{C}^6$.

Assumption~\ref{assum:key-assumption2}: Choose $(\bm{a'}_{1:4},\bm{b'}_{1:4})=(1,2,3,4;1,5,4,2)$. It is routine to check that $(\bm{a'}_{2:4},\bm{b'}_{2:4}) \in \mathbb{C}^{6} \setminus Z(\bm{a}_{2:4},\bm{b}_{2:4})$ using equation~\eqref{MNL_bad}. Since the associated equation system~\eqref{eq:BTL_2mixtures_triplet_poly_} has $\mathbb{Q}$-coefficients, we can use Magma to check whether it has a unique solution in $\mathbb{C}$ (counted with multiplicity) for this $(\bm{a}'_{2:4},\bm{b}'_{2:4})$. 
\begin{lstlisting}[language=Magma,caption=Dimension and degree computations of MNL model with known mixing probabilities,label={lst:3mix_p07'},frame=single,basicstyle=\scriptsize]
p1:=3/10;
a:=[1,2,3,4];
b:=[1,5,4,2];

k:=Rationals();
A<x1,x2,x3,x4,y1,y2,y3,y4>:=AffineSpace(k,8);
P:=Scheme(A,
[
x1-1,
y1-1,
(p1*x1*(y1+y2+y3)+(1-p1)*y1*(x1+x2+x3))*(b[1]+b[2]+b[3])*(a[1]+a[2]+a[3])-
(p1*a[1]*(b[1]+b[2]+b[3])+(1-p1)*b[1]*(a[1]+a[2]+a[3]))*(x1+x2+x3)*(y1+y2+y3),
(p1*x2*(y1+y2+y3)+(1-p1)*y2*(x1+x2+x3))*(b[1]+b[2]+b[3])*(a[1]+a[2]+a[3])-
(p1*a[2]*(b[1]+b[2]+b[3])+(1-p1)*b[2]*(a[1]+a[2]+a[3]))*(x1+x2+x3)*(y1+y2+y3),
(p1*x1*(y1+y2+y4)+(1-p1)*y1*(x1+x2+x4))*(b[1]+b[2]+b[4])*(a[1]+a[2]+a[4])-
(p1*a[1]*(b[1]+b[2]+b[4])+(1-p1)*b[1]*(a[1]+a[2]+a[4]))*(x1+x2+x4)*(y1+y2+y4),
(p1*x2*(y1+y2+y4)+(1-p1)*y2*(x1+x2+x4))*(b[1]+b[2]+b[4])*(a[1]+a[2]+a[4])-
(p1*a[2]*(b[1]+b[2]+b[4])+(1-p1)*b[2]*(a[1]+a[2]+a[4]))*(x1+x2+x4)*(y1+y2+y4),
(p1*x1*(y1+y3+y4)+(1-p1)*y1*(x1+x3+x4))*(b[1]+b[3]+b[4])*(a[1]+a[3]+a[4])-
(p1*a[1]*(b[1]+b[3]+b[4])+(1-p1)*b[1]*(a[1]+a[3]+a[4]))*(x1+x3+x4)*(y1+y3+y4),
(p1*x3*(y1+y3+y4)+(1-p1)*y3*(x1+x3+x4))*(b[1]+b[3]+b[4])*(a[1]+a[3]+a[4])-
(p1*a[3]*(b[1]+b[3]+b[4])+(1-p1)*b[3]*(a[1]+a[3]+a[4]))*(x1+x3+x4)*(y1+y3+y4),
(p1*x2*(y2+y3+y4)+(1-p1)*y2*(x2+x3+x4))*(b[2]+b[3]+b[4])*(a[2]+a[3]+a[4])-
(p1*a[2]*(b[2]+b[3]+b[4])+(1-p1)*b[2]*(a[2]+a[3]+a[4]))*(x2+x3+x4)*(y2+y3+y4),
(p1*x3*(y2+y3+y4)+(1-p1)*y3*(x2+x3+x4))*(b[2]+b[3]+b[4])*(a[2]+a[3]+a[4])-
(p1*a[3]*(b[2]+b[3]+b[4])+(1-p1)*b[3]*(a[2]+a[3]+a[4]))*(x2+x3+x4)*(y2+y3+y4)
]);

-> Dimension(P);
-> @0@

-> Degree(P);
-> @1@
\end{lstlisting}
From Listing~\ref{lst:3mix_p07'}, $\texttt{Dimension(P)=0}$ and $\texttt{Degree(P)=1}$ means that the system~\eqref{eq:BTL_2mixtures_triplet_poly_} has a unique solution in $\mathbb{C}$ (counted with multiplicity) for this choice of $(\bm{a'}_{2:4},\bm{b'}_{2:4})$. 

Thus, by Theorem~\ref{thm:general-case}, we conclude that the system~\eqref{eq:BTL_2mixtures_triplet_poly_} (and hence also the system~\eqref{ex:BTL_2mixtures_triplet}) has a unique solution in $\mathbb{C}$ (counted with multiplicity) for all $(\bm{a}_{2:4},\bm{b}_{2:4}) \in Q_{MNL}^{6}$ but a set $V_4$ of $\lambda_{6}$-measure zero.

\paragraph{Case $n \geq 4$.} Suppose the conclusion holds for $n-1$. We need to prove that
\begin{equation}\label{eq:BTL_2mixtures_triplet_poly}
\begin{cases}
x_1=y_1=1, \\
p_1x_i(y_i+y_j+y_k)+p_2y_i(x_i+x_j+x_k)-\eta_{i,j,k}(x_i+x_j+x_k)(y_i+y_j+y_k) = 0, \quad \forall i < j < k \in [n],
\end{cases}
\end{equation}
has a unique solution in $\mathbb{C}$ (counted with multiplicity) for all $(\bm{a}_{2:n},\bm{b}_{2:n}) \in Q_{MNL}^{2n-2}$ but a set $V_n$ of $\lambda_{2n-2}$-measure zero. We begin by splitting the system~\eqref{eq:BTL_2mixtures_triplet_poly} into two parts:
\begin{gather}\label{equ:BTLind1}
\begin{cases}
x_1=y_1=1, \\
p_1x_i(y_i+y_j+y_k)+p_2y_i(x_i+x_j+x_k)-\eta_{i,j,k}(x_i+x_j+x_k)(y_i+y_j+y_k) = 0,\ \forall i < j < k \in [n-1],
\end{cases}
\end{gather}
and
\begin{equation}\label{equ:BTLind2}
\begin{aligned}
p_1x_i(y_i+y_j+y_n)+p_2y_i(x_i+x_j+x_n)-\eta_{i,j,n}(x_i+x_j+x_n)(y_i+y_j+y_n) = 0, \quad \forall i<j \in [n-1].
\end{aligned}
\end{equation}
By the induction hypothesis, there exists a $\lambda_{2n-4}$-measure zero subset $V_{n-1} \subseteq \mathbb{C}^{2n-4}$ such that the system~\eqref{equ:BTLind1} has a unique solution in $\mathbb{C}$ (counted with multiplicity) for any $(\bm{a}_{2:n-1},\bm{b}_{2:n-1}) \in Q_{MNL}^{2n-4} \setminus V_{n-1}$, given by
\[
(\bm{x}_{1:n-1},\bm{y}_{1:n-1})=(\bm{a}_{1:n-1},\bm{b}_{1:n-1}).
\]
Plugging this solution into equation~\eqref{equ:BTLind2} and simplifying, we obtain
\begin{equation}\label{equ:BTLind3}
c_{i,j,n}+(p_2b_i-\eta_{i,j,n}(b_i+b_j))x_n+(p_1a_i-\eta_{i,j,n}(a_i+a_j))y_n-\eta_{i,j,n}x_ny_n=0,
\end{equation}
where 
\[
c_{i,j,n}:=p_1a_i(b_i+b_j)+p_2b_i(a_i+a_j)-\eta_{i,j,n}(a_i+a_j)(b_i+b_j).
\]
Since $\eta_{1,2,n} \neq 0$, by the case of $(i,j)=(1,2)$ in equation~\eqref{equ:BTLind3}, we have
\[
x_ny_n=\dfrac{c_{1,2,n}+(p_2b_1-\eta_{1,2,n}(b_1+b_2))x_n+(p_1a_1-\eta_{1,2,n}(a_1+a_2))y_n}{\eta_{1,2,n}}.
\]
Plugging into the cases of $(i,j)=(1,3),(2,3)$ in equation~\eqref{equ:BTLind3}, we further the following system of linear equations in the variables $(x_n,y_n)$:
\begin{equation}\label{equ:BTLind4}
\begin{cases}
d_{13}x_n+e_{13}y_n=f_2, \\
d_{23}x_n+e_{23}y_n=f_3,
\end{cases}
\end{equation}
where 
\begin{align*}
d_{ij}=\eta_{1,2,n}(p_2b_i-\eta_{i,j,n}(b_i+b_j))&-\eta_{i,j,n}(p_2b_1-\eta_{1,2,n}(b_1+b_2)), \\ e_{ij}=\eta_{1,2,n}(p_1a_i-\eta_{i,j,n}(a_i+a_j))&-\eta_{i,j,n}(p_1a_1-\eta_{1,2,n}(a_1+a_2)),
\end{align*}
and $f_2$ and $f_3$ are some constants. Denote the coefficient matrix for the system~\eqref{equ:BTLind4} by
\[
A:=\begin{bmatrix}
d_{13} & e_{13} \\
d_{23} & e_{13}
\end{bmatrix},
\]
and define
\[
W_{n}:=\{(\bm{a}_{2:n},\bm{b}_{2:n}) \in \mathbb{C}^{2n-2}: \det(A) = 0\} \subseteq \mathbb{C}^{2n-2}, \qquad V_{n}:=(V_{n-1} \times \mathbb{C}^2) \cup W_{n} \subseteq \mathbb{C}^{2n-2}.
\]
Note that $V_n$ is a $\lambda_{2n-2}$-measure zero subset. We finish the proof by claiming that the system~\eqref{eq:BTL_2mixtures_triplet_poly} has a unique solution in $\mathbb{C}$ (counted with multiplicity) for all $(\bm{a}_{2:n},\bm{b}_{2:n}) \in Q_{MNL}^{2n-2}$ but the set $V_{n}$. Indeed, for any $(\bm{a}_{2:n},\bm{b}_{2:n}) \in Q_{MNL}^{2n-2} \setminus V_{n}$, we have the following statements:
\begin{enumerate}
\item 
Since $(\bm{a}_{2:n-1},\bm{b}_{2:n-1}) \notin V_{n-1}$, the system~\eqref{equ:BTLind1} has a unique solution in $\mathbb{C}$ (counted with multiplicity), given by $(\bm{x}_{1:n-1},\bm{y}_{1:n-1})=(\bm{a}_{1:n-1},\bm{b}_{1:n-1})$.
\item
Since $(\bm{a}_{2:n},\bm{b}_{2:n}) \notin W_{n}$, the system~\eqref{equ:BTLind4} has a unique solution in $\mathbb{C}$ (counted with multiplicity), given by $(x_n,y_n)=(a_n,b_n)$.
\end{enumerate}
This shows that the system~\eqref{eq:BTL_2mixtures_triplet_poly} has a unique solution in $\mathbb{C}$ (counted with multiplicity), since $(\bm{x},\bm{y})=(\bm{a},\bm{b})$ is always a solution.
\end{proof}

\subsection{Mixtures of Plackett-Luce models}\label{sec:ex_pl_model}

In this section, we consider mixtures of Plackett-Luce models. We first consider the case of unknown mixing probabilities and then the case of known mixing probabilities. Note that the first case has been studied by~\cite{zhao2016learning}, where the authors take a tensor-decomposition approach.

\subsubsection{Unknown mixing probabilities}

In this subsection, we consider the Plackett-Luce model with the parameter space $(\bm{a}_{2:n},\bm{b}_{2:n},p_1) \in Q_{PL,p}^{2n-1}$. We consider the equation system~\eqref{ex:PL-2mixtures_p}. Note that the system has at least two solutions in $\mathbb{C}$ coming from the initial data 
\[
(\bm{x},\bm{y},p)=(\bm{a},\bm{b},p_1) \text{ and } (\bm{x},\bm{y},p)=(\bm{b},\bm{a},1-p_1).
\]
Our goal is to show that no additional solutions exist in $\mathbb{C}$. The following result is proved:
\begin{theorem}
\label{prop-thm:ex-pl_p}
If $n \geq 4$, the system~\eqref{ex:PL-2mixtures_p} 
has exactly two solutions in $\mathbb{C}$ (counted with multiplicity) for all $(\bm{a}_{2:n},\bm{b}_{2:n},p_1) \in Q^{2n-1}_{PL,p}$ but a set of $\lambda_{2n-1}$-measure zero, given by $(\bm{x},\bm{y},p) \!=\! (\bm{a},\bm{b},p_1)$ and $(\bm{x},\bm{y},p) \!=\! (\bm{b},\bm{a},1-p_1)$.
\end{theorem}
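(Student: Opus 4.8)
The plan is to mirror the inductive strategy used for the mixtures of BTL models in Theorem~\ref{prop-thm:btl_p}, carrying out induction on $n$ with base case $n=4$. The starting observation is that, by the derivations in Appendix~\ref{apn:pl-three}, the full ranking system~\eqref{ex:PL-2mixtures_p} is governed entirely by its top-two marginals: writing $\eta_{ij}$ for the probability that $i$ is ranked first and $j$ second, we have, for every ordered pair $(i,j)$ with $i \neq j$,
\[
p\,\frac{x_i x_j}{1-x_i} + (1-p)\,\frac{y_i y_j}{1-y_i} = \eta_{ij},
\]
together with the affine normalizations $x_1 = 1 - \sum_{i\geq 2} x_i$ and $y_1 = 1 - \sum_{i\geq 2} y_i$. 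Clearing the denominators $1-x_i$ and $1-y_i$ (and, as in the BTL proof, adjoining auxiliary variables $t_i(1-x_i)=1$, $h_i(1-y_i)=1$ to keep the transformation faithful) produces a polynomial system whose coefficients are polynomials in $(\bm{a},\bm{b},p_1)$.

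For the base case $n=4$ I would invoke Theorem~\ref{thm:general-case} and verify the two assumptions. Assumption~\ref{assum:key-assumption1} is immediate: the two distinct tuples $(\bm{x},\bm{y},p)=(\bm{a},\bm{b},p_1)$ and $(\bm{b},\bm{a},1-p_1)$ solve the cleared system for all parameters outside the Zariski-closed locus where these coincide or where some $1-a_i$ or $1-b_i$ vanishes, so there are at least two solutions. Assumption~\ref{assum:key-assumption2} is checked computationally, exactly as in the BTL and MNL proofs: I would compute the Gr\"obner basis and the locus $Z(\bm{a}_{2:4},\bm{b}_{2:4},p_1)$ in Magma, pick a concrete rational $(\bm{a}',\bm{b}',p_1')$ lying outside $Z$, and confirm via the \texttt{Dimension} and \texttt{Degree} commands that the specialized scheme is zero-dimensional of degree exactly two.

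The inductive step is where the Plackett--Luce structure enters. The key fact is that the model is consistent under marginalization (Luce's choice axiom): the ranking restricted to the subset $\{1,\ldots,n-1\}$ again follows a PL model with the same scores. Hence the marginal distribution on $\{1,\ldots,n-1\}$ is a two-component PL mixture with unchanged weights $(p_1,p_2)$ and normalized scores $\widetilde a_i := a_i/(1-a_n)$, $\widetilde b_i := b_i/(1-b_n)$. The induction hypothesis then recovers, off a measure-zero set and up to swapping the two components, the quantities $(\widetilde a_i)_{i<n}$, $(\widetilde b_i)_{i<n}$ and $p_1$. To pin down the last coordinate I would use the top-two marginals in which item $n$ is ranked \emph{first}: since $a_j/(1-a_n)=\widetilde a_j$, these read $\eta_{nj}=p_1 a_n \widetilde a_j + p_2 b_n \widetilde b_j$, which are \emph{linear} in the remaining unknowns $(a_n,b_n)$ with already-known coefficients. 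Taking two such equations (say $j=1,2$) yields a $2\times 2$ linear system with determinant $p_1 p_2(\widetilde a_1 \widetilde b_2 - \widetilde a_2 \widetilde b_1)$, which is nonzero off a measure-zero set; excluding this bad locus together with the inductive one (as in Proposition~\ref{prop:btl}) gives a unique $(a_n,b_n)$ for each of the two branches, hence exactly two solutions overall.

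The main obstacle, and the step requiring the most care, is precisely this inductive reduction: unlike the BTL and MNL cases, the PL model carries a global normalization $\sum_i a_i = 1$ coupling all items, so the sub-system on $\{1,\ldots,n-1\}$ is not literally a smaller instance of the same normalized model. The marginalization argument is what repairs this—one must verify that the restricted mixture is again a two-component PL mixture with the \emph{same} mixing weight, and that the recovered normalized scores combine with the ``item-$n$-first'' marginals to yield a linear, rather than nonlinear, system in $(a_n,b_n)$. Once this linearization is secured, the remaining genericity bookkeeping is routine.
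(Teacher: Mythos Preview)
Your proposal is correct, and the base case $n=4$ is handled exactly as in the paper. The extension to general $n$, however, takes a genuinely different route. You run a true induction $n-1\to n$: marginalize to $\{1,\ldots,n-1\}$ via Luce's choice axiom, apply the hypothesis to the renormalized scores $\widetilde a_i=a_i/(1-a_n)$, then recover $(x_n,y_n)$ from the linear system built on the marginals $\eta_{nj}$ with item $n$ ranked first. The paper instead avoids both induction and marginalization: it observes that the top-two-marginal equations $\eta_{k,i,\cdot}$ with $k\in\{1,2,3\}$ and $i\in\{1,2,3,4\}$ form a subsystem that is literally the $n=4$ system (these equations do not see the global normalization), so the $n=4$ analysis directly pins down $(x_{1:4},y_{1:4},p)$ up to the two branches; then for each $i\geq 5$ the two equations $\eta_{2,i,\cdot}$ and $\eta_{3,i,\cdot}$ are already linear in $(x_i,y_i)$ once $(x_2,x_3,y_2,y_3,p)$ are substituted, with no renormalization needed. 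The paper's approach is shorter and spares you the measure-theoretic check that the pullback of the inductive bad set under the rational map $(\bm a,\bm b,p_1)\mapsto(\widetilde{\bm a},\widetilde{\bm b},p_1)$ stays measure zero; your approach is more intrinsic to the PL model's recursive structure and would port more naturally to settings where a fixed small subsystem does not already determine everything.
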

\begin{proof}[Proof of Theorem~\ref{prop-thm:ex-pl_p}]
As before, we consider the following part of the equation system~\eqref{ex:PL-2mixtures_p}:
\begin{equation}\label{ex:PL-2mixtures_p_part}
    \begin{cases}
    \eta_{1,i,\cdot} = p \dfrac{x_1x_i}{1-x_1} + (1-p) \dfrac{y_1y_i}{1-y_1}, & \forall i \in \{2,3,4\}\\
    \eta_{2,i,\cdot} = p \dfrac{x_2x_i}{1-x_2} + (1-p) \dfrac{y_2y_i}{1-y_2}, & \forall i \in \{1,3,4\}\\
    \eta_{3,i,\cdot} = p \dfrac{x_3x_i}{1-x_3} + (1-p) \dfrac{y_3y_i}{1-y_3}, & \forall i \in \{1,2,4\}\\
    \eta_{1,i,\cdot} = p \dfrac{x_1x_i}{1-x_1} + (1-p)  \dfrac{y_1y_i}{1-y_1}, & \forall i \geq 5, \\
    \eta_{2,i,\cdot} = p \dfrac{x_2x_i}{1-x_2} + (1-p) \dfrac{y_2y_i}{1-y_2}, & \forall i \geq 5, \\
    \eta_{3,i,\cdot} = p \dfrac{x_3x_i}{1-x_3} + (1-p) \dfrac{y_3y_i}{1-y_3}, & \forall i \geq 5.
\end{cases}
\end{equation}
These equations are linear combinations of the equations in the system~\eqref{ex:PL-2mixtures_p}. Therefore, it suffices to establish the generic identifiability of the system~\eqref{ex:PL-2mixtures_p_part} (up to reordering) for $n \geq 4$; i.e., the system has exactly two solutions in $\mathbb{C}$ (counted with multiplicity), given by $(\bm{a},\bm{b},p_1)$ and $(\bm{b},\bm{a},1-p_1)$.

As before, we eliminate the denominators in the system~\eqref{ex:PL-2mixtures_p_part} by introducing new variables and equations on $(t_i, h_i)$ and multiplying by $(1-a_i)(1-b_i)$, to make the coefficients of the system~\eqref{ex:PL-2mixtures_p_part} equal to polynomials in $(\bm{a}, \bm{b},p_1)$. Finally, we obtain
\begin{equation}\label{ex:PL-2mixtures_poly_p}
\begin{cases}
& \forall i \in \{2,3,4\} \\
&\ (p_1a_1a_i(1-b_1)+p_2b_1b_i(1-a_1))(1-x_1)(1-y_1)=(1-a_1)(1-b_1)(px_1x_i(1-y_1)+(1-p)y_1y_i(1-x_1)), \\
& \forall i \in \{1,3,4\} \\
& \ (p_1a_2a_i(1-b_2)+p_2b_2b_i(1-a_2))(1-x_2)(1-y_2)=(1-a_2)(1-b_2)(px_2x_i(1-y_2)+(1-p)y_2y_i(1-x_2)), \\
& \forall i \in \{1,2,4\} \\
&\ (p_1a_3a_i(1-b_3)+p_2b_3b_i(1-a_3))(1-x_3)(1-y_3)=(1-a_3)(1-b_3)(p x_3x_i(1-y_3)+(1-p)y_3y_i(1-x_3)), \\
& \forall i \geq 5, \\
&\ (p_1a_1a_i(1-b_1)+p_2b_1b_i(1-a_1))(1-x_1)(1-y_1)=(1-a_1)(1-b_1)(px_1x_i(1-y_1)+(1-p)y_1y_i(1-x_1)), \\
& \forall i \geq 5, \\
&\ (p_1a_2a_i(1-b_2)+p_2b_2b_i(1-a_2))(1-x_2)(1-y_2) = (1-a_2)(1-b_2)(px_2x_i(1-y_2)+(1-p)y_2y_i(1-x_2)), \\
& \forall i \geq 5, \\
&\ (p_1a_3a_i(1-b_3)+p_2b_3b_i(1-a_3))(1-x_3)(1-y_3) = (1-a_3)(1-b_3)(p x_3x_i(1-y_3) + (1-p) y_3y_i(1-x_3)), \\
& \forall i \in [n], \ t_i(1-x_i) = 1, \\
& \forall i \in [n], \ h_i(1-y_i) = 1.
\end{cases}
\end{equation}

As before, we first consider the initial case $n=4$ and then deduce the general case.

\paragraph{Case $n=4$.}

In this case, the system~\eqref{ex:PL-2mixtures_poly_p} becomes
\begin{equation}\label{ex:PL-2mixtures_poly_abp_n4}
\begin{cases}
\forall i \in \{2,3,4\} \\
(p_1a_1a_i(1-b_1)+p_2b_1b_i(1-a_1))(1-x_1)(1-y_1)=(1-a_1)(1-b_1)(px_1x_i(1-y_1)+(1-p)y_1y_i(1-x_1)), \\
\forall i \in \{1,3,4\} \\
(p_1a_2a_i(1-b_2)+p_2b_2b_i(1-a_2))(1-x_2)(1-y_2)=(1-a_2)(1-b_2)(px_2x_i(1-y_2)+(1-p)y_2y_i(1-x_2)), \\
\forall i \in \{1,2,4\} \\
(p_1a_3a_i(1-b_3)+p_2b_3b_i(1-a_3))(1-x_3)(1-y_3)=(1-a_3)(1-b_3)(px_3x_i(1-y_3)+(1-p)y_3y_i(1-x_3)), \\
\forall i \in [4], \
t_i(1-x_i) = 1, \\
\forall i \in [4], \ h_i(1-y_i) = 1.
\end{cases}
\end{equation}
We will apply Theorem~\ref{thm:general-case} to this case, for which we need to check Assumptions~\ref{assum:key-assumption1} and~\ref{assum:key-assumption2}.

For Assumption~\ref{assum:key-assumption1}: We already know that (from how we define $\bm{\eta}$ and transform the equations)
\[
(\bm{x}_{1:4},\bm{y}_{1:4},p)=(\bm{a}_{1:4},\bm{b}_{1:4},p_1) \text{ or } (\bm{b}_{1:4},\bm{a}_{1:4},1-p_1)
\]
are two (distinct) solutions of the system~\eqref{ex:PL-2mixtures_poly_abp_n4} for all $(\bm{a}_{2:4},\bm{b}_{2:4},p_1) \in \mathbb{C}^{7} \setminus E$, where $E \subseteq \mathbb{A}_{\mathbb{C}}^7$ is the Zariski closed proper subset defined by
\[
E:=\left\lbrace (\bm{a}_{2:4},\bm{b}_{2:4},p_1) \in \mathbb{A}_{\mathbb{C}}^7: p_1-0.5=0 \right\rbrace.
\]
For Assumption~\ref{assum:key-assumption2}: We choose
\[
(\bm{a'}_{1:4},\bm{b'}_{1:4},p_1')=\left(\frac{1}{10},\frac{2}{10},\frac{3}{10},\frac{4}{10};\frac{1}{20},\frac{14}{20},\frac{2}{20},\frac{3}{20};\frac{7}{10}\right).
\]
To proceed, we first compute the Gr\"obner basis via Magma.
\begin{lstlisting}[language=Magma,caption=Gr\"{o}bner basis of Plackett-Luce models with unknown mixing probabilities, label=grobner_pl_model_p,frame=single,basicstyle=\scriptsize]
P<x1,x2,x3,x4,y1,y2,y3,y4,p,t1,t2,t3,t4,h1,h2,h3,h4,
  a1,a2,a3,a4,b1,b2,b3,b4,p1>:=FreeAlgebra(Rationals(),26,"lex");

I:=ideal<P|
(p1*a1*a2*(1-b1)+(1-p1)*b1*b2*(1-a1))*(1-x1)*(1-y1)-
(1-a1)*(1-b1)*(p*x1*x2*(1-y1)+(1-p)*y1*y2*(1-x1)),
(p1*a1*a3*(1-b1)+(1-p1)*b1*b3*(1-a1))*(1-x1)*(1-y1)-
(1-a1)*(1-b1)*(p*x1*x3*(1-y1)+(1-p)*y1*y3*(1-x1)),
(p1*a1*a4*(1-b1)+(1-p1)*b1*b4*(1-a1))*(1-x1)*(1-y1)-
(1-a1)*(1-b1)*(p*x1*x4*(1-y1)+(1-p)*y1*y4*(1-x1)),
(p1*a2*a1*(1-b2)+(1-p1)*b2*b1*(1-a2))*(1-x2)*(1-y2)-
(1-a2)*(1-b2)*(p*x2*x1*(1-y2)+(1-p)*y2*y1*(1-x2)),
(p1*a2*a3*(1-b2)+(1-p1)*b2*b3*(1-a2))*(1-x2)*(1-y2)-
(1-a2)*(1-b2)*(p*x2*x3*(1-y2)+(1-p)*y2*y3*(1-x2)),
(p1*a2*a4*(1-b2)+(1-p1)*b2*b4*(1-a2))*(1-x2)*(1-y2)-
(1-a2)*(1-b2)*(p*x2*x4*(1-y2)+(1-p)*y2*y4*(1-x2)),
(p1*a3*a1*(1-b3)+(1-p1)*b3*b1*(1-a3))*(1-x3)*(1-y3)-
(1-a3)*(1-b3)*(p*x3*x1*(1-y3)+(1-p)*y3*y1*(1-x3)),
(p1*a3*a2*(1-b3)+(1-p1)*b3*b2*(1-a3))*(1-x3)*(1-y3)-
(1-a3)*(1-b3)*(p*x3*x2*(1-y3)+(1-p)*y3*y2*(1-x3)),
(p1*a3*a4*(1-b3)+(1-p1)*b3*b4*(1-a3))*(1-x3)*(1-y3)-
(1-a3)*(1-b3)*(p*x3*x4*(1-y3)+(1-p)*y3*y4*(1-x3)),
(p1*a4*a1*(1-b4)+(1-p1)*b4*b1*(1-a4))*(1-x4)*(1-y4)-
(1-a4)*(1-b4)*(p*x4*x1*(1-y4)+(1-p)*y4*y1*(1-x4)),
(p1*a4*a2*(1-b4)+(1-p1)*b4*b2*(1-a4))*(1-x4)*(1-y4)-
(1-a4)*(1-b4)*(p*x4*x2*(1-y4)+(1-p)*y4*y2*(1-x4)),
t1*(1-x1)-1,
t2*(1-x2)-1,
t3*(1-x3)-1,
t4*(1-x4)-1,
h1*(1-y1)-1,
h2*(1-y2)-1,
h3*(1-y3)-1,
h4*(1-y4)-1>;

-> GroebnerBasis(I);
\end{lstlisting}
From the output, we obtain
\begin{equation}\label{pl_bad_p}
\mathrm{Bad}(\bm{a}_{2:4},\bm{b}_{2:4},p_1)=
\begin{cases}
(a_i-1 )b_ib_j(p_1-1)+a_ia_j (1-b_i)p_1, 
& \forall i < j \in [4], \\
a_i-1,\ b_i-1, & \forall i \in [3], \\
\end{cases}
\end{equation}
from which we can verify that
\[
(\bm{a'}_{2:4},\bm{b'}_{2:4},p_1') \notin Z(\bm{a}_{2:4},\bm{b}_{2:4},p_1).
\]
Next, we can check that the system~\eqref{ex:PL-2mixtures_poly_abp_n4} has exactly two solutions in $\mathbb{C}$ (counted with multiplicity) for this $(\bm{a'}_{1:4},\bm{b'}_{1:4},p_1')$ using Magma.
\begin{lstlisting}[language=Magma,caption=Dimension and degree computations of Plackett-Luce models with with unknown mixing probabilities, label=pl_model_abp,frame=single,basicstyle=\scriptsize]
a:=[1/10,2/10,3/10,4/10];
b:=[1/20,14/20,2/20,3/20];
p1:=7/10;

k:=Rationals();
A<x1,x2,x3,x4,y1,y2,y3,y4,p,t1,t2,t3,t4,h1,h2,h3,h4>:=AffineSpace(k,17);
P:=Scheme(A,
[
(p1*a[1]*a[2]*(1-b[1])+(1-p1)*b[1]*b[2]*(1-a[1]))*(1-x1)*(1-y1)-
(1-a[1])*(1-b[1])*(p*x1*x2*(1-y1)+(1-p)*y1*y2*(1-x1)),
(p1*a[1]*a[3]*(1-b[1])+(1-p1)*b[1]*b[3]*(1-a[1]))*(1-x1)*(1-y1)-
(1-a[1])*(1-b[1])*(p*x1*x3*(1-y1)+(1-p)*y1*y3*(1-x1)),
(p1*a[1]*a[4]*(1-b[1])+(1-p1)*b[1]*b[4]*(1-a[1]))*(1-x1)*(1-y1)-
(1-a[1])*(1-b[1])*(p*x1*x4*(1-y1)+(1-p)*y1*y4*(1-x1)),
(p1*a[2]*a[1]*(1-b[2])+(1-p1)*b[2]*b[1]*(1-a[2]))*(1-x2)*(1-y2)-
(1-a[2])*(1-b[2])*(p*x2*x1*(1-y2)+(1-p)*y2*y1*(1-x2)),
(p1*a[2]*a[3]*(1-b[2])+(1-p1)*b[2]*b[3]*(1-a[2]))*(1-x2)*(1-y2)-
(1-a[2])*(1-b[2])*(p*x2*x3*(1-y2)+(1-p)*y2*y3*(1-x2)),
(p1*a[2]*a[4]*(1-b[2])+(1-p1)*b[2]*b[4]*(1-a[2]))*(1-x2)*(1-y2)-
(1-a[2])*(1-b[2])*(p*x2*x4*(1-y2)+(1-p)*y2*y4*(1-x2)),
(p1*a[3]*a[1]*(1-b[3])+(1-p1)*b[3]*b[1]*(1-a[3]))*(1-x3)*(1-y3)-
(1-a[3])*(1-b[3])*(p*x3*x1*(1-y3)+(1-p)*y3*y1*(1-x3)),
(p1*a[3]*a[2]*(1-b[3])+(1-p1)*b[3]*b[2]*(1-a[3]))*(1-x3)*(1-y3)-
(1-a[3])*(1-b[3])*(p*x3*x2*(1-y3)+(1-p)*y3*y2*(1-x3)),
(p1*a[3]*a[4]*(1-b[3])+(1-p1)*b[3]*b[4]*(1-a[3]))*(1-x3)*(1-y3)-
(1-a[3])*(1-b[3])*(p*x3*x4*(1-y3)+(1-p)*y3*y4*(1-x3)),
(p1*a[4]*a[1]*(1-b[4])+(1-p1)*b[4]*b[1]*(1-a[4]))*(1-x4)*(1-y4)-
(1-a[4])*(1-b[4])*(p*x4*x1*(1-y4)+(1-p)*y4*y1*(1-x4)),
(p1*a[4]*a[2]*(1-b[4])+(1-p1)*b[4]*b[2]*(1-a[4]))*(1-x4)*(1-y4)-
(1-a[4])*(1-b[4])*(p*x4*x2*(1-y4)+(1-p)*y4*y2*(1-x4)),
t1*(1-x1)-1,
t2*(1-x2)-1,
t3*(1-x3)-1,
t4*(1-x4)-1,
h1*(1-y1)-1,
h2*(1-y2)-1,
h3*(1-y3)-1,
h4*(1-y4)-1
]);

-> Dimension(P);
-> @0@

-> Degree(P);
-> @2@
\end{lstlisting}
From Listing~\ref{pl_model_abp}, $\texttt{Dimension(P)=0}$ and $\texttt{Degree(P)=2}$ means that the system~\eqref{ex:PL-2mixtures_poly_abp_n4} has exactly two solutions in $\mathbb{C}$ (counted with multiplicity) for this $(\bm{a'}_{1:4},\bm{b'}_{1:4},p_1')$, given by $(\bm{x}_{1:4},\bm{y}_{1:4},p)=(\bm{a'}_{1:4},\bm{b'}_{1:4},p_1')$ or $(\bm{b'}_{1:4},\bm{a'}_{1:4},1-p_1')$.

Altogether, we have proved that the system~\eqref{ex:PL-2mixtures_poly_abp_n4} (and hence the system~\eqref{ex:PL-2mixtures_p}) has exactly two solutions in $\mathbb{C}$ (counted with multiplicity) for all $(\bm{a}_{2:4},\bm{b}_{2:4},p_1) \in Q_{PL,p}^{7}$ but a set $V_4$ of $\lambda_7$-measure zero.

\paragraph{Case $n \geq 4$.}
In the cases where $n \geq 4$, we split the system~\eqref{ex:PL-2mixtures_poly_p} into two parts. One is the system~\eqref{ex:PL-2mixtures_poly_abp_n4}, and the other is
{\small\begin{equation}\label{ex:pln > 5}
\begin{cases}
(p_1a_2a_i(1-b_2)+p_2b_2b_i(1-a_2))(1-x_2)(1-y_2) =(1-a_2)(1-b_2)(p x_2x_i(1-y_2)+(1-p)y_2y_i(1-x_2)), & \forall i \geq 5, \\
(p_1a_3a_i(1-b_3)+p_2b_3b_i(1-a_3))(1-x_3)(1-y_3) =(1-a_3)(1-b_3)(p x_3x_i(1-y_3) + (1-p) y_3y_i(1-x_3)), & \forall i \geq 5.
\end{cases}
\end{equation}}

From the case where $n=4$, we know there exists a $\lambda_7$-measure zero set $V_4$ such that the system~\eqref{ex:PL-2mixtures_poly_abp_n4} has exactly two solutions in $\mathbb{C}$ (counted with multiplicity) for all $(\bm{a}_{2:4},\bm{b}_{2:4},p_1) \in Q_{PL,p}^7 \setminus V_4$, given by
\[
(\bm{x}_{1:4},\bm{y}_{1:4},p)=(\bm{a}_{1:4},\bm{b}_{1:4},p_1) \text{ and } (\bm{b}_{1:4},\bm{a}_{1:4},1-p_1).
\]
To proceed, we determine $(x_i,y_i)$ for each $i \geq 5$. Let $c_i:=p_1a_i(1-b_i)$ and $d_i:=p_2b_i(1-a_i)$, for $i=2$ and $3$.
\begin{enumerate}
\item 
Plugging $(\bm{x}_{2:3},\bm{y}_{2:3},p)=(\bm{a}_{2:3},\bm{b}_{2:3},p_1)$ into the system~\eqref{ex:pln > 5} and simplifying, we obtain the following system of linear equations in $(x_i,y_i)$:
\begin{equation}\label{PL_pxiyi}
\begin{cases}
c_2x_i+d_2y_i=c_2a_i+d_2b_i, \\
c_3x_i+d_3y_i=c_3a_i+d_3b_i.
\end{cases}
\end{equation}
If the coefficient matrix of this system of linear equations is non-zero, i.e., $c_2d_3-c_3d_2 \neq 0$ (this is a condition on $a_2,a_3,b_2,b_3$), the system~\eqref{PL_pxiyi} has a unique solution in $\mathbb{C}$ (counted with multiplicity), given by $(x_i,y_i)=(a_i,b_i)$.
\item
Plugging $(\bm{x}_{2:3},\bm{y}_{2:3},p)=(\bm{b}_{2:3},\bm{a}_{2:3},1-p_1)$ into the system~\eqref{ex:pln > 5} and simplifying, we obtain the following system of linear equations in $(x_i,y_i)$:
\begin{equation}\label{PL_pxiyi2}
\begin{cases}
d_2x_i+c_2y_i=c_2a_i+d_2b_i, \\
d_3x_i+c_3y_i=c_3a_i+d_3b_i.
\end{cases}
\end{equation}
If the coefficient matrix of this system of linear equations is non-zero, i.e., $c_2d_3-c_3d_2 \neq 0$ (this is a condition on $a_2,a_3,b_2,b_3$), the system~\eqref{PL_pxiyi2} has a unique solution in $\mathbb{C}$ (counted with multiplicity), given by $(x_i,y_i)=(b_i,a_i)$. 
\end{enumerate}
Altogether, we can define
\[
V_{n}:=\{(\bm{a}_{2:n},\bm{b}_{2:n},p_1) \in \mathbb{C}^{2n-1}: (\bm{a}_{2:4},\bm{b}_{2:4},p_1) \in V_4 \text{ or } c_2d_3-c_3d_2 = 0\} \subseteq \mathbb{C}^{2n-1},
\]
which is of $\lambda_{2n-1}$-measure zero by Lemma~\ref{closed-0}, since it is defined by a non-zero polynomial. From the arguments above, for all $(\bm{a}_{2:n},\bm{b}_{2:n},p_1) \in Q^{2n-1}_{PL,p} \setminus V_{n}$, the system~\eqref{ex:PL-2mixtures_poly_p} (and hence the system~\eqref{ex:PL-2mixtures_p}) has exactly two solutions in $\mathbb{C}$ (counted with multiplicity), given by $(\bm{x},\bm{y},p)=(\bm{a},\bm{b},p_1)$ and $(\bm{b},\bm{a},1-p_1)$. This finishes the proof.
\end{proof}

\subsubsection{Known mixing probabilities}

We now prove the generic identifiability of the two mixtures of Plackett-Luce model for any $n \geq 3$ with given $p_1$ and $p_2$. Note that the system~\eqref{ex:PL-2mixtures} has at least one solution in $\mathbb{C}$ coming from the initial data $(\bm{x},\bm{y})=(\bm{a},\bm{b})$. Our goal is to show it is the unique solution in $\mathbb{C}$. For the concrete case $p_1=0.7$, the following is proved:

\begin{proposition}
\label{prop:ex-pl}
If $n \geq 3$ and $(p_1,p_2) = (0.7,0.3)$, the system~\eqref{ex:PL-2mixtures} has a unique solution in $\mathbb{C}$ (counted with multiplicity) for all $(\bm{a}_{2:n},\bm{b}_{2:n}) \in Q^{2n-2}_{PL}$ but a set of $\lambda_{2n-2}$-measure zero, given by $(\bm{x},\bm{y})=(\bm{a},\bm{b})$.
\end{proposition}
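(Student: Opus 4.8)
The plan is to follow the two-stage argument of Theorem~\ref{prop-thm:ex-pl_p}, adapted to fixed mixing weights, rather than a genuine $n-1 \to n$ induction: because the Plackett--Luce scores are normalized globally by $\sum_k a_k = 1$, the equations restricted to a subset of items do not themselves form a smaller Plackett--Luce model, so I would instead settle a base case on three items and then recover each additional item independently. Throughout I would invoke the marginal identity derived in Appendix~\ref{apn:pl-three}, $\eta_{i,j,\cdot} = p_1\frac{a_i a_j}{1-a_i} + p_2\frac{b_i b_j}{1-b_i}$, and apply Theorem~\ref{thm:general-case} with $\ell = 1$.

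For the base case $n = 3$, I would rewrite the system~\eqref{ex:PL-2mixtures} in the normalized variables and clear denominators exactly as in~\eqref{ex:PL-2mixtures_poly_p} --- introducing auxiliary variables $t_i, h_i$ with $t_i(1-x_i) = 1$ and $h_i(1-y_i) = 1$ and multiplying each equation by $(1-a_i)(1-b_i)$ --- but now with $p$ fixed at $p_1 = 0.7$. The auxiliary equations keep the transformation faithful, so solution counts over $\mathbb{C}$ are preserved. Assumption~\ref{assum:key-assumption1} is immediate, since $(\bm{x},\bm{y}) = (\bm{a},\bm{b})$ solves the system for all parameters outside the Zariski-closed locus where a denominator vanishes. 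For Assumption~\ref{assum:key-assumption2}, I would use Magma to compute a Gr\"obner basis, read off $\mathrm{Bad}(\bm{a}_{2:3},\bm{b}_{2:3})$ and hence $Z$, choose a concrete rational point $(\bm{a}',\bm{b}') \notin Z$, and verify with the $\texttt{Dimension}$ and $\texttt{Degree}$ commands that the associated scheme is zero-dimensional of degree $1$. By Theorem~\ref{thm:general-case}, the $n=3$ system then has the unique solution $(\bm{a}_{1:3},\bm{b}_{1:3})$ off a $\lambda_4$-measure-zero set $V_3$.

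For the general case $n \geq 4$, I would split~\eqref{ex:PL-2mixtures} into the three-item sub-system just handled, which generically pins $(\bm{x}_{1:3},\bm{y}_{1:3}) = (\bm{a}_{1:3},\bm{b}_{1:3})$, together with, for each $i \geq 4$, the two marginal equations $\eta_{2,i,\cdot}$ and $\eta_{3,i,\cdot}$. Substituting the known values $x_2 = a_2,\ y_2 = b_2,\ x_3 = a_3,\ y_3 = b_3$ linearizes each such pair in $(x_i,y_i)$, giving a $2 \times 2$ system with coefficient matrix
\[
A = \begin{bmatrix} \dfrac{p_1 a_2}{1-a_2} & \dfrac{p_2 b_2}{1-b_2} \\[1.5ex] \dfrac{p_1 a_3}{1-a_3} & \dfrac{p_2 b_3}{1-b_3} \end{bmatrix},
\]
which is independent of $i$. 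Since $\det(A)$ is a non-zero polynomial in $(a_2,a_3,b_2,b_3)$, the locus $W_n := \{\det(A) = 0\}$ has $\lambda_{2n-2}$-measure zero by Lemma~\ref{closed-0}; off $V_n := (V_3 \times \mathbb{C}^{2n-6}) \cup W_n$ every $(x_i,y_i)$ is forced to equal $(a_i,b_i)$, so the whole system has the unique solution $(\bm{a},\bm{b})$.

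The step I expect to be the main obstacle is conceptual rather than computational: unlike the BTL and MNL arguments, there is no smaller Plackett--Luce instance to induct on, so one must verify that a single fixed pair of anchor equations (indices $2$ and $3$) suffices to recover every remaining coordinate, which is exactly why the base case must be established on three items before the anchoring scheme can be launched. The accompanying care is in the base case itself --- guaranteeing faithfulness of the denominator-clearing step and confirming via Magma that the degree is precisely $1$, so that no spurious complex solution survives (in contrast to the BTL known-mixing case, where the polynomial relaxation has degree three and two of the roots must be discarded).
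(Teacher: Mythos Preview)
Your approach mirrors the paper's: pass to the marginal identities $\eta_{i,j,\cdot}$, verify a small base case via Theorem~\ref{thm:general-case} and Magma, then recover each additional coordinate from a $2\times 2$ linear system with anchors $2$ and $3$. The paper, however, takes \emph{four} items as the base case, not three: it works with the nine equations $\eta_{i,j,\cdot}$ for $i\in\{1,2,3\}$ and $j\in\{1,2,3,4\}\setminus\{i\}$, in the eight unknowns $(x_{1:4},y_{1:4})$, and verifies via Magma that this over-determined system has degree exactly one. Your three-item version has only six equations in the six unknowns $(x_{1:3},y_{1:3})$; an exactly-determined polynomial system can easily have degree $>1$, and nothing in the paper certifies that it does not here. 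So the Magma check you defer is not a formality --- if it returns degree $>1$, your base case collapses and you are forced back to four items as the paper does.

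There is also a subtlety you acknowledge but do not fully resolve. For the $n\geq 4$ step you need the marginal equations on items $1,2,3$ \emph{alone} --- with the ``$1$'' in $1-x_i$ a literal constant, not $x_1+x_2+x_3$ --- to pin down $(x_{1:3},y_{1:3})$. This is \emph{not} the $n=3$ instance of~\eqref{ex:PL-2mixtures}, which additionally carries the constraint $x_1=1-x_2-x_3$. Your phrase ``rewrite the system~\eqref{ex:PL-2mixtures} in the normalized variables'' reads as the latter, but that result does not transfer: for $n\geq 4$ the first three scores do not sum to $1$. The paper sidesteps this by writing its base-case polynomial system~\eqref{ex:PL-2mixtures_poly_ab_n4} with \emph{no} sum constraint on $(x_{1:4},y_{1:4})$, so the identical system reappears verbatim inside every larger $n$. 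If you intend the same device at $n=3$, say so explicitly; if you intend the constrained system, the extension step has a gap.
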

\begin{proof}[Proof of Proposition~\ref{prop:ex-pl}]
According to the discussion above, we wish to consider the following equation system, with equations given by linear combinations of the equations in the system~\eqref{ex:PL-2mixtures}:
\begin{equation}
\label{ex:PL-2mixtures-}
\begin{cases}
\eta_{1,i,\cdot} = p_1 \dfrac{x_1x_i}{1-x_1} + p_2 \dfrac{y_1y_i}{1-y_1}, & \forall i \in \{2,3,4\} \\
\eta_{2,i,\cdot} = p_1 \dfrac{x_2x_i}{1-x_2} + p_2 \dfrac{y_2y_i}{1-y_2}, & \forall i \in \{1,3,4\} \\
\eta_{3,i,\cdot} = p_1 \dfrac{x_3x_i}{1-x_3} + p_2 \dfrac{y_3y_i}{1-y_3}, & \forall i \in \{1,2,4\} \\
\eta_{1,i,\cdot} = p_1 \dfrac{x_1x_i}{1-x_1} + p_2  \dfrac{y_1y_i}{1-y_1}, & \forall i \geq 5, \\
\eta_{2,i,\cdot} = p_1 \dfrac{x_2x_i}{1-x_2} + p_2 \dfrac{y_2y_i}{1-y_2}, & \forall i \geq 5, \\
\eta_{3,i,\cdot} = p_1 \dfrac{x_3x_i}{1-x_3} + p_2 \dfrac{y_3y_i}{1-y_3}, & \forall i \geq 5,
\end{cases}
\end{equation}
where $\eta_{k,l,\cdot}$ corresponds to the probability that $k \succ l \succ others$. A more detailed derivation of the system~\eqref{ex:PL-2mixtures-} can be found in Appendix~\ref{apn:pl-three}. 

Clearly, solutions to the system~\eqref{ex:PL-2mixtures} are necessarily also solutions to the system~\eqref{ex:PL-2mixtures-}; thus, it suffices to prove the generic identifiability of the system~\eqref{ex:PL-2mixtures-}.

To apply the results from Section~\ref{sect:main-result}, we first translate the equation system~\eqref{ex:PL-2mixtures-} into the following equivalent system, with coefficients equal to polynomials in $(\bm{a},\bm{b})$:
\begin{equation}\label{ex:PL-2mixtures_poly_ab}
\begin{cases}
\forall i \in \{2,3,4\}, \\ \ (p_1a_1a_i(1-b_1)+p_2b_1b_i(1-a_1))(1-x_1)(1-y_1)=(1-a_1)(1-b_1)(p_1 x_1x_i(1-y_1) + p_2 y_1y_i(1-x_1)),  \\
\forall i \in \{1,3,4\}, \\ \ (p_1a_2a_i(1-b_2)+p_2b_2b_i(1-a_2))(1-x_2)(1-y_2)=(1-a_2)(1-b_2)(p_1 x_2x_i(1-y_2) + p_2 y_2y_i(1-x_2)), \\
\forall i \in \{1,2,4\}, \\ \ (p_1a_3a_i(1-b_3)+p_2b_3b_i(1-a_3))(1-x_3)(1-y_3)=(1-a_3)(1-b_3)(p_1 x_3x_i(1-y_3) + p_2 y_3y_i(1-x_3)), \\
\forall i \geq 5, \\
\ (p_1a_1a_i(1-b_1)+p_2b_1b_i(1-a_1))(1-x_1)(1-y_1)=(1-a_1)(1-b_1)(p_1 x_1x_i(1-y_1)+p_2 y_1y_i(1-x_1)), \\
\forall i \geq 5, \\
\ (p_1a_2a_i(1-b_2)+p_2b_2b_i(1-a_2))(1-x_2)(1-y_2) =(1-a_2)(1-b_2)(p_1 x_2x_i(1-y_2) + p_2 y_2y_i(1-x_2)), \\
\forall i \geq 5, \\
\ (p_1a_3a_i(1-b_3)+p_2b_3b_i(1-a_3))(1-x_3)(1-y_3) =(1-a_3)(1-b_3)(p_1 x_3x_i(1-y_3) + p_2 y_3y_i(1-x_3)),  \\
\forall i \in [n], \ t_i(1-x_i) = 1, \\
\forall i \in [n], \ h_i(1-y_i) = 1.
\end{cases}
\end{equation}

We will prove the generic identifiability of the system~\eqref{ex:PL-2mixtures_poly_ab}. We will first consider the case $n=4$, and then leverage the result to prove the cases $n \geq 5$.

\paragraph{Case $n=4$.}
In this case, the system~\eqref{ex:PL-2mixtures_poly_ab} becomes
\begin{equation}\label{ex:PL-2mixtures_poly_ab_n4}
\begin{cases}
\forall i \in \{2,3,4\}, \\
\ (p_1a_1a_i(1-b_1)+p_2b_1b_i(1-a_1))(1-x_1)(1-y_1)=(1-a_1)(1-b_1)(p_1 x_1x_i(1-y_1) + p_2 y_1y_i(1-x_1)), \\
\forall i \in \{1,3,4\}, \\
\ (p_1a_2a_i(1-b_2)+p_2b_2b_i(1-a_2))(1-x_2)(1-y_2)=(1-a_2)(1-b_2)(p_1 x_2x_i(1-y_2) + p_2 y_2y_i(1-x_2)),\\
\forall i \in \{1,2,4\}, \\ \ (p_1a_3a_i(1-b_3)+p_2b_3b_i(1-a_3))(1-x_3)(1-y_3)=(1-a_3)(1-b_3)(p_1 x_3x_i(1-y_3) + p_2 y_3y_i(1-x_3)),\\
\forall i \in [4], \ t_i(1-x_i) = 1, \\
\forall i \in [4], \ h_i(1-y_i) = 1.
\end{cases}
\end{equation}

To apply Theorem~\ref{thm:general-case} to the sytem~\eqref{ex:PL-2mixtures_poly_ab_n4}, we need to check Assumptions~\ref{assum:key-assumption1} and~\ref{assum:key-assumption2} with $\ell = 1$.

For Assumption~\ref{assum:key-assumption1}: It is clear that the system~\eqref{ex:PL-2mixtures_poly_ab_n4} has at least one solution in $\mathbb{C}$, given by $(\bm{x}_{1:4},\bm{y}_{1:4})=(\bm{a}_{1:4},\bm{b}_{1:4})$.

For Assumption~\ref{assum:key-assumption2}: 
We first compute the Gr\"{o}bner basis of the system~\eqref{ex:PL-2mixtures_poly_ab_n4}.
\begin{lstlisting}[language=Magma,caption=Gr\"{o}bner basis of Plackett-Luce models with known mixing probabilities, label=grobner_pl_model_p07,frame=single,basicstyle=\scriptsize]
P<x1,x2,x3,x4,y1,y2,y3,y4,t1,t2,t3,t4,h1,h2,h3,h4,
  a1,a2,a3,a4,b1,b2,b3,b4>:=FreeAlgebra(Rationals(),24,"lex");

I:=ideal<P|
(7/10*a1*a2*(1-b1)+(1-7/10)*b1*b2*(1-a1))*(1-x1)*(1-y1)-
(1-a1)*(1-b1)*(7/10*x1*x2*(1-y1)+(1-7/10)*y1*y2*(1-x1)),
(7/10*a1*a3*(1-b1)+(1-7/10)*b1*b3*(1-a1))*(1-x1)*(1-y1)-
(1-a1)*(1-b1)*(7/10*x1*x3*(1-y1)+(1-7/10)*y1*y3*(1-x1)),
(7/10*a1*a4*(1-b1)+(1-7/10)*b1*b4*(1-a1))*(1-x1)*(1-y1)-
(1-a1)*(1-b1)*(7/10*x1*x4*(1-y1)+(1-7/10)*y1*y4*(1-x1)),
(7/10*a2*a1*(1-b2)+(1-7/10)*b2*b1*(1-a2))*(1-x2)*(1-y2)-
(1-a2)*(1-b2)*(7/10*x2*x1*(1-y2)+(1-7/10)*y2*y1*(1-x2)),
(7/10*a2*a3*(1-b2)+(1-7/10)*b2*b3*(1-a2))*(1-x2)*(1-y2)-
(1-a2)*(1-b2)*(7/10*x2*x3*(1-y2)+(1-7/10)*y2*y3*(1-x2)),
(7/10*a2*a4*(1-b2)+(1-7/10)*b2*b4*(1-a2))*(1-x2)*(1-y2)-
(1-a2)*(1-b2)*(7/10*x2*x4*(1-y2)+(1-7/10)*y2*y4*(1-x2)),
(7/10*a3*a1*(1-b3)+(1-7/10)*b3*b1*(1-a3))*(1-x3)*(1-y3)-
(1-a3)*(1-b3)*(7/10*x3*x1*(1-y3)+(1-7/10)*y3*y1*(1-x3)),
(7/10*a3*a2*(1-b3)+(1-7/10)*b3*b2*(1-a3))*(1-x3)*(1-y3)-
(1-a3)*(1-b3)*(7/10*x3*x2*(1-y3)+(1-7/10)*y3*y2*(1-x3)),
(7/10*a3*a4*(1-b3)+(1-7/10)*b3*b4*(1-a3))*(1-x3)*(1-y3)-
(1-a3)*(1-b3)*(7/10*x3*x4*(1-y3)+(1-7/10)*y3*y4*(1-x3)),
t1*(1-x1)-1,
t2*(1-x2)-1,
t3*(1-x3)-1,
t4*(1-x4)-1,
h1*(1-y1)-1,
h2*(1-y2)-1,
h3*(1-y3)-1,
h4*(1-y4)-1>;

-> GroebnerBasis(I);
\end{lstlisting}
From the output, we obtain
\begin{equation}\label{pl_bad}
\mathrm{Bad}(\bm{a}_{2:4},\bm{b}_{2:4})=
\begin{cases}
7a_ia_j(b_i-1)+3(a_i-1)b_ib_j, 
& \forall i < j \in [4], \\
a_i-1,\ b_i-1, & \forall i \in [3].
\end{cases}
\end{equation}
We choose $(\bm{a'}_{1:4},\bm{b'}_{1:4})=(1/10,2/10,3/10,4/10;1/20,7/20,9/20,3/20)$. It is routine to check that $(\bm{a'}_{2:4},\bm{b'}_{2:4}) \in \mathbb{C}^6 \setminus Z(\bm{a}_{2:4},\bm{b}_{2:4})$ using equation~\eqref{pl_bad}. Next, we can check that the system~\eqref{ex:PL-2mixtures_poly_ab_n4} has a unique solution in $\mathbb{C}$ (counted with multiplicity) using Magma.
\begin{lstlisting}[language=Magma,caption=Dimension and degree computations of Plackett-Luce models with known mixing probabilities, label=pl_model_p07,frame=single,basicstyle=\scriptsize]
a:=[1/10,2/10,3/10,4/10];
b:=[1/20,7/20,9/20,3/20];
p1:=7/10;
p2:=3/10;

k:=Rationals();
A<x1,x2,x3,x4,y1,y2,y3,y4,t1,t2,t3,t4,h1,h2,h3,h4>:=AffineSpace(k,16);
P:=Scheme(A,
[
(p1*a[1]*a[2]*(1-b[1])+(1-p1)*b[1]*b[2]*(1-a[1]))*(1-x1)*(1-y1)-
(1-a[1])*(1-b[1])*(p1*x1*x2*(1-y1)+(1-p1)*y1*y2*(1-x1)),
(p1*a[1]*a[3]*(1-b[1])+(1-p1)*b[1]*b[3]*(1-a[1]))*(1-x1)*(1-y1)-
(1-a[1])*(1-b[1])*(p1*x1*x3*(1-y1)+(1-p1)*y1*y3*(1-x1)),
(p1*a[1]*a[4]*(1-b[1])+(1-p1)*b[1]*b[4]*(1-a[1]))*(1-x1)*(1-y1)-
(1-a[1])*(1-b[1])*(p1*x1*x4*(1-y1)+(1-p1)*y1*y4*(1-x1)),
(p1*a[2]*a[1]*(1-b[2])+(1-p1)*b[2]*b[1]*(1-a[2]))*(1-x2)*(1-y2)-
(1-a[2])*(1-b[2])*(p1*x2*x1*(1-y2)+(1-p1)*y2*y1*(1-x2)),
(p1*a[2]*a[3]*(1-b[2])+(1-p1)*b[2]*b[3]*(1-a[2]))*(1-x2)*(1-y2)-
(1-a[2])*(1-b[2])*(p1*x2*x3*(1-y2)+(1-p1)*y2*y3*(1-x2)),
(p1*a[2]*a[4]*(1-b[2])+(1-p1)*b[2]*b[4]*(1-a[2]))*(1-x2)*(1-y2)-
(1-a[2])*(1-b[2])*(p1*x2*x4*(1-y2)+(1-p1)*y2*y4*(1-x2)),
(p1*a[3]*a[1]*(1-b[3])+(1-p1)*b[3]*b[1]*(1-a[3]))*(1-x3)*(1-y3)-
(1-a[3])*(1-b[3])*(p1*x3*x1*(1-y3)+(1-p1)*y3*y1*(1-x3)),
(p1*a[3]*a[2]*(1-b[3])+(1-p1)*b[3]*b[2]*(1-a[3]))*(1-x3)*(1-y3)-
(1-a[3])*(1-b[3])*(p1*x3*x2*(1-y3)+(1-p1)*y3*y2*(1-x3)),
(p1*a[3]*a[4]*(1-b[3])+(1-p1)*b[3]*b[4]*(1-a[3]))*(1-x3)*(1-y3)-
(1-a[3])*(1-b[3])*(p1*x3*x4*(1-y3)+(1-p1)*y3*y4*(1-x3)),
t1*(1-x1)-1,
t2*(1-x2)-1,
t3*(1-x3)-1,
t4*(1-x4)-1,
h1*(1-y1)-1,
h2*(1-y2)-1,
h3*(1-y3)-1,
h4*(1-y4)-1
]);

-> Dimension(P);
-> @0@

-> Degree(P);
-> @1@
\end{lstlisting}
From Listing~\ref{pl_model_p07}, $\texttt{Dimension(P)=0}$ and $\texttt{Degree(P)=1}$ means that the system~\eqref{ex:PL-2mixtures_poly_ab_n4} has a unique solution in $\mathbb{C}$ (counted with multiplicity) for this choice of $(\bm{a'}_{2:4},\bm{b'}_{2:4})$. 

Thus, by Theorem~\ref{thm:general-case}, we have proved that the system~\eqref{ex:PL-2mixtures_poly_ab_n4} (and hence the system~\eqref{ex:PL-2mixtures}) has a unique solution in $\mathbb{C}$ (counted with multiplicity) for all $(\bm{a}_{2:4},\bm{b}_{2:4}) \in Q_{PL}^{6}$ but a set $V_4$ of $\lambda_{6}$-measure zero.
\paragraph{Case $n \geq 5$.}
We consider two parts of the system~\eqref{ex:PL-2mixtures_poly_ab}. One is the system~\eqref{ex:PL-2mixtures_poly_ab_n4}, and the other is 
\begin{equation}\label{ex:PL-2mixtures_poly_ab5n}
\begin{cases}
\forall i \geq 5, \\
\ (p_1a_2a_i(1-b_2)+p_2b_2b_i(1-a_2))(1-x_2)(1-y_2) =(1-a_2)(1-b_2)(p_1x_2x_i(1-y_2)+p_2 y_2y_i(1-x_2)), \\
 \forall i \geq 5, \\
\ (p_1a_3a_i(1-b_3)+p_2b_3b_i(1-a_3))(1-x_3)(1-y_3) =(1-a_3)(1-b_3)(p_1x_3x_i(1-y_3)+p_2 y_3y_i(1-x_3)). \\
\end{cases}
\end{equation}

From the case where $n=4$, we know there exists a $\lambda_6$-measure zero set $V_4$ such that the system~\eqref{ex:PL-2mixtures_poly_ab_n4} has a unique solution in $\mathbb{C}$ (counted with multiplicity) for all $(\bm{a}_{2:4},\bm{b}_{2:4}) \in Q_{PL}^6 \setminus V_4$, given by
\[
(\bm{x}_{1:4},\bm{y}_{1:4})=(\bm{a}_{1:4},\bm{b}_{1:4}).
\]
To proceed, we determine $(x_i,y_i)$ for each $i \geq 5$. Plugging $(\bm{x}_{2:3},\bm{y}_{2:3})=(\bm{a}_{2:3},\bm{b}_{2:3})$ into the system~\eqref{ex:PL-2mixtures_poly_ab5n} and simplifying, we obtain the following system of linear equations in $(x_i,y_i)$:
\begin{equation}\label{PL_xiyi}
\begin{cases}
c_2x_i+d_2y_i=c_2a_i+d_2b_i, \\
c_3x_i+d_3y_i=c_3a_i+d_3b_i,
\end{cases}
\end{equation}
where $c_i:=p_1a_i(1-b_i)$ and $d_i:=p_2b_i(1-a_i)$ for $i=2$ and $3$. If the coefficient matrix of this system of linear equations is non-zero, i.e., $c_2d_3-c_3d_2 \neq 0$ (this is a condition on $a_2,a_3,b_2$, and $b_3$), the system~\eqref{PL_xiyi} has a unique solution in $\mathbb{C}$ (counted with multiplicity), given by $(x_i,y_i)=(a_i,b_i)$. 

Altogether, we can define
\[
V_{n}:=\{(\bm{a}_{2:n},\bm{b}_{2:n}) \in \mathbb{C}^{2n-2}: (\bm{a}_{2:4},\bm{b}_{2:4}) \in V_4 \text{ or } c_2d_3-c_3d_2 = 0\} \subseteq \mathbb{C}^{2n-2},
\]
which is of $\lambda_{2n-2}$-measure zero by Lemma~\ref{closed-0}, since it is defined by a non-zero polynomial. From the arguments above, for all $(\bm{a}_{2:n},\bm{b}_{2:n}) \in Q^{2n-2}_{PL} \setminus V_{n}$, the system~\eqref{ex:PL-2mixtures_poly_ab} (and hence the system~\eqref{ex:PL-2mixtures}) has a unique solution in $\mathbb{C}$ (counted with multiplicity), given by $(\bm{x},\bm{y})=(\bm{a},\bm{b})$. This finishes the proof.
\end{proof}

\begin{remark}
In practice, the number $n$ of items being ranked is often quite large, so
the condition on $n$ in our theorems/propositions are typically satisfied. Nonetheless, in Table~\ref{tab:tightness} in Appendix~\ref{apn:tight}, we provide details about the smallest possible $n$ for generic identifiability vs.\ the smallest $n$ in our results. We see that our requirements on $n$ are nearly tight for all three mixtures of ranking models.
\end{remark}

\subsection{Tightness of our results on the number of items $n$}\label{apn:tight}

\begin{table}[htp!]
\caption{Tightness of our results on the number of items $n$.}
\label{tab:tightness}
\begin{center}
\begin{tabular}{c|c|c}
\toprule
Mixture model & The smallest possible & The smallest guaranteed $n$\\
 & generic tight $n$ &  \\
\midrule
\shortstack{ BTL model (known $p_1$)} & 4 & 5\\
\hline
\shortstack{BTL model (unknown $p_1$)} & 5 & 5\\
\hline
\shortstack{MNL model with 3-slate (known $p_1$)} & 4 & 4 \\
\hline
\shortstack{MNL model with 3-slate (unknown $p_1$)} & 4 & 4 \\
\hline
\shortstack{Plackett-Luce model (known $p_1$)} & 3 & 3\\
\hline
\shortstack{Plackett-Luce model (unknown $p_1$)} & 3 & 4\\
\bottomrule
\end{tabular}  
\end{center}
\end{table}

Table~\ref{tab:tightness} summarizes results concerning the tightness of $n$. The second column gives the smallest $n$ that is possible for any mixture model to achieve generic identifiability, because the number of equations is larger than or equal to the number of variables. The third column gives the smallest $n$ that our results can guarantee. We can see that except for the first and last rows, the smallest possible tight $n$ equals to the smallest guaranteed $n$; for the first and last rows, they are quite close.

\subsection{Mixtures of MNL models with 2-slate and 3-slate}\label{apn:mnl23}
In this example, we consider mixtures of MNL models with 2-slate and 3-slate for $n \geq 3$. Let $\bm{a}_{1:n}$, $\bm{b}_{1:n}$ be the score parameters of the two mixtures. We then obtain
\begin{align}\label{eq:eta_def_mnl23}
\begin{split}
    \forall i \neq j \in [n], \quad \eta_{i,j} &= p_1 \dfrac{a_i}{a_i + a_j} + p_2 \dfrac{b_i}{b_i + b_j}, \\
    \forall i \neq j \neq k \in [n], \quad \eta_{i,j,k} & = \ p_1 \dfrac{a_i}{a_i + a_j + a_k} + p_2 \frac{b_i}{b_i + b_j + b_k}. \\
\end{split}
\end{align}
Here, we choose to scale up $\bm{a}_{1:n}$ by multiplying by a constant so that $a_1 = 1$, and similarly manipulate $\bm{b}_{1:n}$ to have $b_1 = 1$. This does not influence the values of the $\eta_{i,j}$'s or $\eta_{i,j,k}$'s. In~\cite{chierichetti2018learning}, the authors scale up $\bm{a}_{1:n}$ to get $a_1 + a_2 + a_3 = 1$ and $\bm{b}_{1:n}$ to have $b_1 + b_2 + b_3 = 1$. This choice is for the convenience of defining the set of bad parameters.

Given $p_1$ and $p_2$, to determine the scores of two mixtures, we try to solve the following equation system in $(\bm{x}, \bm{y}) :=x_{1:n}, y_{1:n})$, for $n \geq 3$:
\begin{align}\label{ex:mnl23_2mixture}
\begin{cases}
x_1=y_1=1, \\
p_1 \dfrac{x_i}{x_i + x_j} + p_2 \dfrac{y_i}{y_i + y_j} = \eta_{i,j}, & \forall i\neq j \in [n], \\
p_1 \dfrac{x_i}{x_i + x_j + x_k} + p_2 \dfrac{y_i}{y_i + y_j + y_k} = \eta_{i,j,k}, & \forall i\neq j 
\neq k\in [n].
\end{cases}
\end{align}

Let $Q^{2n-2}_{MNL23} := \prod_{i=1}^{2n-2}\left[r_i,R_i\right] \subseteq \mathbb{R}^{2n-2}$ be the domain of $(\bm{a}_{2:n},\bm{b}_{2:n})$, where $R_i > r_i > 0$. Then the set of bad parameters that do not have the identifiability property is
\begin{align}
\begin{split}
    N^{2n-2}_{MNL23} = & \left\{(\bm{a}_{2:n},\bm{b}_{2:n}) \in Q^{2n-2}_{MNL23}: \exists \left(\bm{a}_{2:n}^{\#},\bm{b}^{\#}_{2:n}\right) \in Q^{2n-2}_{MNL23}, \text{ s.t. }  (\bm{a}^{\#}_{2:n} \neq \bm{a}_{2:n} \vee \bm{b}^{\#}_{2:n} \neq \bm{b}_{2:n}) \quad \wedge \right. \\
    & \left. \left(\forall i < j \in [n],\  \eta_{i,j}(\bm{a}^{\#},\bm{b}^{\#}) = \eta_{i,j}(\bm{a},\bm{b})\mbox{ for } a_1^{\#} =b_1^{\#} = a_1 = b_1 = 1\right) \quad \wedge \right. \\
    & \left. \left(\forall i,j,k\in [n], \ \eta_{i,j,k}(\bm{a}^{\#},\bm{b}^{\#}) = \eta_{i,j,k}(\bm{a},\bm{b})\mbox{ for } a_1^{\#} =b_1^{\#} = a_1 = b_1 = 1\right) \right\}.
\end{split}
\end{align}
We will later show that $N^{2n-2}_{MNL23}$ has Lebesgue measure zero for the mixtures of MNL models with 2-slate and 3-slate being identifiable.

When we consider $p$ as an additional variable, our domain $Q_{MNL23,p}^{2n-1}$ becomes $Q_{MNL23}^{2n-2} \times (0,1) \subseteq \real^{2n-1}$, and we define the set of bad parameters to be
\begin{align}
\begin{split}
    N^{2n-1}_{MNL23,p} = & \left\{(\bm{a}_{2:n},\bm{b}_{2:n},p_1) \in Q^{2n-1}_{MNL23,p}: \exists \left(\bm{a}_{2:n}^{\#},\bm{b}^{\#}_{2:n},p^{\#}\right) \in Q^{2n-1}_{MNL23,p}, \text{ s.t. }  \right. \\
    & \left. (\bm{a}^{\#}_{2:n} \neq \bm{a}_{2:n} \vee \bm{b}^{\#}_{2:n} \neq \bm{b}_{2:n} \vee p^{\#} \neq p_1) \quad \wedge \right. \\
    & \left. \left(\forall i < j \in [n],\  \eta_{i,j}(\bm{a}^{\#},\bm{b}^{\#},p^{\#}) = \eta_{i,j}(\bm{a},\bm{b},p_1)\mbox{ for } a_1^{\#} =b_1^{\#} = a_1 = b_1 = 1\right) \quad \wedge \right. \\
    & \left. \left(\forall i,j,k\in [n], \ \eta_{i,j,k}(\bm{a}^{\#},\bm{b}^{\#},p^{\#}) = \eta_{i,j,k}(\bm{a},\bm{b},p_1)\mbox{ for } a_1^{\#} =b_1^{\#} = a_1 = b_1 = 1\right) \right\}.
\end{split}
\end{align}

We will consider two cases, depending on whether the mixing probabilities are known. For the case where the mixing probabilities are known, ~\cite[Theorem 13]{chierichetti2018learning} solved the identifiability issue for a uniform mixture ($p_1=0.5$), using a different technique.

\subsubsection{Unknown mixing probabilities}
In this case of MNL model with 2-slate and 3-slate in parameter space $(\bm{a}_{2:n},\bm{b}_{2:n},p_1)$, we study the following equation system in $(\bm{x},\bm{y},p)$:
\begin{align}\label{ex:mnl23_2mixture_p}
\begin{cases}
x_1 = y_1 = 1. \\
p \dfrac{x_i}{x_i + x_j} + (1-p) \dfrac{y_i}{y_i + y_j} = \eta_{i,j}, & \forall i \neq j \in [n], \\
p \dfrac{x_i}{x_i + x_j + x_k} + (1-p) \dfrac{y_i}{y_i + y_j + y_k} = \eta_{i,j,k}, & \forall i \neq j \neq k \in [n].
\end{cases}
\end{align}
\begin{proposition}\label{prop:ex-mnl_p}
If $n \geq 4$, the system~\eqref{ex:mnl23_2mixture_p} has exactly two solutions in $\mathbb{C}$ (counted with multiplicity) for all $(\bm{a}_{2:n},\bm{b}_{2:n},p_1) \in Q^{2n-2}_{MNL23} \times (0,1)$ but a set of $\lambda_{2n-1}$-measure zero, given by $(\bm{x},\bm{y},p)=(\bm{a},\bm{b},p_1)$ and $(\bm{x},\bm{y},p)=(\bm{b},\bm{a},1-p_1)$.
\end{proposition}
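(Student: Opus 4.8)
The plan is to mirror the inductive strategy used for Theorem~\ref{prop-thm:btl_p} and Theorem~\ref{prop-thm:ex-mnl3_p}, applying Theorem~\ref{thm:general-case} with $\ell=2$ at a base case $n=4$ and then propagating generic identifiability to larger $n$ by a linear-algebra argument. Since the mixing probability is unknown, the system~\eqref{ex:mnl23_2mixture_p} inherently carries the two solutions $(\bm{x},\bm{y},p)=(\bm{a},\bm{b},p_1)$ and $(\bm{b},\bm{a},1-p_1)$, so the target count is $\ell=2$. First I would translate~\eqref{ex:mnl23_2mixture_p} into an equivalent polynomial system whose coefficients are polynomials in $(\bm{a},\bm{b},p_1)$: clear each pairwise equation by multiplying through by $(x_i+x_j)(y_i+y_j)$ and each triplet equation by $(x_i+x_j+x_k)(y_i+y_j+y_k)$, and adjoin auxiliary variables $t,h$ with equations $t_{ij}(x_i+x_j)=1$, $h_{ij}(y_i+y_j)=1$ (and their triplet analogues), exactly as in~\eqref{ex:BTL_2mixtures_pairwise_p_poly_ab} and~\eqref{ex:BTL_2mixtures_triplet_p_poly_}, so that no spurious solutions with vanishing denominators are introduced. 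The resulting system is then equivalent to~\eqref{ex:mnl23_2mixture_p} over the parameter domain, and it suffices to count its complex solutions.

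For the base case $n=4$, I would restrict to a finite subset of these polynomial equations, supplemented by enough auxiliary $t,h$ equations to rule out infinite denominators, and verify the two assumptions. Assumption~\ref{assum:key-assumption1} holds because the two reordering solutions are distinct and well-defined outside the Zariski closed set where $p_1=0.5$ or some denominator $a_i+a_j$, $b_i+b_j$, $a_i+a_j+a_k$, or $b_i+b_j+b_k$ vanishes. For Assumption~\ref{assum:key-assumption2}, I would compute the Gröbner basis in Magma to read off $\mathrm{Bad}(\bm{a}_{2:4},\bm{b}_{2:4},p_1)$ and hence $Z(\bm{t})$, then pick a concrete rational instance $(\bm{a}',\bm{b}',p_1')$ lying outside $Z(\bm{t})$ and confirm via \texttt{Dimension} and \texttt{Degree} that the associated scheme is zero-dimensional of degree $2$. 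This certifies exactly two complex solutions (with multiplicity) generically at $n=4$.

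The induction step for $n\geq5$ proceeds as in the earlier proofs: split the system into the subsystem involving only $(\bm{x}_{1:n-1},\bm{y}_{1:n-1},p)$ — which by the induction hypothesis has exactly the two solutions $(\bm{a}_{1:n-1},\bm{b}_{1:n-1},p_1)$ and $(\bm{b}_{1:n-1},\bm{a}_{1:n-1},1-p_1)$ generically — together with the remaining equations involving index $n$, namely the pairwise $\eta_{i,n}$ and triplet $\eta_{i,j,n}$ data. Substituting each of the two known solutions for the first $n-1$ coordinates turns these remaining equations into a linear system in $(x_n,y_n)$; since both 2-slate and 3-slate equations are available, this linear system is overdetermined, and away from the measure-zero locus where the relevant coefficient matrix is rank-deficient each branch yields a unique $(x_n,y_n)$, recovering $(a_n,b_n)$ and $(b_n,a_n)$ respectively. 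Defining $V_n$ as the union of $V_{n-1}\times\mathbb{C}^2$ with the two determinant-vanishing loci and invoking Lemma~\ref{closed-0}, the full system has exactly two solutions for all $(\bm{a}_{2:n},\bm{b}_{2:n},p_1)$ outside a $\lambda_{2n-1}$-measure-zero set.

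I expect the main obstacle to be the base case rather than the induction: the polynomial system for $n=4$ combining both slate sizes together with the extra $p$-variable is sizeable, so selecting the auxiliary $t,h$ equations and a rational parameter instance that simultaneously avoids $Z(\bm{t})$ and keeps the Gröbner-basis and degree computations tractable in Magma is the delicate part. The induction step is essentially routine linear algebra; the only care needed is confirming that, for each of the two branches, the rank condition on the linear coefficient matrix carves out only a measure-zero set, which follows from Lemma~\ref{closed-0} once a single witnessing parameter makes the governing determinant nonzero.
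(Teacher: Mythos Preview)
Your proposal is correct and follows essentially the same approach as the paper: translate to a polynomial system, verify Assumptions~\ref{assum:key-assumption1} and~\ref{assum:key-assumption2} at the base case $n=4$ via a Gr\"obner-basis and degree computation in Magma, then propagate by the linear-algebra induction. The paper's own proof is in fact abbreviated (it omits the induction step entirely, deferring to the earlier proofs) and its Magma listings happen not to adjoin the auxiliary $t,h$ variables for this particular model, but your more careful inclusion of them is harmless and the overall strategy matches.
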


\begin{proof}[Proof of Proposition~\ref{prop:ex-mnl_p}]
To apply Theorem~\ref{thm:general-case}, we first translate the system~\eqref{ex:mnl23_2mixture_p} into the following (equivalent) equation system by multiplying by $(x_i+x_j)(y_i+y_j)$ or $(x_i+x_j+x_k)(y_i+y_j+y_k)$ to both sides:
\begin{align}\label{eq:mnl23_2mixtures_poly_abp}
\begin{cases}
x_1=y_1=1, \\
px_i(y_i+y_j)+(1-p)y_i(x_i+x_j)-\eta_{i,j}(x_i+x_j)(y_i+y_j) = 0, & \forall i \neq j \in [n], \\
px_i(y_i+y_j+y_k)+(1-p)y_i(x_i+x_j+x_k)-\eta_{i,j,k}(x_i+x_j+x_k)(y_i+y_j+y_k) = 0, &\forall i \neq j \neq k \in [n], \\
t_{i,j}(x_i+x_j) = 1, & \forall i \neq j \in [n],\\
h_{i,j}(y_i+y_j) = 1, & \forall i \neq j \in [n],\\
t_{i,j,k}(x_i+x_j+x_k) = 1, & \forall i \neq j \neq k \in [n],\\
h_{i,j,k}(y_i+y_j+y_k) = 1, & \forall i \neq j \neq k \in [n]. \\
\end{cases}
\end{align}
Similar to the case in Section \ref{apn-sub:examples_btl_proofs_p}, we translate the system~\eqref{eq:mnl23_2mixtures_poly_abp} into a system, denoted by $\mathcal{P}_{MNL23,abp}$, whose coefficients are given by polynomials in $(\bm{a},\bm{b},p_1)$, by multiplying by $(a_i+a_j)(b_i+b_j)$ or $(a_i+a_j+a_k)(b_i+b_j+b_k)$ to both sides. Note that the parameter space $Q_{MNL23}^{2n-1} \subseteq \mathbb{R}_+^{2n-2}$ guarantees that neither $(a_i+a_j)(b_i+b_j)$ nor $(a_i+a_j+a_k)(b_i+b_j+b_k)$ is zero, so  generic identifiability of $\mathcal{P}_{MNL23,abp}$ is equivalent to that of the system~\eqref{eq:mnl23_2mixtures_poly_abp}, hence also equivalent to that of the system~\eqref{ex:mnl23_2mixture_p}.

The arguments are similar as before, by checking Assumptions~\ref{assum:key-assumption1} and~\ref{assum:key-assumption2} in Theorem~\ref{thm:general-case} to conclude that the polynomial system has exactly two solutions in $\mathbb{C}$ (counted with multiplicity), implying that the MNL model with 2-slate and 3-slate is generically identifiable up to reordering. We omit the details and simply state the corresponding chunks of Magma code here for computing the Gr\"{o}bner basis and the number of solutions for some specific values of $(\bm{a'}_{1:4},\bm{b'}_{1:4})=(1,2,3,4;1,5,4,2)$ below.

\begin{lstlisting}[language=Magma,caption=Gr\"{o}bner basis of MNL models involving 2-\&3-slate with unknown mixing probabilities,label={lst:23mix_p},frame=single,basicstyle=\scriptsize]
P<p,x2,x3,x4,y2,y3,y4,a2,a3,a4,b2,b3,b4,p1>:=FreeAlgebra(Rationals(),14,"lex");

I:=ideal<P|
(1+a2)*(1+b2)*(p*1*(1+y2)+(1-p)*1*(1+x2))-
(p1*1*(1+b2)+(1-p1)*1*(1+a2))*(1+x2)*(1+y2),
(1+a3)*(1+b3)*(p*1*(1+y3)+(1-p)*1*(1+x3))-
(p1*1*(1+b3)+(1-p1)*1*(1+a3))*(1+x3)*(1+y3),
(1+a4)*(1+b4)*(p*1*(1+y4)+(1-p)*1*(1+x4))-
(p1*1*(1+b4)+(1-p1)*1*(1+a4))*(1+x4)*(1+y4),
(a2+a3)*(b2+b3)*(p*x2*(y2+y3)+(1-p)*y2*(x2+x3))-
(p1*a2*(b2+b3)+(1-p1)*b2*(a2+a3))*(x2+x3)*(y2+y3),
(a2+a4)*(b2+b4)*(p*x2*(y2+y4)+(1-p)*y2*(x2+x4))-
(p1*a2*(b2+b4)+(1-p1)*b2*(a2+a4))*(x2+x4)*(y2+y4),
(a3+a4)*(b3+b4)*(p*x3*(y3+y4)+(1-p)*y3*(x3+x4))-
(p1*a3*(b3+b4)+(1-p1)*b3*(a3+a4))*(x3+x4)*(y3+y4),
(1+a2+a3)*(1+b2+b3)*(p*1*(1+y2+y3)+(1-p)*1*(1+x2+x3))-
(p1*1*(1+b2+b3)+(1-p1)*1*(1+a2+a3))*(1+x2+x3)*(1+y2+y3),
(1+a2+a4)*(1+b2+b4)*(p*1*(1+y2+y4)+(1-p)*1*(1+x2+x4))-
(p1*1*(1+b2+b4)+(1-p1)*1*(1+a2+a4))*(1+x2+x4)*(1+y2+y4),
(1+a2+a3)*(1+b2+b3)*(p*x2*(1+y2+y3)+(1-p)*y2*(1+x2+x3))-
(p1*a2*(1+b2+b3)+(1-p1)*b2*(1+a2+a3))*(1+x2+x3)*(1+y2+y3),
(1+a2+a4)*(1+b2+b4)*(p*x2*(1+y2+y4)+(1-p)*y2*(1+x2+x4))-
(p1*a2*(1+b2+b4)+(1-p1)*b2*(1+a2+a4))*(1+x2+x4)*(1+y2+y4)>;

-> GroebnerBasis(I);
\end{lstlisting}

\begin{lstlisting}[language=Magma,caption=Dimension and degree computations of MNL models involving 2-\&3-slate with unknown mixing probabilities, label=mnl_model_abp,frame=single,basicstyle=\scriptsize]
a:=[1,2,3,4];
b:=[1,5,4,2];
p1:=7/10;

k:=Rationals();
A<x2,x3,x4,y2,y3,y4,p>:=AffineSpace(k,7);
P:=Scheme(A,
[
(1+a[2])*(1+b[2])*(p*1*(1+y2)+(1-p)*1*(1+x2))-
(p1*1*(1+b[2])+(1-p1)*1*(1+a[2]))*(1+x2)*(1+y2),
(1+a[3])*(1+b[3])*(p*1*(1+y3)+(1-p)*1*(1+x3))-
(p1*1*(1+b[3])+(1-p1)*1*(1+a[3]))*(1+x3)*(1+y3),
(1+a[4])*(1+b[4])*(p*1*(1+y4)+(1-p)*1*(1+x4))-
(p1*1*(1+b[4])+(1-p1)*1*(1+a[4]))*(1+x4)*(1+y4),
(a[2]+a[3])*(b[2]+b[3])*(p*x2*(y2+y3)+(1-p)*y2*(x2+x3))-
(p1*a[2]*(b[2]+b[3])+(1-p1)*b[2]*(a[2]+a[3]))*(x2+x3)*(y2+y3),
(a[2]+a[4])*(b[2]+b[4])*(p*x2*(y2+y4)+(1-p)*y2*(x2+x4))-
(p1*a[2]*(b[2]+b[4])+(1-p1)*b[2]*(a[2]+a[4]))*(x2+x4)*(y2+y4),
(a[3]+a[4])*(b[3]+b[4])*(p*x3*(y3+y4)+(1-p)*y3*(x3+x4))-
(p1*a[3]*(b[3]+b[4])+(1-p1)*b[3]*(a[3]+a[4]))*(x3+x4)*(y3+y4),
(1+a[2]+a[3])*(1+b[2]+b[3])*(p*1*(1+y2+y3)+(1-p)*1*(1+x2+x3))-
(p1*1*(1+b[2]+b[3])+(1-p1)*1*(1+a[2]+a[3]))*(1+x2+x3)*(1+y2+y3),
(1+a[2]+a[4])*(1+b[2]+b[4])*(p*1*(1+y2+y4)+(1-p)*1*(1+x2+x4))-
(p1*1*(1+b[2]+b[4])+(1-p1)*1*(1+a[2]+a[4]))*(1+x2+x4)*(1+y2+y4),
(1+a[2]+a[3])*(1+b[2]+b[3])*(p*x2*(1+y2+y3)+(1-p)*y2*(1+x2+x3))-
(p1*a[2]*(1+b[2]+b[3])+(1-p1)*b[2]*(1+a[2]+a[3]))*(1+x2+x3)*(1+y2+y3),
(1+a[2]+a[4])*(1+b[2]+b[4])*(p*x2*(1+y2+y4)+(1-p)*y2*(1+x2+x4))-
(p1*a[2]*(1+b[2]+b[4])+(1-p1)*b[2]*(1+a[2]+a[4]))*(1+x2+x4)*(1+y2+y4)
]);

-> Dimension(P);
-> @0@

-> Degree(P);
-> @2@
\end{lstlisting}
\end{proof}

\subsubsection{Known mixing probabilities}
We separately consider the two cases $p_1 \neq 0.5$ and $p_1 = 0.5$: For $p_1 \neq 0.5$, we show that the equation system achieves generic identifiability. For $p_1 = 0.5$, we show that the equation system achieves generic identifiability up to reordering. 
A proposition is written rigorously below:
\begin{proposition}\label{prop:ex-mnl23}
Suppose $n \geq 3$. If $(p_1,p_2) = (0.7,0.3)$, the system~\eqref{ex:mnl23_2mixture} has a unique solution in $\mathbb{C}$ (counted with multiplicity) for all $(\bm{a}_{2:n},\bm{b}_{2:n}) \in Q^{2n-2}_{MNL23}$ but a set of $\lambda_{2n-2}$-measure zero, given by $(\bm{x},\bm{y})=(\bm{a},\bm{b})$. 
Thus, we have generic identifiability of the two mixtures of MNL model with 2-\&3-slate.
\end{proposition}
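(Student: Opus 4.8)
The plan is to prove the statement by induction on $n$, following the template of Proposition~\ref{prop:btl} (BTL, known mixing probabilities) and Proposition~\ref{prop:ex-mnl3} (MNL with 3-slate, known mixing probabilities), with base case $n=3$. First I would recast the system~\eqref{ex:mnl23_2mixture} as a polynomial system whose coefficients are polynomials in $(\bm{a},\bm{b})$, obtained by clearing the $(x,y)$-denominators $(x_i+x_j)(y_i+y_j)$ and $(x_i+x_j+x_k)(y_i+y_j+y_k)$ and multiplying through by the corresponding products $(a_i+a_j)(b_i+b_j)$ and $(a_i+a_j+a_k)(b_i+b_j+b_k)$. As in the MNL 3-slate case, this transformation need not be faithful, but it can only increase the number of complex solutions; since $(\bm{x},\bm{y})=(\bm{a},\bm{b})$ is always a solution of~\eqref{ex:mnl23_2mixture} and the domain $Q^{2n-2}_{MNL23}\subseteq\mathbb{R}_+^{2n-2}$ keeps every cleared factor nonzero, it then suffices to show the polynomial system has exactly one solution in $\mathbb{C}$.

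For the base case $n=3$ I would invoke Theorem~\ref{thm:general-case} with $\ell=1$. Assumption~\ref{assum:key-assumption1} holds immediately, as $(\bm{x}_{1:3},\bm{y}_{1:3})=(\bm{a}_{1:3},\bm{b}_{1:3})$ solves the system for every $(\bm{a}_{2:3},\bm{b}_{2:3})\in\mathbb{C}^4$. For Assumption~\ref{assum:key-assumption2} I would compute a Gr\"obner basis of the ideal in Magma, read off $\mathrm{Bad}(\bm{a}_{2:3},\bm{b}_{2:3})$ and hence the bad locus $Z(\bm{a}_{2:3},\bm{b}_{2:3})$, pick a concrete rational instance outside $Z$ (for instance $(\bm{a'}_{1:3},\bm{b'}_{1:3})=(1,2,3;1,5,4)$, the three-item analogue of the point used elsewhere), and check via Magma that the associated scheme has $\texttt{Dimension}=0$ and $\texttt{Degree}=1$. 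This certifies a single solution at that instance, and Theorem~\ref{thm:general-case} upgrades it to generic uniqueness over $Q^4_{MNL23}$.

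For the inductive step $n\geq 4$ I would split the system into the subsystem on items $1,\dots,n-1$, which is exactly the level-$(n-1)$ system and so has the unique solution $(\bm{a}_{1:n-1},\bm{b}_{1:n-1})$ off a $\lambda_{2n-4}$-measure-zero set $V_{n-1}$ by the induction hypothesis, together with the remaining equations involving item $n$. Substituting $(\bm{x}_{1:n-1},\bm{y}_{1:n-1})=(\bm{a}_{1:n-1},\bm{b}_{1:n-1})$ into the pairwise equations $\eta_{i,n}$ yields, exactly as in equation~\eqref{equ:ind4}, the relations $(\eta_{in}-1)a_ib_i+(\eta_{in}-p_1)a_iy_n+(\eta_{in}-p_2)b_ix_n+\eta_{in}x_ny_n=0$. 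Eliminating the bilinear term $x_ny_n$ using the $i=1$ relation and substituting into the $i=2,3$ relations gives a $2\times 2$ linear system in $(x_n,y_n)$; its coefficient matrix is generically nonsingular, so the determinant condition cuts out only a measure-zero set, forcing $(x_n,y_n)=(a_n,b_n)$. Setting $V_n$ to be the union of $V_{n-1}\times\mathbb{C}^2$ with the vanishing locus of this determinant, a set of $\lambda_{2n-2}$-measure zero, yields uniqueness of $(\bm{x},\bm{y})=(\bm{a},\bm{b})$ outside $V_n$, and hence $\lambda_{2n-2}(N^{2n-2}_{MNL23})=0$.

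The main obstacle I anticipate lies in the base case: I must ensure the concrete rational point genuinely avoids the (potentially large) bad locus $Z$, and that combining the non-faithful clearing of denominators with the $\ell=1$ framework truly forces uniqueness of the original rational-function system rather than only of the cleared polynomial system. Once the $n=3$ Magma certificate is in hand, the inductive step is essentially a reuse of the BTL linear-algebra argument, since the 2-slate equations alone already determine $(x_n,y_n)$ and the additional 3-slate equations can only further constrain the solution set.
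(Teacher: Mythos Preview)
Your proposal is correct and follows essentially the same approach as the paper: clear denominators to obtain a polynomial system with coefficients in $(\bm{a},\bm{b})$, verify the base case $n=3$ via Theorem~\ref{thm:general-case} with $\ell=1$ (using the same Magma certificate at $(\bm{a'}_{1:3},\bm{b'}_{1:3})=(1,2,3;1,5,4)$), and then extend to larger $n$ by linear algebra on the pairwise equations. The only structural difference is that the paper does not induct on $n$ but instead pins down $(\bm{x}_{1:3},\bm{y}_{1:3})$ from the $n=3$ case and then solves for each $(x_i,y_i)$, $i\geq 4$, directly from the three pairwise equations $\eta_{1,i},\eta_{2,i},\eta_{3,i}$; your inductive version is equally valid and in fact mirrors the BTL template you cite.
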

\begin{proof}[Proof of Proposition~\ref{prop:ex-mnl23}]
To apply the results from Section \ref{sect:main-result}, we first translate the system~\eqref{ex:mnl23_2mixture} into the following (equivalent) equation system by multiplying by $(x_i+x_j)(y_i+y_j)$ or $(x_i+x_j+x_k)(y_i+y_j+y_k)$ to both sides:
\begin{equation}\label{eq:mnl23_2mixtures_poly}
\begin{cases}
    x_1=y_1 = 1, \\
    p_1x_i(y_i+y_j)+p_2y_i(x_i+x_j)-\eta_{i,j}(x_i+x_j)(y_i+y_j) = 0, & \forall i \neq j \in [n], \\
    p_1x_i(y_i+y_j+y_k)+p_2y_i(x_i+x_j+x_k)-\eta_{i,j,k}(x_i+x_j+x_k)(y_i+y_j+y_k) = 0, & \forall i \neq j \neq k \in [n], \\
    t_{i,j}(x_i+x_j) = 1, & \forall i \neq j \in [n],\\
    h_{i,j}(y_i+y_j) = 1, & \forall i \neq j \in [n],\\
    t_{i,j,k}(x_i+x_j+x_k) = 1, & \forall i \neq j \neq k \in [n],\\
    h_{i,j,k}(y_i+y_j+y_k) = 1, & \forall i \neq j \neq k \in [n].
\end{cases}
\end{equation}
Similar to the case in Section \ref{apn-sub:examples_btl_proofs_p}, we can translate the system~\eqref{eq:mnl23_2mixtures_poly} into the following equation system, with coefficients given by polynomials in $(\bm{a},\bm{b},p_1)$, by multiplying $(a_i+a_j)(b_i+b_j)$ or $(a_i+a_j+a_k)(b_i+b_j+b_k)$ to both sides:
\begin{equation}\label{eq:mnl23_2mixtures_poly'}
\begin{cases}
x_1=y_1 = 1, \\
(a_i+a_j)(b_i+b_j)(p_1x_i(y_i+y_j)+p_2y_i(x_i+x_j)) =\\
(p_1a_i(b_i+b_j)+p_2b_i(a_i+a_j))(x_i+x_j)(y_i+y_j), & \forall i \neq j \in [n], \\
(a_i+a_j+a_k)(b_i+b_j+b_k)(p_1x_i(y_i+y_j+y_k)+p_2y_i(x_i+x_j+x_k)) =\\
(p_1a_i(b_i+b_j+b_k)+p_2b_i(a_i+a_j+a_k))(x_i+x_j+x_k)(y_i+y_j+y_k), & \forall i \neq j \neq k \in [n], \\
t_{i,j}(x_i+x_j) = 1, h_{i,j}(y_i+y_j) = 1, & \forall i \neq j \in [n], \\
t_{i,j,k}(x_i+x_j+x_k) = 1, h_{i,j,k}(y_i+y_j+y_k) = 1, & \forall i \neq j \neq k \in [n],
\end{cases}
\end{equation}
Note that the parameter space $Q_{MNL23}^{2n-2} \subseteq \mathbb{R}_+^{2n-2}$ guarantees that both $(a_i+a_j)(b_i+b_j)$ and $(a_i+a_j+a_k)(b_i+b_j+b_k)$ are non-zero, so generic identifiability of the new polynomial system~\eqref{eq:mnl23_2mixtures_poly'} is equivalent to that of the system~\eqref{eq:mnl23_2mixtures_poly},  hence also the system~\eqref{ex:mnl23_2mixture}.

We now consider the case when $n=3$, and later make use of this result to prove the cases where $n \geq 4$.

\paragraph{Case $n=3$.}
In this case, we consider the following subset of the system~\eqref{eq:mnl23_2mixtures_poly'}:
\begin{equation}\label{eq:MNL23_poly_13}
\begin{cases}
x_1=y_1 = 1, \\
(a_i+a_j)(b_i+b_j)(p_1x_i(y_i+y_j)+p_2y_i(x_i+x_j))-\\
(p_1a_i(b_i+b_j)+p_2b_i(a_i+a_j))(x_i+x_j)(y_i+y_j) = 0, &  \forall (i,j) \in \{(1,2),(1,3),(2,3)\}, \\
(a_i+a_j+a_k)(b_i+b_j+b_k)(p_1x_i(y_i+y_j+y_k)+p_2y_i(x_i+x_j+x_k))= \\
(p_1a_i(b_i+b_j+b_k)+p_2b_i(a_i+a_j+a_k))(x_i+x_j+x_k)(y_i+y_j+y_k), & \forall (i,j,k) \in \{(1,2,3),(2,1,3)\}.
\end{cases}
\end{equation}
We now compute its Gr\"{o}bner basis via Magma.
\begin{lstlisting}[language=Magma,caption=Gr\"{o}bner basis of MNL models involving 2-\&3-slate with known mixing probabilities,label={lst:23mix_p07},frame=single,basicstyle=\scriptsize]
P<x1,x2,x3,y1,y2,y3,a1,a2,a3,b1,b2,b3>:=FreeAlgebra(Rationals(),12,"lex");

I:=ideal<P|x1-1,y1-1,
(a1+a2)*(b1+b2)*(7/10*x1*(y1+y2)+(1-7/10)*y1*(x1+x2))-
(7/10*a1*(b1+b2)+(1-7/10)*b1*(a1+a2))*(x1+x2)*(y1+y2),
(a1+a3)*(b1+b3)*(7/10*x1*(y1+y3)+(1-7/10)*y1*(x1+x3))-
(7/10*a1*(b1+b3)+(1-7/10)*b1*(a1+a3))*(x1+x3)*(y1+y3),
(a2+a3)*(b2+b3)*(7/10*x2*(y2+y3)+(1-7/10)*y2*(x2+x3))-
(7/10*a2*(b2+b3)+(1-7/10)*b2*(a2+a3))*(x2+x3)*(y2+y3),
(a1+a2+a3)*(b1+b2+b3)*(7/10*x1*(y1+y2+y3)+(1-7/10)*y1*(x1+x2+x3))-
(7/10*a1*(b1+b2+b3)+(1-7/10)*b1*(a1+a2+a3))*(x1+x2+x3)*(y1+y2+y3),
(a1+a2+a3)*(b1+b2+b3)*(7/10*x2*(y1+y2+y3)+(1-7/10)*y2*(x1+x2+x3))-
(7/10*a2*(b1+b2+b3)+(1-7/10)*b2*(a1+a2+a3))*(x1+x2+x3)*(y1+y2+y3)
>;

-> GroebnerBasis(I);
\end{lstlisting}
This gives
\begin{equation}\label{MNL_bad'}
\mathrm{Bad}(\bm{a}_{2:3},\bm{b}_{2:3})=
\begin{cases}
10 + 3 a_i + 7 b_i, (4 - 3 a_i) b_i, 3 - a_i (4 + 7 b_i),3 b_i+a_i(7 + 10 b_i), &\forall i \in \{2,3\}\\
7a_i+3b_i,a_i+b_i, &\forall i \in \{2,3\} \\
3 a_ib_j+7 a_jb_i,7+7a_j+4 b_i+7b_j,7 b_i+a_i(7+4 b_j + 7 b_i),&\forall i\neq j \in \{2,3\} \\
(3-4a_i)b_j+3a_j(1+b_j),10a_ib_i+3a_jb_i+7a_ib_j,&\forall i\neq j \in \{2,3\} \\
3a_ib_i-4a_jb_i-7ajb_j,3(1+a_j+b_j)-4a_i, &\forall i\neq j \in \{2,3\} \\
-3(1+a_2)b_3-a_3(7+7b_2+10b_3), \\
7-3(1+a_3)b_2-a_2(7+10b_2+7b_3), \\
\end{cases}
\end{equation}
Based on Theorem~\ref{thm:general-case}, by checking Assumptions~\ref{assum:key-assumption1} and~\ref{assum:key-assumption2}, we can conclude that the system~\eqref{eq:MNL23_poly_13} has a unique solution in $\mathbb{C}$ (counted with multiplicity) for all $(\bm{a}_{2:3},\bm{b}_{2:3}) \in Q_{MNL23}^{4}$ but a set of $\lambda_{4}$-measure zero.

Assumption~\ref{assum:key-assumption1}: This is clear, since $(\bm{x}_{1:3},\bm{y}_{1:3})=\left(\bm{a}_{1:3},\bm{b}_{1:3}\right)$ is a solution for all $(\bm{a}_{2:3},\bm{b}_{2:3}) \in \mathbb{C}^{4}$.

Assumption~\ref{assum:key-assumption2}: We choose $(\bm{a'}_{1:3},\bm{b'}_{1:3})=(1,2,3;1,5,4)$. It is routine to check that $(\bm{a'}_{2:3},\bm{b'}_{2:3}) \notin Z(\bm{a}_{2:3},\bm{b}_{2:3})$ using~\eqref{MNL_bad'}. Furthermore, we can use the Magma code in Listing~\ref{lst:23mix_p07} to check that the system~\eqref{eq:MNL23_poly_13} has exactly one solution in $\mathbb{C}$ (counted with multiplicity) for this $(\bm{a'}_{1:3},\bm{b'}_{1:3})$.  
\begin{lstlisting}[language=Magma,caption=Dimension and degree computations of MNL models involving 2-\&3-slate with known probabilities,label={lst:23mix_p07'},frame=single,basicstyle=\scriptsize]
a:=[1,2,3];
b:=[1,5,4];
p1:=7/10;

k:=Rationals();
A<x1,x2,x3,y1,y2,y3>:=AffineSpace(k,6);
P:=Scheme(A,[x1-1,y1-1,
(a[1]+a[2])*(b[1]+b[2])*(p1*x1*(y1+y2)+(1-p1)*y1*(x1+x2))-
(p1*a[1]*(b[1]+b[2])+(1-p1)*b[1]*(a[1]+a[2]))*(x1+x2)*(y1+y2),
(a[1]+a[3])*(b[1]+b[3])*(p1*x1*(y1+y3)+(1-p1)*y1*(x1+x3))-
(p1*a[1]*(b[1]+b[3])+(1-p1)*b[1]*(a[1]+a[3]))*(x1+x3)*(y1+y3),
(a[2]+a[3])*(b[2]+b[3])*(p1*x2*(y2+y3)+(1-p1)*y2*(x2+x3))-
(p1*a[2]*(b[2]+b[3])+(1-p1)*b[2]*(a[2]+a[3]))*(x2+x3)*(y2+y3),
(a[1]+a[2]+a[3])*(b[1]+b[2]+b[3])*(p1*x1*(y1+y2+y3)+(1-p1)*y1*(x1+x2+x3))-
(p1*a[1]*(b[1]+b[2]+b[3])+(1-p1)*b[1]*(a[1]+a[2]+a[3]))*(x1+x2+x3)*(y1+y2+y3),
(a[1]+a[2]+a[3])*(b[1]+b[2]+b[3])*(p1*x2*(y1+y2+y3)+(1-p1)*y2*(x1+x2+x3))-
(p1*a[2]*(b[1]+b[2]+b[3])+(1-p1)*b[2]*(a[1]+a[2]+a[3]))*(x1+x2+x3)*(y1+y2+y3)
]);

-> Dimension(P);
-> @0@

-> Degree(P);
-> @1@ 
\end{lstlisting}
Therefore, applying Theorem~\ref{thm:general-case} concludes the proof for $n=3$.

\paragraph{Case $n \geq 4$.}
For the case $n \geq 4$, we first consider the subset of equations~\eqref{eq:MNL23_poly_13} of~\eqref{eq:mnl23_2mixtures_poly'}. From the case $n=3$, we know there exists a $\lambda_4$-measure zero set $V_3$ such that for all $(\bm{a}_{2:3}, \bm{b}_{2:3}) \in Q_{MNL23}^4 \setminus V_3$, the system~\eqref{eq:mnl23_2mixtures_poly'} has a unique solution in $\mathbb{C}$ (counted with multiplicity) for $(\bm{x}_{1:3},\bm{y}_{1:3})$, given by $(\bm{x}_{1:3},\bm{y}_{1:3})=(\bm{a}_{1:3},\bm{b}_{1:3})$. So we can restrict the domain to be the set 
\[
N_{3,n} = \{(\bm{a}_{2:n},\bm{b}_{2:n}) \in Q^{2n-2}_{MNL23} : (\bm{a}_{2:3},\bm{b}_{2:3}) \notin V_3\}.
\]
To proceed, we determine $(x_i,y_i)$ for each $i \geq 4$. Consider the following part of the system~\eqref{eq:mnl23_2mixtures_poly'}, which contains $(x_i,y_i)$ as the only undetermined variables:
\begin{equation}\label{eq:btl_2_3_123i}
\begin{cases}
(1-\eta_{1,i})x_1y_1 + (p_2 - \eta_{1,i})y_1x_i + (p_1 - \eta_{1,i})x_1y_i - \eta_{1,i}x_iy_i = 0, \\
(1-\eta_{2,i})x_2y_2 + (p_2 - \eta_{2,i})y_2x_i + (p_1 - \eta_{2,i})x_2y_i - \eta_{2,i}x_iy_i = 0, \\
(1-\eta_{3,i})x_3y_3 + (p_2 - \eta_{3,i})y_3x_i + (p_1 - \eta_{3,i})x_3y_i - \eta_{3,i}x_iy_i = 0. \\
\end{cases}
\end{equation}

We first plug in $(\bm{x}_{1:3},\bm{y}_{1:3})=(\bm{a}_{1:3},\bm{b}_{1:3})$, and then eliminate the $x_iy_i$ terms in the last two equations using the first one. This yields a linear system of equations in $(x_i,y_i)$:
\[
\begin{cases}
\left(\eta_{2,i}-\eta_{1,i}\right)p_2 x_i +  \left(\eta_{2,i}-\eta_{1,i}\right)p_1y_i = \eta_{2,i}(\eta_{1,i}-1) + (1-\eta_{2,i})a_2b_2, \\
\left(\eta_{3,i}(p_2 - \eta_{2,i})b_2-\eta_{2,i}(p_2 - \eta_{3,i})b_3\right)x_i + \left(\eta_{3,i}(p_1 - \eta_{2,i})a_2 - \eta_{2,i}(p_1 - \eta_{3,i})a_3\right)y_i= \\
\quad \eta_{3,i}(\eta_{2,i}-1)a_2b_2-\eta_{2,i}(\eta_{3,i}-1)a_3b_3. 
\end{cases}
\]
As long as the coefficient matrix:
\begin{align*}
A =
\begin{bmatrix}
\left(\eta_{2,i}-\eta_{1,i}\right)p_2 & \left(\eta_{2,i}-\eta_{1,i}\right)p_1 \\
\eta_{3,i}(p_2 - \eta_{2,i})b_2-\eta_{2,i}(p_2 - \eta_{3,i})b_3 & \eta_{3,i}(p_1 - \eta_{2,i})a_2 - \eta_{2,i}(p_1 - \eta_{3,i})a_3
\end{bmatrix}.
\end{align*}
of this linear system has rank two, we have a unique solution for $(x_i,y_i)$ in $\mathbb{C}$ (counted with multiplicity). Note that we can plug in the values for $(\bm{a}_{2:n},\bm{b}_{2:n})=(2,3,0,\ldots,0;5,4,0,\ldots,0)$ to see that $\mathrm{rank}(A) = 2$. Therefore, there exists a measure zero subset $N_i$ of parameters such that $\mathrm{rank}(A)<2$. Since there are finitely many such subsets, we conclude that generic uniqueness holds for the cases where $n \geq 4$.
\end{proof}

\end{document}